\newcommand{\fedssgd}{\texttt{FetchSGD}}
\newcommand{\fedavg}{\texttt{FedAvg}}
\icmltitlerunning{FetchSGD: Communication-Efficient Federated Learning with Sketching}
\begin{document}

\twocolumn[
\icmltitle{FetchSGD: Communication-Efficient Federated Learning with Sketching}

\begin{icmlauthorlist}
\icmlauthor{Daniel Rothchild$^*$}{cal}
\icmlauthor{Ashwinee Panda$^*$}{cal} 
\icmlauthor{Enayat Ullah}{jhu}
\icmlauthor{Nikita Ivkin}{amazon}
\icmlauthor{Ion Stoica}{cal}
\icmlauthor{Vladimir Braverman}{jhu}
\icmlauthor{Joseph Gonzalez}{cal}
\icmlauthor{Raman Arora}{jhu}
\end{icmlauthorlist}

\icmlaffiliation{cal}{University of California, Berkeley, California, USA}
\icmlaffiliation{jhu}{Johns Hopkins University, Baltimore, Maryland}
\icmlaffiliation{amazon}{Amazon}

\icmlcorrespondingauthor{Daniel Rothchild}{drothchild@berkeley.edu}

\icmlkeywords{Machine Learning, ICML, sketching, count sketch, federated learning}

\vskip 0.3in
]

\printAffiliationsAndNotice{\icmlEqualContribution} %

\begin{abstract}
Existing approaches to federated learning suffer from a communication bottleneck as well as convergence issues due to sparse client participation.
In this paper we introduce a novel algorithm, called \fedssgd{}, to overcome these challenges.
\fedssgd{} compresses model updates using a Count Sketch, and then takes advantage of the mergeability of sketches to combine model updates from many workers.
A key insight in the design of \fedssgd{} is that, because the Count Sketch is linear, momentum and error accumulation can both be carried out within the sketch.
This allows the algorithm to move momentum and error accumulation from clients to the central aggregator, overcoming the challenges of sparse client participation while still achieving high compression rates and good convergence.
We prove that \fedssgd{} has favorable convergence guarantees, and we demonstrate its empirical effectiveness by training two residual networks and a transformer model.
\end{abstract}

\vspace*{-20pt}
\section{Introduction}
Federated learning has recently emerged as an important setting for training machine learning models.
In the federated setting, training data is distributed across a large number of edge devices, such as consumer smartphones, personal computers, or smart home devices.
These devices have data that is useful for training a variety of models -- for text prediction, speech modeling, facial recognition, document identification, and other tasks~\citep{edgedata, health, kws, phonedata}. 
However, data privacy, liability, or regulatory concerns may make it difficult to move this data to the cloud for training \citep{gdpr}.
Even without these concerns, training machine learning models in the cloud can be expensive, and an effective way to train the same models on the edge has the potential to eliminate this expense.%

When training machine learning models in the federated setting, participating clients do not send their local data to a central server; instead, a central aggregator coordinates an optimization procedure among the clients.
At each iteration of this procedure, clients compute gradient-based updates to the current model using their local data, and they communicate only these updates to a central aggregator.

A number of challenges arise when training models in the federated setting.
Active areas of research in federated learning include solving systems challenges, such as handling stragglers and unreliable network connections \citep{secureagg, resourceconstrained}, tolerating adversaries \citep{backdoor, adversariallens}, and ensuring privacy of user data \citep{dpfl, homoencryption}.
In this work we address a different challenge, namely that of training high-quality models under the constraints imposed by the federated setting.

There are three main constraints unique to the federated setting that make training high-quality models difficult. 
First, \textbf{communication-efficiency} is a necessity when training on the edge~\citep{edgelearning}, since clients typically connect to the central aggregator over slow connections ($\sim1$Mbps) \citep{mobilewifi}.
Second, \textbf{clients must be stateless,} since it is often the case that no client participates more than once during all of training \citep{survey}.
Third, the \textbf{data collected across clients is typically not independent and identically distributed.}
For example, when training a next-word prediction model on the typing data of smartphone users, clients located in geographically distinct regions generate data from different distributions, but enough commonality exists between the distributions that we may still want to train a single model \citep{gboard, improvinggboard}.

In this paper, we propose a new optimization algorithm for federated learning, called \fedssgd{}, that can train high-quality models under all three of these constraints.
The crux of the algorithm is simple: at each round, clients compute a gradient based on their local data, then compress the gradient using a data structure called a Count Sketch before sending it to the central aggregator.
The aggregator maintains momentum and error accumulation Count Sketches, and the weight update applied at each round is extracted from the error accumulation sketch.
See Figure \ref{fig:overview} for an overview of \fedssgd{}.

\fedssgd{} requires no local state on the clients, and we prove that it is communication efficient, and that it converges in the non-i.i.d. setting for $L$-smooth non-convex functions at rates $\bigO{T^{-1/2}}$ and $\bigO{T^{-1/3}}$ respectively under two alternative assumptions -- the first opaque and the second more intuitive.
Furthermore, even without maintaining any local state, \fedssgd{} can carry out momentum -- a technique that is essential for attaining high accuracy in the non-federated setting -- as if on local gradients before compression \citep{sutskever2013importance}.
Lastly, due to properties of the Count Sketch, \fedssgd{} scales seamlessly to small local datasets, an important regime for federated learning, since user interaction with online services tends to follow a power law distribution, meaning that most users will have relatively little data to contribute \citep{muchnik2013origins}.

We empirically validate our method with two image recognition tasks and one language modeling task.
Using models with between 6 and 125 million parameters, we train on non-i.i.d. datasets that range in size from 50,000 -- 800,000 examples.

\begin{figure*}
    \centering
    \includegraphics[height=5cm]{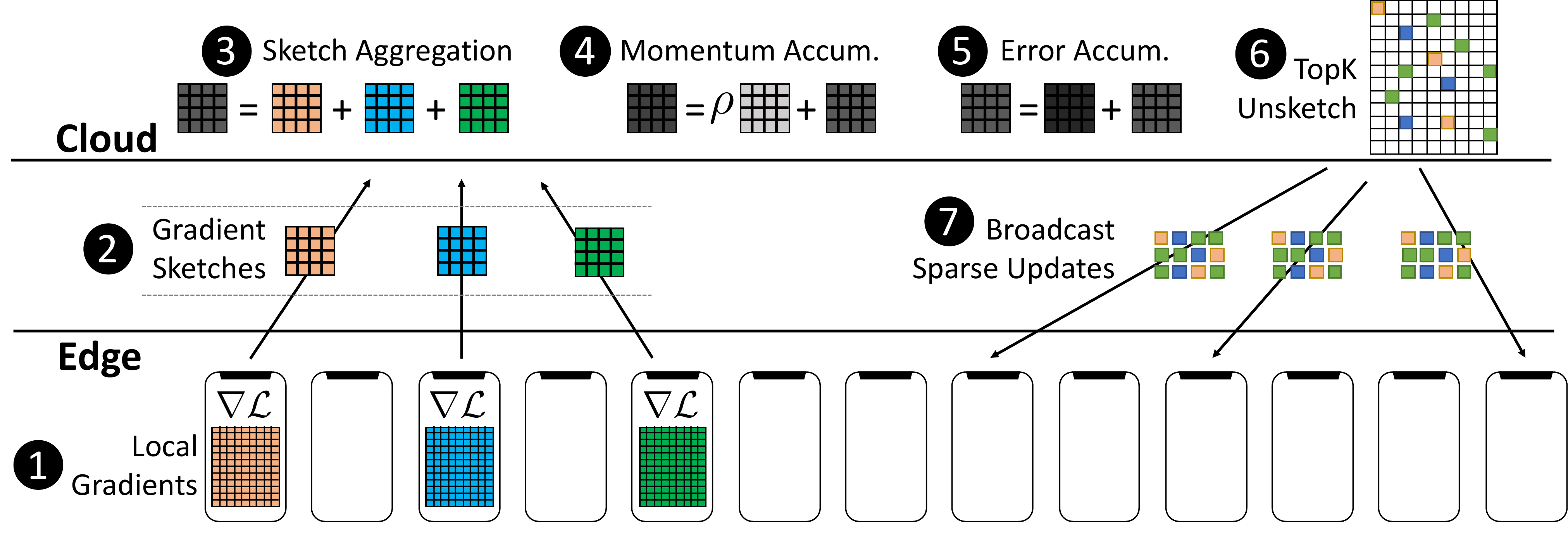}
    \caption{\textbf{Algorithm Overview}.  
    The \fedssgd{} algorithm \textbf{(1)} computes gradients locally, and then send sketches \textbf{(2)} of the gradients to the cloud.  In the cloud,  gradient sketches are aggregated \textbf{(3)}, and then \textbf{(4)} momentum and \textbf{(5)} error accumulation are applied to the sketch.  The approximate top-k values are then \textbf{(6)} extracted and \textbf{(7)} broadcast as sparse updates to devices participating in  next~round.  }
    \vspace{-0.5cm}
    \label{fig:overview}
\end{figure*}
\section{Related Work}
\label{sec:related}

Broadly speaking, there are two optimization strategies that have been proposed to address the constraints of federated learning: Federated Averaging (\fedavg{}) and extensions thereof, and gradient compression methods.
We explore these two strategies in detail in Sections \ref{sec:related_fedavg} and \ref{sec:related_compression}, but as a brief summary, \fedavg{} does not require local state, but it also does not reduce communication from the standpoint of a client that participates once, and it struggles with non-i.i.d. data and small local datasets because it takes many local gradient steps.
Gradient compression methods, on the other hand, can achieve high communication efficiency.
However, it has been shown both theoretically and empirically that these methods must maintain error accumulation vectors on the clients in order to achieve high accuracy. This is ineffective in federated learning, since clients typically participate in optimization only once, so the accumulated error has no chance to be re-introduced \citep{karimireddy2019error}.

\subsection{\fedavg{}}
\label{sec:related_fedavg}
\fedavg{} reduces the total number of bytes transferred during training by carrying out multiple steps of stochastic gradient descent (SGD) locally before sending the aggregate model update back to the aggregator.
This technique, often referred to as local/parallel SGD, has been studied since the early days of distributed model training in the data center \citep{dean2012large}, and is referred to as \fedavg{} when applied to federated learning \citep{fedavg}.
\fedavg{} has been successfully deployed in a number of domains \citep{gboard, li2019privacy}, and is the most commonly used optimization algorithm in the federated setting \citep{improvinggboard}.
In \fedavg{}, every participating client first downloads and trains the global model on their local dataset for a number of epochs using SGD. %
The clients upload the difference between their initial and final model to the parameter server, which averages the local updates weighted according to the magnitude of the corresponding local dataset.

One major advantage of \fedavg{} is that it requires no local state, which is necessary for the common case where clients participate only once in training. 
\fedavg{} is also communication-efficient in that it can reduce the total number of bytes transferred during training while achieving the same overall performance.
However, from an individual client's perspective, there is no communication savings if the client participates in training only once.
Achieving high accuracy on a task often requires using a large model, but clients' network connections may be too slow or unreliable to transmit such a large amount~of~data at once \citep{yang2010mobile}.

Another disadvantage of \fedavg{} is that taking many local steps can lead to degraded convergence on non-i.i.d. data.
Intuitively, taking many local steps of gradient descent on local data that is not representative of the overall data distribution will lead to local over-fitting, which will hinder convergence \citep{scaffold}.
When training a model on non-i.i.d. local datasets, the goal is to minimize the average test error across clients. 
If clients are chosen randomly, SGD naturally has convergence guarantees on non-i.i.d.\ data, since the average test error is an expectation over which clients participate.
However, although \fedavg{} has convergence guarantees for the i.i.d. setting \citep{coopsgd}, these guarantees do not apply directly to the non-i.i.d. setting as they do with SGD.
\citet{noniid} show that \fedavg{}, using $K$ local steps, converges as $\bigO{K/T}$ on non-i.i.d. data for strongly convex smooth functions, with additional assumptions.
In other words, convergence on non-i.i.d. data could slow down as much as proportionally to the number of local steps taken.

Variants of \fedavg{} have been proposed to improve its performance on non-i.i.d. data.
\citet{sahu2018convergence} propose constraining the local gradient update steps in \fedavg{} by penalizing the L2 distance between local models and the current global model.
Under the assumption that every client's loss is minimized wherever the overall loss function is minimized, they recover the convergence rate of SGD.
\citet{scaffold} modify the local updates in \fedavg{} to make them point closer to the consensus gradient direction from all clients.
They achieve good convergence at the cost of making the clients stateful.

\subsection{Gradient Compression}
\label{sec:related_compression}

A limitation of \fedavg{} is that, in each communication round, clients must download an entire model and upload an entire model update.
Because federated clients are typically on slow and unreliable network connections, this requirement makes training large models with \fedavg{} difficult.
Uploading model updates is particularly challenging, since residential Internet connections tend to be asymmetric, with far higher download speeds than upload speeds \citep{goga2012speed}.

An alternative to \fedavg{} that helps address this problem is regular distributed SGD with gradient compression.
It is possible to compress stochastic gradients such that the result is still an unbiased estimate of the true gradient, for example by stochastic quantization \citep{qsgd} or stochastic sparsification \citep{sparsification}.
However, there is a fundamental tradeoff between increasing compression and increasing the variance of the stochastic gradient, which slows convergence.
The requirement that gradients remain unbiased after compression is too stringent, and these methods have had limited empirical success.

Biased gradient compression methods, such as top-$k$ sparsification \citep{dgc} or signSGD \citep{bernstein2018signsgd}, have been more successful in practice.
These methods rely, both in theory and in practice, on the ability to locally accumulate the error introduced by the compression scheme, such that the error can be re-introduced the next time the client participates \citep{karimireddy2019error}.
Unfortunately, carrying out error accumulation requires local client state, which is often infeasible in federated learning.

\subsection{Optimization with Sketching}

This work advances the growing body of research applying sketching techniques to optimization.
\citet{jiang2018sketchml} propose using sketches for gradient compression in data center training.
Their method achieves empirical success when gradients are sparse, but it has no convergence guarantees, and it achieves little compression on dense gradients \citep[\textsection B.3]{jiang2018sketchml}.
The method also does not make use of error accumulation, which more recent work has demonstrated is necessary for biased gradient compression schemes to be successful \citep{karimireddy2019error}.
\citet{ivkin2019communication} also propose using sketches for gradient compression in data center training.
However, their method requires a second round of communication between the clients and the parameter server, after the first round of transmitting compressed gradients completes.
Using a second round is not practical in federated learning, since stragglers would delay completion of the first round, at which point a number of clients that had participated in the first round would no longer be available \citep{secureagg}.
Furthermore, the method in \cite{ivkin2019communication} requires local client state for both momentum and error accumulation, which is not possible in federated learning.
\citet{spring2019compressing} also propose using sketches for distributed optimization.
Their method compresses auxiliary variables such as momentum and per-parameter learning rates, without compressing the gradients themselves.
In contrast, our method compresses the gradients, and it does not require any additional communication at all to carry out momentum.

\citet{strategies} propose using sketched updates to achieve communication efficiency in federated learning.
However, the family of sketches they use differs from the techniques we propose in this paper: they apply a combination of subsampling, quantization and random rotations.

\section{\fedssgd{}}
\subsection{Federated Learning Setup}

Consider a federated learning scenario with $C$ clients.
Let $\cZ$ be the data domain and let $\{\cP_i\}_{i=1}^C$ be $C$ possibly unrelated probability distributions over $\cZ$.
For supervised learning, $\cZ = \cX \times \cY$, where $\cX$ is the feature space and $\cY$ is the label space; for unsupervised learning, $\cZ=\cX$ is the feature space.
The $i^{\text{th}}$ client has $D_i$ samples drawn i.i.d. from the $\cP_i$.
Let $\cW$ be the hypothesis class parametrized by $d$ dimensional vectors.
Let ${\cL:\cW \times \cZ \rightarrow \bbR}$ be a loss function. The goal is to minimize the weighted average $\hat{\mathbb{E}}$ of  client risks:
\vspace*{-5pt}
\begin{align}
    \label{eqn:noniid1}
   f(\w)\! = \hat{\mathbb{E}}f_i(\w)\! = \!
    \frac{1}{\sum_{i=1}^C D_i}\sum_{i=1}^C D_i\!\Eu{\z \sim \cP_i}\cL(\w,\z)
\end{align}

Assuming that all clients have an equal number of data points, this simplifies to the average of client risks:
\vspace*{-2pt}
\begin{align}
\label{eqn:noniid2}
   f(\w) = \hat{\mathbb{E}}f_i(\w) =
    \frac{1}{C}\sum_{i=1}^C \Eu{\z \sim \cP_i}\cL(\w,\z).
\end{align}
\vspace*{-2pt}

For simplicity of presentation, we consider this unweighted average (eqn. \ref{eqn:noniid2}), but our theoretical results directly extend to the the more general setting (eqn. \ref{eqn:noniid1}).

In federated learning, a central aggregator coordinates an iterative optimization procedure to minimize $f$ with respect to the model parameters $\w$.
In every iteration, the aggregator chooses $W$ clients uniformly at random,\footnote{In practice, the clients may not be chosen randomly, since often only devices that are on wifi, charging, and idle are allowed to participate.} and these clients download the current model, determine how to best update the model based on their local data, and upload a model update to the aggregator.
The aggregator then combines these model updates to update the model for the next iteration.
Different federated optimization algorithms use different model updates and different aggregation schemes to combine these updates.

\vspace*{-5pt}
\subsection {Algorithm}
\label{subsec:alg}
\vspace*{-2pt}

At each iteration in \fedssgd{}, the $i^{\text{th}}$ participating client computes a stochastic gradient $\g^t_{i}$ using a batch of (or all of) its local data, then compresses $\g^t_i$ using a data structure called a Count Sketch. %
Each client then sends the sketch $\cS(\g^t_i)$ to the aggregator as its model update.

A Count Sketch is a randomized data structure that can compress a vector by randomly projecting it several times to lower dimensional spaces, such that high-magnitude elements can later be approximately recovered.
We provide more details on the Count Sketch in Appendix~\ref{appendix:countsketch}, but here we treat it simply as a compression operator $\cS(\cdot)$, with the special property that it is linear:
\[\cS(\g_1+\g_2)=\cS(\g_1)+\cS(\g_2).\]
Using linearity, the server can exactly compute the sketch of the true minibatch gradient $\g^t=\sum_i \g^t_i$ given only the $\cS(\g^t_i)$:
\[\sum_i \cS(\g^t_i) = \cS\left(\sum_i \g^t_i\right) =  \cS(\g^t).\]
Another useful property of the Count Sketch is that, for a sketching operator $\cS(\cdot)$, there is a corresponding decompression operator $\cU(\cdot)$ that returns an unbiased estimate of the original vector, such that the high-magnitude elements of the vector are approximated well (see Appendix \ref{appendix:countsketch} for details):
\[\text{Top-k}(\cU(\cS(\g))) \approx \text{Top-k}(\g).\]

Briefly, $\cU(\cdot)$ approximately ``undoes'' the projections computed by $\cS(\cdot)$, and then uses these reconstructions to estimate the original vector. 
See Appendix \ref{appendix:countsketch} for more details.

With the $\cS(\g_i^t)$ in hand, the central aggregator could update the global model with $\text{Top-k}\left(\cU(\sum_i\cS(\g_i^t))\right) \approx \text{Top-k}\left(\g^t\right)$.
However, $\text{Top-k}(\g^t)$ is not an unbiased estimate of $\g^t$, so the normal convergence of SGD does not apply.
Fortunately, \citet{karimireddy2019error} show that biased gradient compression methods can converge if they accumulate the error incurred by the biased gradient compression operator and re-introduce the error later in optimization.
In \fedssgd{}, the bias is introduced by $\text{Top-k}$ rather than by $\cS(\cdot)$, so the aggregator, instead of the clients, can accumulate the error, and it can do so into a zero-initialized sketch $S_e$ instead of into a gradient-like vector:

\vspace*{-8pt}
{\begin{small}
\begin{align}
    \bS^t &= \frac{1}{W}\sum_{i=1}^W\cS(\g^t_{i})  \nonumber \\ %
    \Delta^t &= \text{Top-k}(\cU(\eta \bS^{t} + \bS_e^t)))\nonumber \\
    \bS_e^{t+1} &=  \eta \bS^{t} + \bS_e^{t} - \cS(\Delta^t) \nonumber\\
    \w^{t+1} &= \w^{t} - \Delta^t, \nonumber 
\end{align}
\end{small}}
where $\eta$ is the learning rate and $\Delta^t\in \mathbb{R}^d$ is $k$-sparse. 

In contrast, other biased gradient compression methods introduce bias on the clients when compressing the gradients, so the clients themselves must maintain individual error accumulation vectors.
This becomes a problem in federated learning, where clients may participate only once, giving the error no chance to be reintroduced in a later round.

Viewed another way, because $\cS(\cdot)$ is linear, and because error accumulation consists only of linear operations, carrying out error accumulation on the server within $S_e$ is equivalent to carrying out error accumulation on each client, and uploading sketches of the result to the server.
(Computing the model update from the accumulated error is not linear, but only the server does this, whether the error is accumulated on the clients or on the server.)
Taking this a step further, we note that momentum also consists of only linear operations, and so momentum can be equivalently carried out on the clients or on the server. Extending the above equations with momentum yields
\vspace*{-25pt} %

{\begin{small}
\begin{align}
    \bS^t &= \frac{1}{W}\sum_{i=1}^W\cS(\g^t_{i})  \nonumber \\ %
    \bS_u^{t+1} &= \rho \bS_u^{t} + \bS^t  \nonumber \\
    \Delta &= \text{Top-k}(\cU(\eta \bS_u^{t+1} + \bS_e^t)))\nonumber \\
    \bS_e^{t+1} &=  \eta \bS_u^{t+1} + \bS_e^{t} - \cS(\Delta) \nonumber\\
    \w^{t+1} &= \w^{t} - \Delta. \nonumber 
\end{align}
\end{small}}

\vspace*{-20pt} %

\fedssgd{} is presented in full in Algorithm \ref{alg:fedsketchedsgd}.

\begin{algorithm}
\caption{\fedssgd{}}
\label{alg:fedsketchedsgd}
    \begin{algorithmic}[1]
    {\begin{small}
	\REQUIRE number of model weights to update each round $k$
	\REQUIRE learning rate $\eta$
	\REQUIRE number of timesteps $T$
	\REQUIRE momentum parameter $\rho$, local batch size $\ell$
	\REQUIRE Number of clients selected per round $W$
	\REQUIRE Sketching and unsketching functions $\mathcal{S}$, $\mathcal{U}$
    \STATE Initialize $\bS_u^0$ and $\bS_e^0$ to zero sketches
    \STATE Initialize $\w^0$ using the same random seed on the clients and aggregator 
	\FOR{$t = 1,2,\cdots T$}
	    \STATE Randomly select $W$ clients $c_1,\ldots c_W$
	    \LOOP[In parallel on clients $\bc{c_i}_{i=1}^W$]
	        \STATE Download (possibly sparse) new model weights $\w^t-\w^0$ %
    		\STATE Compute stochastic gradient $\g_i^t$ on batch $B_i$ of size $\ell$:\\ 
    		$\g^t_i = \frac{1}{\ell}\sum_{j=1}^l\nabla_\w \mathcal{L}(\w^t, \z_j)$
    		\STATE Sketch $\g_i^t$: $\bS_i^t = \mathcal{S}(\g_i^t)$ and send it to the Aggregator 
    	\ENDLOOP
		\STATE Aggregate sketches $\bS^t = \frac{1}{W}\sum_{i=1}^W \bS_i^t$ 
		\STATE Momentum: $\bS_u^t = \rho \bS_u^{t-1} + \bS^t$ %
 		\STATE Error feedback: $\bS_e^t = \eta\bS_u^t + \bS_e^t$ 
 		\STATE Unsketch: $\Delta^t = \text{Top-k}(\mathcal{U}(\bS_e^t))$
		\STATE Error accumulation: $\bS_e^{t+1} = \bS_e^t -S(\Delta^t)$

	    \STATE Update $ \w^{t+1} = \w^{t} - \Delta^t$ 
    \ENDFOR 
	\ENSURE  $\bc{\w^t}_{t=1}^T$
	\end{small}}
	\end{algorithmic}
\end{algorithm}

\vspace*{-10pt} %
\section{Theory}
\label{sec:theory}

This section presents convergence guarantees for \fedssgd{}.
First, Section \ref{sec:contraction} gives the convergence of \fedssgd{} when making a strong and opaque assumption about the sequence of gradients.
Section \ref{sec:sliding} instead makes a more interpretable assumption about the gradients, and arrives at a weaker convergence guarantee.
\subsection{Scenario 1: Contraction Holds}
\label{sec:contraction}
To show that compressed SGD converges when using some biased gradient compression operator $\cC(\cdot)$, existing methods
\citep{karimireddy2019error,zheng2019communication,ivkin2019communication} appeal to
\citet{stich2018sparsified}, %
who show that compressed SGD converges when $\cC$ is a $\tau$-contraction:
\[\norm{\cC(\x)-\x}\leq(1-\tau)\norm{\x}\]
\citet{ivkin2019communication} show that it is possible to satisfy this contraction property using Count Sketches to compress gradients.
However, their compression method includes a second round of communication: if there are no high-magnitude elements in $\e^t$, as computed from $\cS(\e^t)$, the server can query clients for random entries of $\e^t$.
On the other hand, \fedssgd{} never computes the $\e^t_i$, or $\e^t$, so this second round of communication is not possible, and the analysis of \citet{ivkin2019communication} does not apply.
In this section, we assume that the updates have heavy hitters, which ensures that the contraction property holds along the optimization~path.

\begin{assumption}[Scenario 1]
\label{ass:contraction}
Let $\bc{\w_t}_{t=1}^T$ be the sequence of models generated by \fedssgd{}. Fixing this model sequence, let $\{\u^t\}_{t=1}^T$ and $\{\e^t\}_{t=1}^T$ be the momentum and error accumulation vectors generated using this model sequence, had we not used sketching for gradient compression (i.e. if $\cS$ and $\cU$ are identity maps). There exists a constant $0<\tau< 1$ such that for any $t \in [T]$, the quantity $q^t:=\eta(\rho \u^{t-1}+ \g^{t-1})+\e^{t-1}$ has at least one coordinate $i$ s.t. $(q^t_i)^2 \ge \tau \norm{q^t_i}^2$.

\end{assumption}

\begin{theorem}[Scenario 1]
\label{thm:convergence_assumption}
Let $f$ be an $L$-smooth \footnote{A differentiable function $f$ is $L$-smooth if $\norm{\nabla f(\x)-\nabla f(\y)} \leq L\norm{\x-\y} \ \forall \ \x,\y \in \text{dom}(f)$.} non-convex function and let the norm of stochastic gradients of $f$ be upper bounded by $G$.
Under Assumption \ref{ass:contraction}, \fedssgd{}, with step size $\eta=\frac{1-\rho}{2L\sqrt{T}}$, in $T$ iterations, returns $\{\w^t\}_{t=1}^T$, such that, with probability at least $1-\delta$ over the sketching randomness:
{%
\begin{enumerate}
    \item 
    \begin{small} ${\underset{t=1 \cdots T}{\min}\E{\norm{ \nabla f( \w^t)}^2}~\leq~\frac{4L(f(\w^0)-f^*)~+~G^2)}{\sqrt{T}} + \frac{2(1+\tau)^2G^2}{(1-\tau)\tau^2 T}}.$
    \end{small}
    \item The sketch uploaded from each participating client to the parameter server is $\bigO{\log{dT/\delta}/\tau}$  bytes per round.
    \end{enumerate}
}
\vspace{-0.3cm}
\end{theorem}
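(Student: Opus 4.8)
The plan is to recognize \fedssgd{} as error-feedback SGD with momentum, run with the randomized compression operator $\cC := \text{Top-k}\circ\cU\circ\cS$, and then to invoke the convergence theory for contractive compressors \citep{stich2018sparsified,karimireddy2019error}. The first step is an exact bookkeeping identity: since $\cS$ is linear and the sketches are zero-initialized, a one-line induction gives $\bS_u^t=\cS(\u^t)$ and $\bS_e^t=\cS(\e^t)$, where $\u^t=\rho\u^{t-1}+\g^t$ and $\e^t=\eta\u^t+\e^{t-1}-\Delta^t$ are the (un-sketched) momentum and error vectors, $\Delta^t=\cC(\eta\u^t+\e^{t-1})=\cC(q^t)$, and $\bS^t=\cS(\g^t)$ is the sketch of the aggregated gradient. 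Thus the iterates $\w^{t+1}=\w^t-\Delta^t$ are precisely those of error-feedback momentum SGD, and the quantities $\u^t,\e^t,q^t$ are exactly the vectors appearing in Assumption~\ref{ass:contraction}. This reduces the problem to two independent pieces: establishing that $\cC$ contracts along the path, and plugging a $\tau$-contraction into the standard analysis.

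The crux, and the step I expect to be hardest, is turning the heavy-hitter hypothesis of Assumption~\ref{ass:contraction} into a contraction bound on $\cC$. Here I would use the Count Sketch recovery guarantee from Appendix~\ref{appendix:countsketch}: sizing the sketch as $\bigO{\tau^{-1}\log(d/\delta')}$ makes the estimate $\cU(\cS(q))$ accurate to within $\bigO{\sqrt{\tau}}\norm{q}$ in every coordinate simultaneously, with probability $1-\delta'$. Given a coordinate $i$ with $(q^t_i)^2\ge\tau\norm{q^t}^2$, the $\text{Top-k}$ step (even $k=1$) then selects coordinates capturing a $\Theta(\tau)$ fraction of the mass, so $\norm{q^t-\cC(q^t)}^2\le(1-\Theta(\tau))\norm{q^t}^2$; that is, $\cC$ is a $\Theta(\tau)$-contraction at $q^t$. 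Setting $\delta'=\delta/T$ and taking a union bound over the $T$ rounds makes this hold simultaneously at every step with probability $1-\delta$, and the chosen sketch size $\bigO{\tau^{-1}\log(dT/\delta)}$ is exactly claim (2), treating each counter as an $\bigO{1}$-byte number. The delicate point is adaptivity: the path $\{\w^t\}$, and hence each $q^t$, depends on the very sketch randomness used to recover it, so the fixed-vector Count Sketch guarantee does not apply verbatim. I would address this by conditioning on the model sequence as Assumption~\ref{ass:contraction} prescribes, treating $\{q^t\}$ as fixed so that the remaining randomness in each round's recovery drives the $1-\delta$ event, together with a round-by-round union bound; reconciling the assumption's exact-$\text{Top-k}$ vectors with the algorithm's sketched $\cC$ under the same high-probability event is the part that needs the most care.

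Given the $\tau$-contraction, claim (1) follows the now-standard error-feedback argument. I would introduce the error-corrected iterate $\tilde\w^t=\w^t-\e^t$ (adjusted by an $\frac{\eta\rho}{1-\rho}\u^t$ term to absorb momentum), apply $L$-smoothness to $f(\tilde\w^{t+1})$, and separately control the accumulated error: the contraction gives $\norm{\e^{t+1}}\le(1-\tau)(\eta\norm{\u^{t+1}}+\norm{\e^t})$, which with the gradient bound $\norm{\g}\le G$ (hence $\norm{\u^t}\le G/(1-\rho)$) yields the geometric-series bound $\norm{\e^t}=\bigO{\frac{(1-\tau)\eta G}{\tau(1-\rho)}}$. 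Summing the descent inequality over $t$, telescoping $f(\tilde\w^0)-f^*$, and substituting $\eta=\frac{1-\rho}{2L\sqrt{T}}$ produces the two advertised terms: the $L$-smoothness and gradient-bound contributions give the $\frac{4L(f(\w^0)-f^*)+G^2}{\sqrt{T}}$ leading term, while the squared error norm $\norm{\e^t}^2$ contributes the lower-order $\frac{2(1+\tau)^2G^2}{(1-\tau)\tau^2 T}$ term. These final manipulations are routine; the real work is the contraction step and its adaptivity issue described above.
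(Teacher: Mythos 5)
Your proposal is correct and follows essentially the same route as the paper's proof: reduce to error-feedback momentum SGD via linearity of the sketch, convert the heavy-hitter assumption into a $\Theta(\tau)$-contraction of $\text{Top-k}(\cU(\cS(\cdot)))$ with a union bound over $T$ rounds (giving the $\bigO{\tau^{-1}\log(dT/\delta)}$ sketch size), and run the virtual-sequence analysis $\tilde\w^t = \w^t - \e^t - \frac{\eta\rho}{1-\rho}\u^{t-1}$ with a geometric-series bound on the accumulated error, followed by the step-size substitution. The only cosmetic difference is that the paper carries out the error bound on the sketched vectors $\cS(\e^t)$, $\cS(\u^{t-1})$ and transfers back through the $(1\pm\tau)$ norm-preservation property of the sketch, whereas you bound $\norm{\e^t}$ and $\norm{\u^{t-1}}$ directly; the adaptivity concern you flag is handled in the paper exactly as you propose, by fixing the model sequence as in Assumption \ref{ass:contraction} and union-bounding over iterations.
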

The expectation in part 1 of the theorem is over the randomness of sampling minibatches.
For large $T$, the first term dominates, so the convergence rate in Theorem \ref{thm:convergence_assumption} matches that of uncompressed SGD.

Intuitively, Assumption \ref{ass:contraction} states that, at each time step, the descent direction -- \textit{i.e.}, the scaled negative gradient, including momentum -- and the error accumulation vector must point in sufficiently the same direction.
This assumption is rather opaque, since it involves all of the gradient, momentum, and error accumulation vectors, and it is not immediately obvious that we should expect it to hold.
To remedy this, the next section analyzes \fedssgd{} under a simpler assumption that involves only the gradients.
Note that this is still an assumption on the algorithmic path, but it presents a clearer understanding. 

\vspace*{-4pt}
\subsection{Scenario 2: Sliding Window Heavy Hitters}
\label{sec:sliding}

Gradients taken along the optimization path have been observed to contain heavy coordinates \citep{shi2019understanding,li2019privacy}.
However, it would be overly optimistic to assume that \emph{all} gradients contain heavy coordinates, since this might not be the case in some flat regions of parameter space.
Instead, we introduce a much milder assumption: namely that there exist heavy coordinates in a sliding sum of gradient vectors:

\begin{definition}\label{defn:sliding_heavy}[$(I,\tau)$-sliding heavy\footnote{Technically, this definition is also parametrized by $\delta$ and $\beta$. However, in the interest of brevity, we use the simpler term ``$(I, \tau)$-sliding heavy'' throughout the manuscript. Note that $\delta$ in Theorem \ref{thm:main} refers to the same $\delta$ as in Definition \ref{defn:sliding_heavy}.}
]\; \\ \;
A stochastic process $\bc{\g^t}_{t \in \bbN}$ is $(I,\tau)$-sliding heavy if with probability at least $1-\delta$, at every iteration $t$, the gradient vector $\g^t$ can be decomposed as ${\g^t = \g^t_N + \g^t_S}$, 
where $\g^t_S$ is ``signal'' and $\g^t_N$ is ``noise'' with the following properties: 
\vspace*{-10pt} 
\begin{CompactEnumerate}
  \item \textbf{[Signal]} For every non-zero coordinate $j$ of vector~$\g^t_S$, 
  $\exists t_1,t_2$ with ${t_1 \le t \le t_2},\;t_2 - t_1 \le I\;\!\!$ s.t.$|\sum_{t_1}^{t_2}{\g^{t}_j}|\!\!> \!\!\tau\|\sum_{t_1}^{t_2}{\g^{t}}\|$.
  \item \textbf{[Noise]} $\g^{t}_N$ is mean zero, symmetric and when normalized by its norm, its second moment bounded as $\E{{\frac{\norm{\g^{t}_N}^2}{\norm{\g^t}^2}}}\leq \beta $.
\end{CompactEnumerate}
\end{definition}

\vspace*{-5pt}
Intuitively, this definition states that, if we sum up to $I$ consecutive gradients, every coordinate in the result will either be an $\tau$-heavy hitter, or will be drawn from some mean-zero symmetric noise.
When $I=1$, part 1 of the definition reduces to the assumption that gradients always contain heavy coordinates.
Our assumption for general, constant $I$ is significantly weaker, as it requires the gradients to have heavy coordinates in a sequence of $I$ iterations rather than in every iteration.
The existence of heavy coordinates spread across consecutive updates helps to explains the success of error feedback techniques, which extract signal from a sequence of gradients that may be indistinguishable from noise in any one iteration.
Note that both the signal and the noise scale with the norm of the gradient, so both adjust accordingly as gradients become smaller later in optimization.

Under this definition, we can use Count Sketches to capture the signal, since Count Sketches can approximate heavy hitters. 
Because the signal is spread over sliding windows of size $I$, we need a sliding window error accumulation scheme to ensure that we capture whatever signal is present.
Vanilla error accumulation is not sufficient to show convergence, since vanilla error accumulation sums up \textit{all} prior gradients, so signal that is present only in a sum of $I$ consecutive gradients (but not in $I+1$, or $I+2$, etc.) will not be captured with vanilla error accumulation.
Instead, we can use a sliding window error accumulation scheme, which can capture any signal that is spread over a sequence of at most $I$ gradients.
One simple way to accomplish this is to maintain $I$ error accumulation Count Sketches, as shown in Figure \ref{swpic} for $I=4$.
Each sketch accumulates new gradients every iteration, and beginning at offset iterations, each sketch is zeroed out every $I$ iterations before continuing to accumulate gradients (this happens after line 15 of Algorithm \ref{alg:fedsketchedsgd}).
Under this scheme, at every iteration there is a sketch available that contains the sketched sum of the prior $I'$ gradients, for all $I'\leq I$.
We prove convergence in Theorem \ref{thm:main} when using this sort of sliding window error accumulation scheme.

In practice, it is too expensive to maintain $I$ error accumulation sketches.
Fortunately, this ``sliding window'' problem is well studied 
~\citep{datar2002maintaining,braverman2007smooth,braverman2014catch,braverman2015zero,braverman2018nearly,braverman2018near}, 
and it is possible to identify heavy hitters 
with only $\log I$ error accumulation sketches.
Additional details on sliding window Count Sketch are in  Appendix~\ref{appendix:sliding_window}.
Although we use a sliding window error accumulation scheme to prove convergence, in all experiments we use a single error accumulation sketch, since we find that doing so still leads to good convergence.

\begin{figure}
\centering
\includegraphics[width=170px]{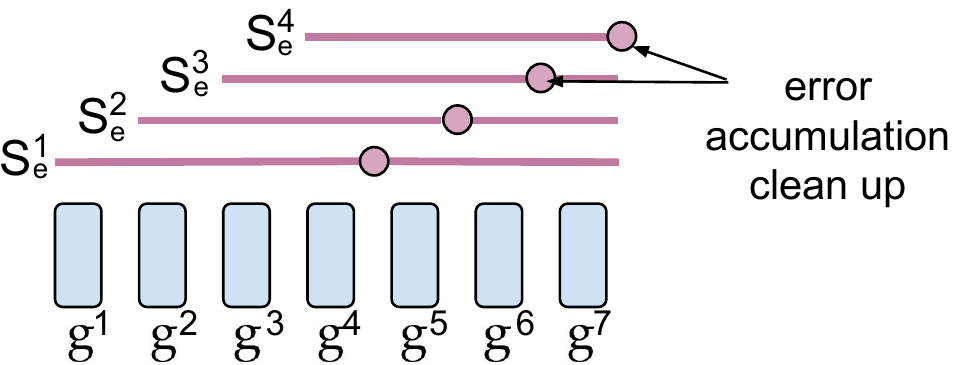}
\caption{Sliding window error accumulation}
\label{swpic}
\end{figure}

\begin{assumption}[Scenario 2]
\label{ass:sliding_heavy}
The sequence of gradients encountered during optimization form an $(I,\tau)$-sliding heavy stochastic process.
\end{assumption}
\begin{theorem}[Scenario 2]
\label{thm:main}
Let $f$ be an $L$-smooth non-convex function and let $\g_i$ denote stochastic gradients of $f_i$ such that
$\norm{\g_i}^2 \leq G^2$.
Under Assumption \ref{ass:sliding_heavy}, \fedssgd{}, using a sketch size $\Theta\left(\frac{\log{dT/\delta}}{\tau^2}\right)$, with step size $\eta=\frac{1}{G\sqrt{L}T^{2/3}}$ and $\rho=0$ (no momentum), in $T$ iterations, with probability at least $1-2\delta$, returns $\{\w^t\}_{t=1}^T$ such that
\vspace*{-5pt}
\begin{CompactEnumerate}
\item {\small $ \underset{t=1\cdots T}{\mathop{\min}}\E{\norm{\nabla f(\w^t)}^2}\!\leq \!
 \frac{ G\sqrt{L}(f(\w^0) -f^*) + 2(2-\tau)}{T^{1/3}} \!\!+\!\frac{G \sqrt{L}}{T^{2/3}}\!+ \!\frac{2I^2}{T^{4/3}}$}
\item The sketch uploaded from each participating client to the parameter server is $\Theta\left(\frac{\log{dT/\delta}}{\tau^2}\right)$ bytes per round.
\end{CompactEnumerate}

\end{theorem}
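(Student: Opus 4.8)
The plan is to run the standard error-feedback argument through a virtual ``ideal-SGD'' sequence, and to substitute a quantitative bound on the accumulated error (coming from the sliding-window heavy-hitter structure) in place of the $\tau$-contraction used in Scenario~1. Since the theorem fixes $\rho=0$, I would define $\tilde\w^t = \w^t - \e^t$, where $\e^t = \eta\sum_{s<t}\g^s - \sum_{s<t}\Delta^s$ is the running discrepancy between the ideal descent step $-\eta\g^s$ and the sparse step $-\Delta^s$ actually applied, with $\g^s=\frac1W\sum_i \g_i^s$. This $\e^t$ is exactly the unsketched content of the server's error-accumulation sketch, and a one-line telescoping check gives $\tilde\w^t = \w^0-\eta\sum_{s<t}\g^s$, i.e. $\tilde\w^{t+1}=\tilde\w^t-\eta\g^t$. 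Thus the virtual iterate obeys \emph{exact} minibatch SGD and all compression error is quarantined inside $\e^t$. Crucially, this reduction is oblivious to how $\Delta^t$ is produced (one sketch or a sliding window of them), since $\w^{t+1}=\w^t-\Delta^t$ always; the windowing will matter only when bounding $\norm{\e^t}$.

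Next I would apply the $L$-smoothness descent inequality to $f(\tilde\w^t)$, take conditional expectation over the random client/minibatch draw (using $\E{\g^t\mid\w^t}=\nabla f(\w^t)$, which holds because the $W$ clients are chosen uniformly at random), and replace $\nabla f(\tilde\w^t)$ by $\nabla f(\w^t)$ at the cost of an $L\norm{\e^t}$ term via smoothness. A Young's-inequality split and telescoping from $t=1$ to $T$ then yield, with absolute constants $c_1,c_2,c_3$,
\[
\frac1T\sum_{t=1}^T \E{\norm{\nabla f(\w^t)}^2} \le \frac{c_1(f(\w^0)-f^*)}{\eta T} + c_2 L\eta G^2 + \frac{c_3 L}{T}\sum_{t=1}^T \E{\norm{\e^t}^2}.
\]
Substituting $\eta=(G\sqrt L\,T^{2/3})^{-1}$ turns the first term into the $T^{-1/3}$ initial-suboptimality term and the second into the $G\sqrt L\,T^{-2/3}$ term, so everything reduces to controlling $\sum_t \E{\norm{\e^t}^2}$.

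The crux is therefore the error bound, where Assumption~\ref{ass:sliding_heavy} and the Count Sketch enter. I would first show that a sketch of size $\Theta(\log(dT/\delta)/\tau^2)$ recovers every $\tau$-heavy coordinate of each maintained window sketch: the $\tau^{-2}$ width gives heavy-hitter recovery, and the $\log(dT/\delta)$ rows supply, through a union bound over the $d$ coordinates and $T$ rounds, overall success probability $1-\delta$; combined with the $1-\delta$ event that the process is genuinely $(I,\tau)$-sliding heavy, this gives the $1-2\delta$ in the statement and settles part~2. Conditioned on recovery, I would split $\e^t=\e^t_S+\e^t_N$ along the gradients' signal/noise decomposition. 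The sliding-window accumulation guarantees that any signal coordinate, being $\tau$-heavy in some window of length at most $I$, is extracted by $\text{Top-k}$ within $I$ iterations; hence the unextracted signal is at most one incomplete window of scaled gradients, giving $\norm{\e^t_S}^2=\bigO{\eta^2 I^2 G^2}$ — the source of the $I^2 T^{-4/3}$ term. The residual noise $\e^t_N$ is mean-zero, symmetric, and has second moment controlled by $\beta$, so its accumulation over a window stays bounded and supplies the remaining $(2-\tau)$-type constant.

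The main obstacle I anticipate is precisely this uniform-in-$t$ bound on $\E{\norm{\e^t}^2}$. The virtual sequence is defined through the \emph{ideal} accumulation of true gradients minus the \emph{actual} sketched extractions $\Delta^t$, so one must simultaneously (i) track the gap between what the $I$ windowed sketches recover via $\cU$ and the ideal error vector $\e^t$, (ii) ensure the mean-zero symmetric noise, which is re-added every round, does not accumulate adversarially across window boundaries, and (iii) convert the per-window heavy-hitter guarantee into a clean bound of the right order. The smoothness and telescoping steps, and part~2, are routine once the sketch parameters are fixed; it is the accounting that fuses windowed heavy-hitter recovery with stochastic noise control that is delicate, and it is exactly this looser windowed control — rather than an instantaneous contraction — that degrades the rate from the $T^{-1/2}$ of Scenario~1 to the $T^{-1/3}$ obtained here.
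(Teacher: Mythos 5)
Your high-level skeleton matches the paper's proof: the virtual sequence $\tilde\w^t = \w^t - \e^t$ that evolves by exact SGD, the smoothness/Young/telescoping reduction to controlling $\sum_t \E{\norm{\e^t}^2}$, the union bound over $T$ rounds giving the $1-2\delta$ probability and part 2, and the $\bigO{\eta^2 I^2 G^2}$ bound on the signal not yet recovered from the last incomplete window (the source of the $I^2/T^{4/3}$ term). However, there is a genuine gap in your treatment of the noise, and it is precisely the step that produces the theorem's dominant term. You claim that the residual noise's ``accumulation over a window stays bounded.'' This is false: zeroing the sliding-window sketches discards noise from the \emph{sketches}, but nothing ever removes it from the discrepancy $\e^t = \eta\sum_{s<t}\g^s - \sum_{s<t}\Delta^s$. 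The noise component of every past gradient was part of the ideal update $\eta\g^s$ but is never extracted into any $\Delta$, so it persists in $\e^t$ for all of training, not just for $I$ rounds. The paper controls this with a martingale argument that your proposal is missing: conditioned on successful heavy-hitter recovery, the per-round residual $\z^i$ (Count Sketch estimation error plus $\g^i_N$) has the form $\norm{\g^i}\xi^i$ with the $\xi^i$ mean-zero, symmetric, and independent of the past, so $\bc{\z^i}_{i}$ is a martingale difference sequence and $\E{\norm{\sum_{i=1}^{t}\z^i}^2} = \sum_{i=1}^{t}\E{\norm{\z^i}^2} \leq 2(1-\tau+\beta)G^2 t$ --- growth that is linear rather than quadratic in $t$, but unbounded growth nonetheless.

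This all-time, linearly growing accumulation is exactly what generates the $2(2-\tau)/T^{1/3}$ term (via $\beta \leq 1$) and what forces the small step size: plugging $\E{\norm{\e^t}^2} \leq 2(1-\tau+\beta)\eta^2 G^2 t + 2\eta^2 I^2 G^2$ into the averaged descent inequality produces a term of order $L\eta^2 G^2(1-\tau+\beta)T$, and balancing it against $(f(\w^0)-f^*)/(\eta T)$ is what dictates $\eta = 1/(G\sqrt{L}T^{2/3})$ and the $T^{-1/3}$ rate. Under your accounting --- noise confined to one window --- the noise would enter only at order $I^2/T^{4/3}$, the same order as the signal delay; you would then have no source for the $2(2-\tau)/T^{1/3}$ term in the stated bound, and nothing would stop you from taking $\eta \sim T^{-1/2}$ and recovering the Scenario 1 rate, contradicting the theorem you are trying to prove. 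Your closing diagnosis is likewise inverted: the degradation from $T^{-1/2}$ to $T^{-1/3}$ is not caused by the ``looser windowed control'' of the signal (a lower-order effect) but by the unremovable noise whose cancellation is only of martingale type.
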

As in Theorem \ref{thm:convergence_assumption}, the expectation in part 1 of the theorem is over the randomness of sampling minibatches.

\textbf{Remarks:}
\begin{CompactEnumerate}
\vspace*{-5pt}
    \item These guarantees are for the non-i.i.d.\ setting -- i.e. $f$ is the average risk with respect to potentially unrelated distributions (see eqn. \ref{eqn:noniid2}).
    \item The convergence rates bound the objective gradient norm rather than the objective itself.
    \item The convergence rate in Theorem \ref{thm:convergence_assumption} matches that of uncompressed SGD, while the rate in Theorem \ref{thm:main} is worse.
    \item The proof uses the virtual sequence idea of \citet{stich2018sparsified}, and can be generalized to other class of functions like smooth, (strongly) convex etc. by careful averaging (proof in Appendix \ref{appendix:proof}).
\end{CompactEnumerate}

\vspace*{-13pt}
\section{Evaluation}
\vspace*{-2pt}
We implement and compare \fedssgd{}, gradient sparsification (local top-$k$), and \fedavg{} using PyTorch \citep{paszke2017automatic}.\footnote{Code available at \url{https://github.com/kiddyboots216/CommEfficient}. Git commit at the time of camera-ready: 833ca44.}
In contrast to our theoretical assumptions, we use neural networks with ReLU activations, whose loss surfaces are not $L$-smooth.
In addition, although Theorem \ref{thm:main} uses a sliding window Count Sketch for error accumulation, in practice we use a vanilla Count Sketch.
Lastly, we use non-zero momentum, which Theorem \ref{thm:convergence_assumption} allows but Theorem \ref{thm:main} does not.
We also make two changes to Algorithm \ref{alg:fedsketchedsgd}.
For all methods, we employ momentum factor masking \citep{dgc}.
And on line 14 of Algorithm \ref{alg:fedsketchedsgd}, we zero out the nonzero coordinates of $\cS(\Delta^t)$ in $S_e^t$ instead of subtracting $\cS(\Delta^t)$; empirically, doing so stabilizes the optimization.

We focus our experiments on the regime of small local datasets and non-i.i.d. data, since we view this as both an important and relatively unsolved regime in federated learning.
Gradient sparsification methods, which sum together the local top-$k$ gradient elements from each worker, do a worse job approximating the true top-$k$ of the global gradient as local datasets get smaller and more unlike each other.
And taking many steps on each client's local data, which is how \fedavg{} achieves communication efficiency, is unproductive since it leads to immediate local overfitting.
However, real-world users tend to generate data with sizes that follow a power law distribution \citep{goyal2017accurate}, so most users will have relatively small local datasets.
Real data in the federated setting is also typically non-i.i.d.

\fedssgd{} has a key advantage over prior methods in this regime because our compression operator is linear.
Small local datasets pose no difficulties, since executing a step using only a single client with $N$ data points is equivalent to executing a step using $N$ clients, each of which has only a single data point.
By the same argument, issues arising from non-i.i.d. data are partially mitigated by random client selection, since combining the data of participating clients leads to a more representative sample of the full data distribution.

\begin{figure*}[t]
\centering
\begin{subfigure}[b]{0.49\textwidth}
    \includegraphics[width=0.90\textwidth]{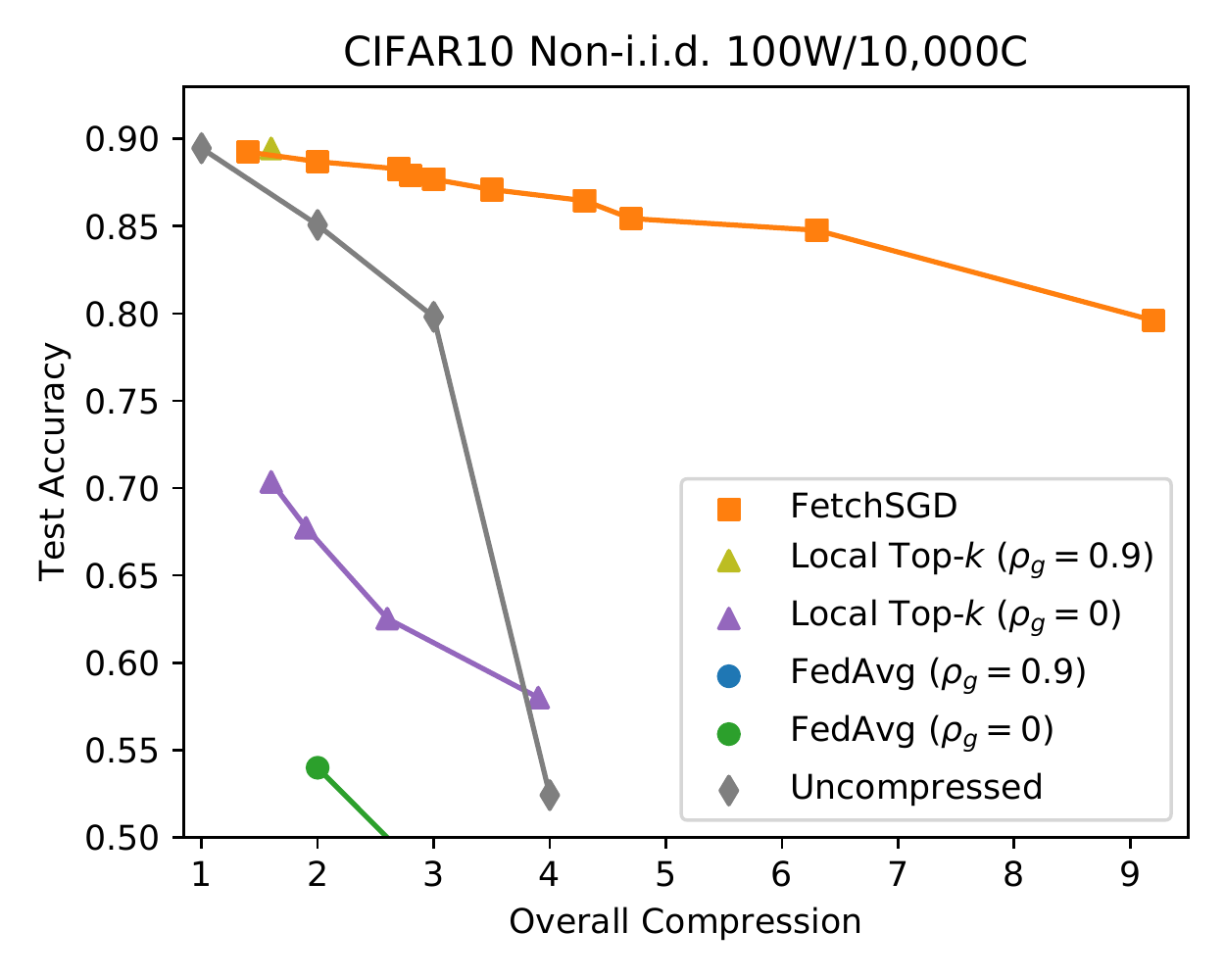}
    \label{fig:cifar10}
    \vspace{-.5cm} %
\end{subfigure}
\begin{subfigure}[b]{0.49\textwidth}
    \includegraphics[width=0.90\textwidth]{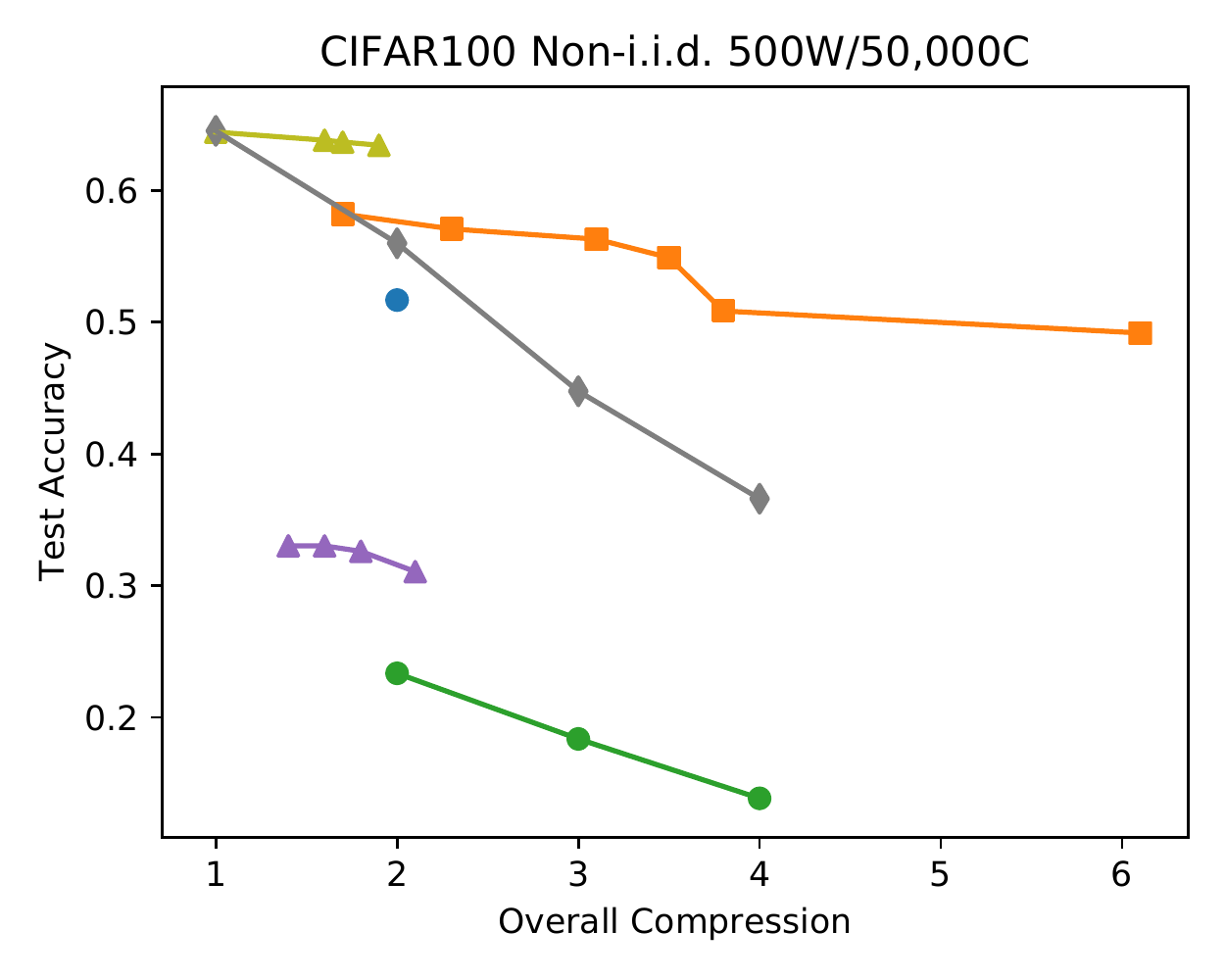}
    \label{fig:cifar100}
    \vspace{-.5cm} %
\end{subfigure}
\caption{Test accuracy achieved on CIFAR10 (left) and CIFAR100 (right). ``Uncompressed'' refers to runs that attain compression by simply running for fewer epochs. \fedssgd{} outperforms all methods, especially at higher compression. Many \fedavg{} and local top-$k$ runs are excluded from the plot because they failed to converge or achieved very low accuracy.}
\label{fig:cifar}
\end{figure*}

For each method, we report the compression achieved relative to uncompressed SGD in terms of total bytes uploaded and downloaded.\footnote{We only count non-zero weight updates when computing how many bytes are transmitted. This makes the unrealistic assumption that we have a zero-overhead sparse vector encoding scheme.}
One important consideration not captured in these numbers is that in \fedavg{}, clients must download an entire model immediately before participating, because every model weight could get updated in every round.
In contrast, local top-$k$ and \fedssgd{} only update a limited number of parameters per round, so non-participating clients can stay relatively up to date with the current model, reducing the number of new parameters that must be downloaded immediately before participating.
This makes upload compression more important than download compression for local top-$k$ and \fedssgd{}.
Download compression is also less important for all three methods since residential Internet connections tend to reach far higher download than upload speeds \citep{goga2012speed}.
We include results here of overall compression (including upload and download), but break up the plots into separate upload and download components in the Appendix, Figure \ref{fig:cifar_complete}.

In all our experiments, we tune standard hyperparameters on the uncompressed runs, and we maintain these same hyperparameters for all compression schemes.
Details on which hyperparameters were chosen for each task can be found in Appendix \ref{appendix:experimentaldetails}.
\fedavg{} achieves compression by reducing the number of iterations carried out, so for these runs, we simply scale the learning rate schedule in the iteration dimension to match the total number of iterations that \fedavg{} will carry out.
We report results for each compression method over a range of hyperparameters: for local top-$k$, we adjust $k$; and for \fedssgd{} we adjust $k$ and the number of columns in the sketch (which controls the compression rate of the sketch).
We tune the number of local epochs and federated averaging batch size for \fedavg{}, but do not tune the learning rate decay for \fedavg{} because we find that \fedavg{} does not approach the baseline accuracy on our main tasks for even a small number of local epochs, where the learning rate decay has very little effect.

In the non-federated setting, momentum is typically crucial for achieving high performance, but in federating learning, momentum can be difficult to incorporate.
Each client could carry out momentum on its local gradients, but this is ineffective when clients participate only once or a few times.
Instead, the central aggregator can carry out momentum on the aggregated model updates.
For \fedavg{} and local top-$k$, we experiment with ($\rho_g=0.9$) and without ($\rho_g=0$) this global momentum.
For each method, neither choice of $\rho_g$ consistently performs better across our tasks, reflecting the difficulty of incorporating momentum.
In contrast, \fedssgd{} incorporates momentum seamlessly due to the linearity of our compression operator (see Section \ref{subsec:alg}); we use a momentum parameter of 0.9 in all experiments.

In all plots of performance vs. compression, each point represents a trained model, and for clarity, we plot only the Pareto frontier over hyperparameters for each method.
Figures \ref{fig:cifar_all_complete} and \ref{fig:femnist_gpt_all_complete} in the Appendix show results for all runs that converged.

\vspace*{-8pt}
\subsection{CIFAR (ResNet9)}
\vspace*{-2pt}
CIFAR10 and CIFAR100 \citep{krizhevsky2009learning} are image classification datasets with 60,000 $32\times32$px color images distributed evenly over 10 and 100 classes respectively (50,000/10,000 train/test split).
They are benchmark computer vision datasets, and although they lack a natural non-i.i.d.\ partitioning, we artificially create one by giving each client images from only a single class.
For CIFAR10 (CIFAR100) we use 10,000 (50,000) clients, yielding 5 (1) images per client.
Our 7M-parameter model architecture, data preprocessing, and most hyperparameters follow \citet{davidpage}, with details in Appendix \ref{appendix:cifar}.
We report accuracy on the test datasets.

Figure \ref{fig:cifar} shows test accuracy vs. compression for CIFAR10 and CIFAR100.
\fedavg{} and local top-$k$ both struggle to achieve significantly better results than uncompressed SGD.
Although we ran a large hyperparameter sweep, many runs simply diverge, especially for higher compression (local top-$k$) or more local iterations (\fedavg{}).
We expect this setting to be challenging for \fedavg{}, since running multiple gradient steps on only one or a few data points, especially points that are not representative of the overall distribution, is unlikely to be productive.
And although local top-$k$ can achieve high upload compression, download compression is reduced to almost $1\times$, since summing sparse gradients from many workers, each with very different data, leads to a nearly dense model update each round.

\vspace*{-8pt}
\subsection{FEMNIST (ResNet101)}
\vspace*{-2pt}
The experiments above show that \fedssgd{} significantly outperforms competing methods in the regime of very small local datasets and non-i.i.d. data.
In this section we introduce a task designed to be more favorable for \fedavg{}, and show that \fedssgd{} still performs competitively.

\begin{figure}[t]
\centering

\includegraphics[width=210px]{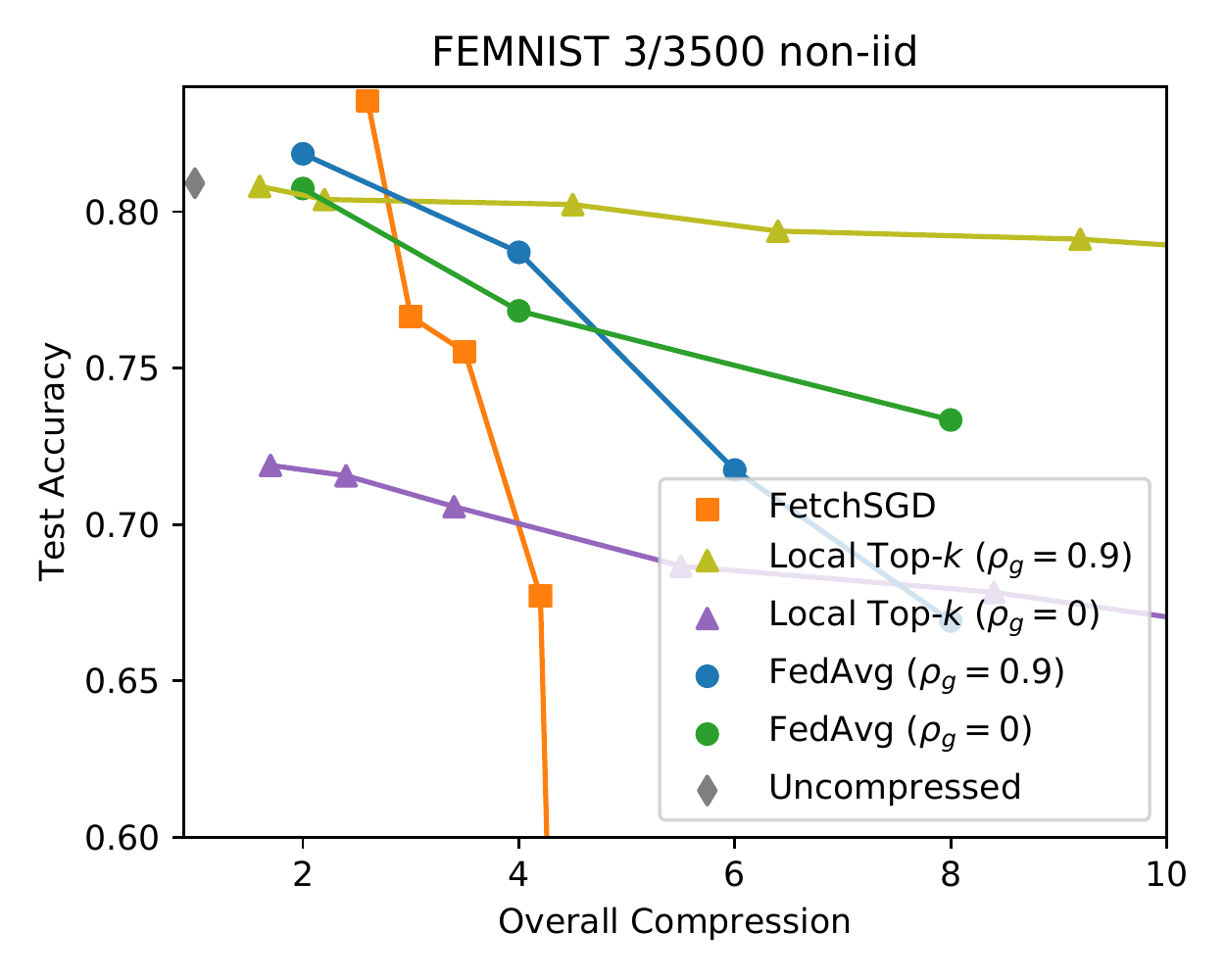}
  \vspace{-.5cm} %

\caption{Test accuracy on FEMNIST. The dataset is not very non-i.i.d., and has relatively large local datasets, but \fedssgd{} is still competitive with \fedavg{} and local top-$k$ for lower compression.
}
\vspace{-10pt}
\label{fig:femnist}

\end{figure}

\begin{figure*}[t]
\centering
\begin{subfigure}[b]{0.49\textwidth}
     \includegraphics[width=0.95\textwidth]{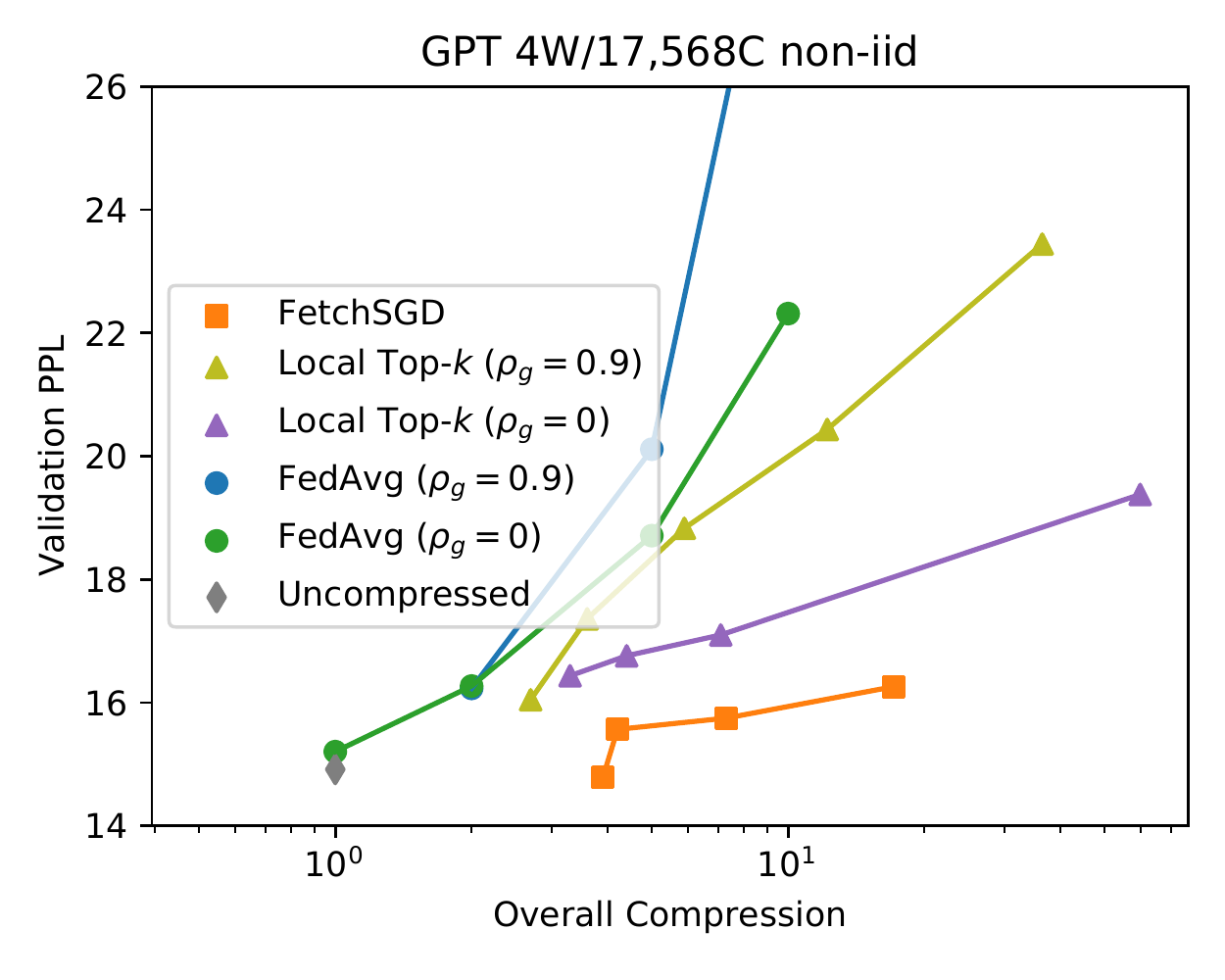}
\end{subfigure}
\begin{subfigure}[b]{0.49\textwidth}
    \includegraphics[width=0.95\textwidth]{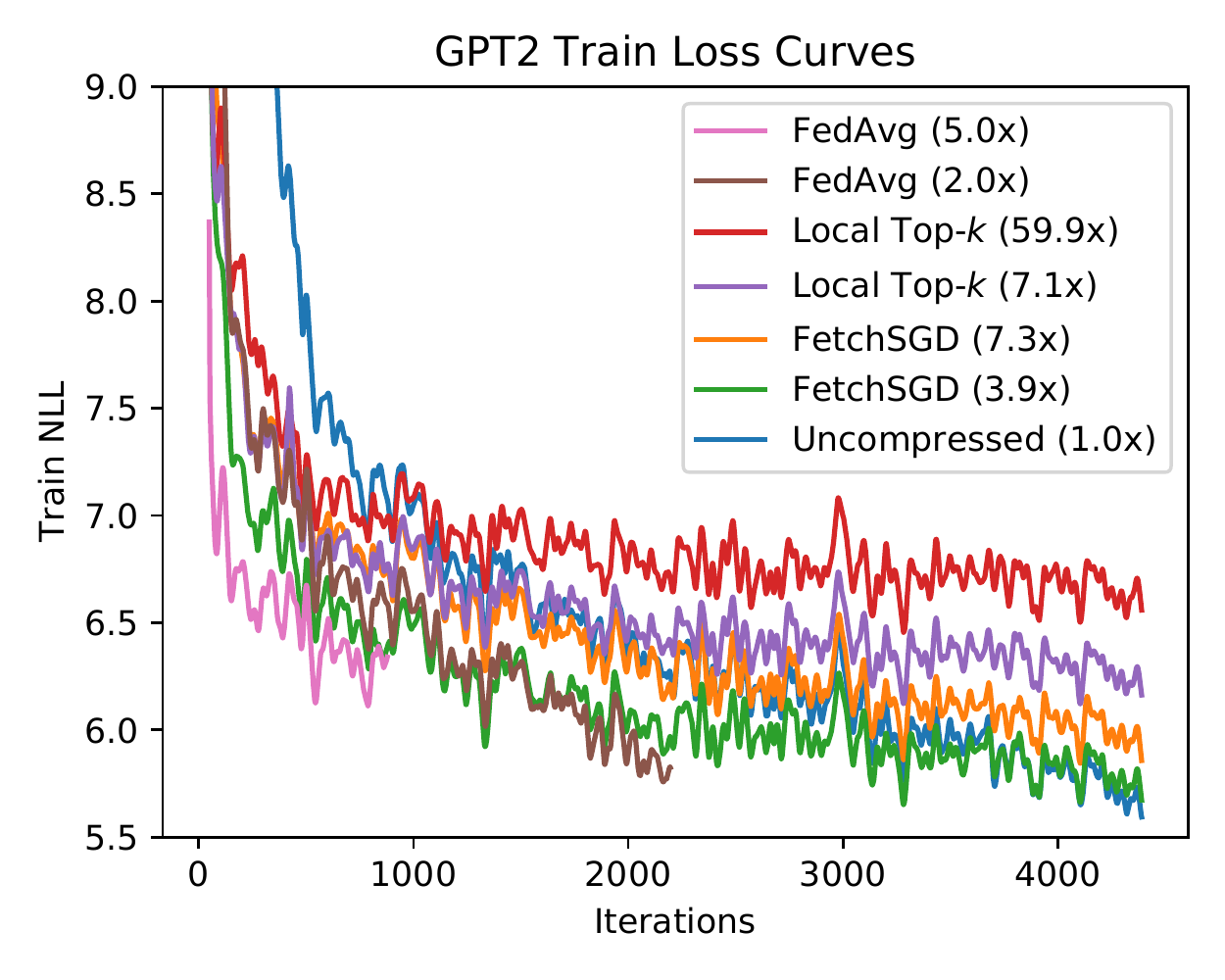}
\end{subfigure}
\vspace{-.5cm} %
\caption{Left: Validation perplexity achieved by finetuning GPT2-small on PersonaChat. \fedssgd{} achieves $3.9\times$ compression without loss in accuracy over uncompressed SGD, and it consistently achieves lower perplexity than \fedavg{} and top-$k$ runs with similar compression. Right: Training loss curves for representative runs. Global momentum hinders local top-$k$ in this case, so local top-$k$ runs with $\rho_g=0.9$ are omitted here to increase legibility.}
\label{fig:gpt2}
\vspace{-0.4cm}
\end{figure*}
Federated EMNIST is an image classification dataset with 62 classes (upper- and lower-case letters, plus digits) \citep{femnist}, which is formed by partitioning the EMNIST dataset \citep{emnist} such that each client in FEMNIST contains characters written by a single person.
Experimental details, including our 40M-parameter model architecture, can be found Appendix \ref{appendix:femnist}.
We report final accuracies on the validation dataset.
The baseline run trains for a single epoch (\textit{i.e.}, each client participates once).

FEMNIST was introduced as a benchmark dataset for \fedavg{}, and it has relatively large local datasets ($\sim200$ images per client).
The clients are split according to the person who wrote the character, yielding a data distribution closer to i.i.d.\ than our per-class splits of CIFAR10.
To maintain a reasonable overall batch size, only three clients participate each round, reducing the need for a linear compression operator.
Despite this, \fedssgd{} performs competitively with both \fedavg{} and local top-$k$ for some compression values, as shown in Figure \ref{fig:femnist}.

For low compression, \fedssgd{} actually outperforms the uncompressed baseline, likely because updating only $k$ parameters per round regularizes the model.
Interestingly, local top-$k$ using global momentum significantly outperforms other methods on this task, though we are not aware of prior work suggesting this method for federated learning.
Despite this surprising observation, local top-$k$ with global momentum suffers from divergence and low accuracy on our other tasks, and it lacks any theoretical guarantees.

\vspace*{-8pt}
\subsection{PersonaChat (GPT2)}
\vspace*{-2pt}
In this section we consider GPT2-small \citep{radford2019language}, a transformer model with 124M parameters that is used for language modeling.
We finetune a pretrained GPT2 on the PersonaChat dataset, a chit-chat dataset consisting of conversations between Amazon Mechanical Turk workers who were assigned faux personalities to act out \citep{persona}.
The dataset has a natural non-i.i.d.\ partitioning into 17,568 clients based on the personality that was assigned.
Our experimental procedure follows \citet{convai}. The baseline model trains for a single epoch, meaning that no local state is possible, and we report the final perplexity (a standard metric for language models; lower is better) on the validation dataset in Figure~\ref{fig:gpt2}.

Figure \ref{fig:gpt2} also plots loss curves (negative log likelihood) achieved during training for some representative runs.
Somewhat surprisingly, all the compression techniques outperform the uncompressed baseline early in training, but most saturate too early, when the error introduced by the compression starts to hinder training.

Sketching outperforms local top-$k$ for all but the highest levels of compression, because local top-$k$ relies on local state for error feedback, which is impossible in this setting.
We expect this setting to be challenging for \fedavg{}, since running multiple gradient steps on a single conversation which is not representative of the overall distribution is unlikely to be productive.

\vspace{-0.3cm}
\section{Discussion}
Federated learning has seen a great deal of research interest recently, particularly in the domain of communication efficiency.
A considerable amount of prior work focuses on decreasing the total number of communication rounds required to converge, without reducing the communication required in each round.
In this work, we complement this body of work by introducing \fedssgd{}, an algorithm that reduces the amount of communication required each round, while still conforming to the other constraints of the federated setting.
We particularly want to emphasize that \fedssgd{} easily addresses the setting of non-i.i.d. data, which often complicates other methods.
The optimal algorithm for many federated learning settings will no doubt combine efficiency in number of rounds and efficiency within each round, and we leave an investigation into optimal ways of combining these approaches to future work.

\newpage
\section*{Acknowledgements}
This research was supported in part by NSF BIGDATA awards IIS-1546482, IIS-1838139, NSF CAREER award IIS-1943251, NSF CAREER grant 1652257, NSF GRFP grant DGE 1752814, ONR Award N00014-18-1-2364 and the Lifelong Learning Machines program from DARPA/MTO.
RA would like to acknowledge support provided by Institute for Advanced Study. 

In addition to NSF CISE Expeditions Award CCF-1730628, this research is supported by gifts from Alibaba, Amazon Web Services, Ant Financial, CapitalOne, Ericsson, Facebook, Futurewei, Google, Intel, Microsoft, Nvidia, Scotiabank, Splunk and VMware.

\bibliographystyle{plainnat}
\bibliography{main}

\begin{thebibliography}{64}
\providecommand{\natexlab}[1]{#1}
\providecommand{\url}[1]{\texttt{#1}}
\expandafter\ifx\csname urlstyle\endcsname\relax
  \providecommand{\doi}[1]{doi: #1}\else
  \providecommand{\doi}{doi: \begingroup \urlstyle{rm}\Url}\fi

\bibitem[Alistarh et~al.(2017)Alistarh, Grubic, Li, Tomioka, and
  Vojnovic]{qsgd}
Dan Alistarh, Demjan Grubic, Jerry Li, Ryota Tomioka, and Milan Vojnovic.
\newblock Qsgd: Communication-efficient sgd via gradient quantization and
  encoding.
\newblock In \emph{Advances in Neural Information Processing Systems}, pages
  1709--1720, 2017.

\bibitem[Alon et~al.(1999)Alon, Matias, and Szegedy]{alon1999space}
Noga Alon, Yossi Matias, and Mario Szegedy.
\newblock The space complexity of approximating the frequency moments.
\newblock \emph{Journal of Computer and system sciences}, 58\penalty0
  (1):\penalty0 137--147, 1999.

\bibitem[Bagdasaryan et~al.(2018)Bagdasaryan, Veit, Hua, Estrin, and
  Shmatikov]{backdoor}
Eugene Bagdasaryan, Andreas Veit, Yiqing Hua, Deborah Estrin, and Vitaly
  Shmatikov.
\newblock How to backdoor federated learning, 2018.

\bibitem[Bernstein et~al.(2018)Bernstein, Wang, Azizzadenesheli, and
  Anandkumar]{bernstein2018signsgd}
Jeremy Bernstein, Yu-Xiang Wang, Kamyar Azizzadenesheli, and Anima Anandkumar.
\newblock signsgd: Compressed optimisation for non-convex problems.
\newblock \emph{arXiv preprint arXiv:1802.04434}, 2018.

\bibitem[Bhagoji et~al.(2018)Bhagoji, Chakraborty, Mittal, and
  Calo]{adversariallens}
Arjun~Nitin Bhagoji, Supriyo Chakraborty, Prateek Mittal, and Seraphin Calo.
\newblock Analyzing federated learning through an adversarial lens.
\newblock \emph{arXiv preprint arXiv:1811.12470}, 2018.

\bibitem[Bonawitz et~al.(2016)Bonawitz, Ivanov, Kreuter, Marcedone, McMahan,
  Patel, Ramage, Segal, and Seth]{secureagg}
Keith Bonawitz, Vladimir Ivanov, Ben Kreuter, Antonio Marcedone, H~Brendan
  McMahan, Sarvar Patel, Daniel Ramage, Aaron Segal, and Karn Seth.
\newblock Practical secure aggregation for federated learning on user-held
  data.
\newblock \emph{arXiv preprint arXiv:1611.04482}, 2016.

\bibitem[Braverman and Ostrovsky(2007)]{braverman2007smooth}
Vladimir Braverman and Rafail Ostrovsky.
\newblock Smooth histograms for sliding windows.
\newblock In \emph{48th Annual IEEE Symposium on Foundations of Computer
  Science (FOCS'07)}, pages 283--293. IEEE, 2007.

\bibitem[Braverman et~al.(2014)Braverman, Gelles, and
  Ostrovsky]{braverman2014catch}
Vladimir Braverman, Ran Gelles, and Rafail Ostrovsky.
\newblock How to catch l2-heavy-hitters on sliding windows.
\newblock \emph{Theoretical Computer Science}, 554:\penalty0 82--94, 2014.

\bibitem[Braverman et~al.(2015)Braverman, Ostrovsky, and
  Roytman]{braverman2015zero}
Vladimir Braverman, Rafail Ostrovsky, and Alan Roytman.
\newblock Zero-one laws for sliding windows and universal sketches.
\newblock In \emph{Approximation, Randomization, and Combinatorial
  Optimization. Algorithms and Techniques (APPROX/RANDOM 2015)}. Schloss
  Dagstuhl-Leibniz-Zentrum fuer Informatik, 2015.

\bibitem[Braverman et~al.(2017)Braverman, Chestnut, Ivkin, Nelson, Wang, and
  Woodruff]{braverman2017bptree}
Vladimir Braverman, Stephen~R Chestnut, Nikita Ivkin, Jelani Nelson, Zhengyu
  Wang, and David~P Woodruff.
\newblock Bptree: an $\ell_2$ heavy hitters algorithm using constant memory.
\newblock In \emph{Proceedings of the 36th ACM SIGMOD-SIGACT-SIGAI Symposium on
  Principles of Database Systems}, pages 361--376, 2017.

\bibitem[Braverman et~al.(2018{\natexlab{a}})Braverman, Drineas, Musco, Musco,
  Upadhyay, Woodruff, and Zhou]{braverman2018near}
Vladimir Braverman, Petros Drineas, Cameron Musco, Christopher Musco, Jalaj
  Upadhyay, David~P Woodruff, and Samson Zhou.
\newblock Near optimal linear algebra in the online and sliding window models.
\newblock \emph{arXiv preprint arXiv:1805.03765}, 2018{\natexlab{a}}.

\bibitem[Braverman et~al.(2018{\natexlab{b}})Braverman, Grigorescu, Lang,
  Woodruff, and Zhou]{braverman2018nearly}
Vladimir Braverman, Elena Grigorescu, Harry Lang, David~P Woodruff, and Samson
  Zhou.
\newblock Nearly optimal distinct elements and heavy hitters on sliding
  windows.
\newblock \emph{Approximation, Randomization, and Combinatorial Optimization.
  Algorithms and Techniques}, 2018{\natexlab{b}}.

\bibitem[Brisimi et~al.(2018)Brisimi, Chen, Mela, Olshevsky, Paschalidis, and
  Shi]{health}
Theodora~S Brisimi, Ruidi Chen, Theofanie Mela, Alex Olshevsky, Ioannis~Ch
  Paschalidis, and Wei Shi.
\newblock Federated learning of predictive models from federated electronic
  health records.
\newblock \emph{International journal of medical informatics}, 112:\penalty0
  59--67, 2018.

\bibitem[Caldas et~al.(2018)Caldas, Duddu, Wu, Li, Konecny, McMahan, Smith, and
  Talwalkar]{femnist}
Sebastian Caldas, Sai Meher~Karthik Duddu, Peter Wu, Tian Li, Jakub Konecny,
  H.~Brendan McMahan, Virginia Smith, and Ameet Talwalkar.
\newblock Leaf: A benchmark for federated settings, 2018.

\bibitem[Charikar et~al.(2002)Charikar, Chen, and
  Farach-Colton]{charikar2002finding}
Moses Charikar, Kevin Chen, and Martin Farach-Colton.
\newblock Finding frequent items in data streams.
\newblock In \emph{International Colloquium on Automata, Languages, and
  Programming}, pages 693--703. Springer, 2002.

\bibitem[{Cohen} et~al.(2017){Cohen}, {Afshar}, {Tapson}, and {van
  Schaik}]{emnist}
G.~{Cohen}, S.~{Afshar}, J.~{Tapson}, and A.~{van Schaik}.
\newblock Emnist: Extending mnist to handwritten letters.
\newblock In \emph{2017 International Joint Conference on Neural Networks
  (IJCNN)}, pages 2921--2926, May 2017.
\newblock \doi{10.1109/IJCNN.2017.7966217}.

\bibitem[Datar et~al.(2002)Datar, Gionis, Indyk, and
  Motwani]{datar2002maintaining}
Mayur Datar, Aristides Gionis, Piotr Indyk, and Rajeev Motwani.
\newblock Maintaining stream statistics over sliding windows.
\newblock \emph{SIAM journal on computing}, 31\penalty0 (6):\penalty0
  1794--1813, 2002.

\bibitem[Dean et~al.(2012)Dean, Corrado, Monga, Chen, Devin, Mao, Ranzato,
  Senior, Tucker, Yang, et~al.]{dean2012large}
Jeffrey Dean, Greg Corrado, Rajat Monga, Kai Chen, Matthieu Devin, Mark Mao,
  Marc'aurelio Ranzato, Andrew Senior, Paul Tucker, Ke~Yang, et~al.
\newblock Large scale distributed deep networks.
\newblock In \emph{Advances in neural information processing systems}, pages
  1223--1231, 2012.

\bibitem[EU(2018)]{gdpr}
EU.
\newblock 2018 reform of eu data protection rules, 2018.
\newblock URL \url{https://tinyurl.com/ydaltt5g}.

\bibitem[Geyer et~al.(2017)Geyer, Klein, and Nabi]{dpfl}
Robin~C. Geyer, Tassilo Klein, and Moin Nabi.
\newblock Differentially private federated learning: A client level
  perspective, 2017.

\bibitem[Goga and Teixeira(2012)]{goga2012speed}
Oana Goga and Renata Teixeira.
\newblock Speed measurements of residential internet access.
\newblock In \emph{International Conference on Passive and Active Network
  Measurement}, pages 168--178. Springer, 2012.

\bibitem[Goyal et~al.(2017)Goyal, Doll{\'a}r, Girshick, Noordhuis, Wesolowski,
  Kyrola, Tulloch, Jia, and He]{goyal2017accurate}
Priya Goyal, Piotr Doll{\'a}r, Ross Girshick, Pieter Noordhuis, Lukasz
  Wesolowski, Aapo Kyrola, Andrew Tulloch, Yangqing Jia, and Kaiming He.
\newblock Accurate, large minibatch sgd: Training imagenet in 1 hour.
\newblock \emph{arXiv preprint arXiv:1706.02677}, 2017.

\bibitem[Hard et~al.(2018)Hard, Rao, Mathews, Ramaswamy, Beaufays, Augenstein,
  Eichner, Kiddon, and Ramage]{gboard}
Andrew Hard, Kanishka Rao, Rajiv Mathews, Swaroop Ramaswamy, Francoise
  Beaufays, Sean Augenstein, Hubert Eichner, Chlo{\'e} Kiddon, and Daniel
  Ramage.
\newblock Federated learning for mobile keyboard prediction.
\newblock \emph{arXiv preprint arXiv:1811.03604}, 2018.

\bibitem[Hardy et~al.(2017)Hardy, Henecka, Ivey-Law, Nock, Patrini, Smith, and
  Thorne]{homoencryption}
Stephen Hardy, Wilko Henecka, Hamish Ivey-Law, Richard Nock, Giorgio Patrini,
  Guillaume Smith, and Brian Thorne.
\newblock Private federated learning on vertically partitioned data via entity
  resolution and additively homomorphic encryption.
\newblock \emph{arXiv preprint arXiv:1711.10677}, 2017.

\bibitem[Ivkin et~al.(2018)Ivkin, Liu, Yang, Kumar, Lemson, Neyrinck, Szalay,
  Braverman, and Budavari]{ivkin2018scalable}
Nikita Ivkin, Zaoxing Liu, Lin~F Yang, Srinivas~Suresh Kumar, Gerard Lemson,
  Mark Neyrinck, Alexander~S Szalay, Vladimir Braverman, and Tamas Budavari.
\newblock Scalable streaming tools for analyzing n-body simulations: Finding
  halos and investigating excursion sets in one pass.
\newblock \emph{Astronomy and computing}, 23:\penalty0 166--179, 2018.

\bibitem[Ivkin et~al.(2019{\natexlab{a}})Ivkin, Basat, Liu, Einziger, Friedman,
  and Braverman]{ivkin2019know}
Nikita Ivkin, Ran~Ben Basat, Zaoxing Liu, Gil Einziger, Roy Friedman, and
  Vladimir Braverman.
\newblock I know what you did last summer: Network monitoring using interval
  queries.
\newblock \emph{Proceedings of the ACM on Measurement and Analysis of Computing
  Systems}, 3\penalty0 (3):\penalty0 1--28, 2019{\natexlab{a}}.

\bibitem[Ivkin et~al.(2019{\natexlab{b}})Ivkin, Rothchild, Ullah, Braverman,
  Stoica, and Arora]{ivkin2019communication}
Nikita Ivkin, Daniel Rothchild, Enayat Ullah, Vladimir Braverman, Ion Stoica,
  and Raman Arora.
\newblock Communication-efficient distributed sgd with sketching.
\newblock In \emph{Advances in Neural Information Processing Systems}, pages
  13144--13154, 2019{\natexlab{b}}.

\bibitem[Ivkin et~al.(2019{\natexlab{c}})Ivkin, Yu, Braverman, and
  Jin]{ivkin2019qpipe}
Nikita Ivkin, Zhuolong Yu, Vladimir Braverman, and Xin Jin.
\newblock Qpipe: Quantiles sketch fully in the data plane.
\newblock In \emph{Proceedings of the 15th International Conference on Emerging
  Networking Experiments And Technologies}, pages 285--291, 2019{\natexlab{c}}.

\bibitem[Jiang et~al.(2018)Jiang, Fu, Yang, and Cui]{jiang2018sketchml}
Jiawei Jiang, Fangcheng Fu, Tong Yang, and Bin Cui.
\newblock Sketchml: Accelerating distributed machine learning with data
  sketches.
\newblock In \emph{Proceedings of the 2018 International Conference on
  Management of Data}, pages 1269--1284, 2018.

\bibitem[Kairouz et~al.(2019)Kairouz, McMahan, Avent, Bellet, Bennis, Bhagoji,
  Bonawitz, Charles, Cormode, Cummings, D'Oliveira, Rouayheb, Evans, Gardner,
  Garrett, Gascón, Ghazi, Gibbons, Gruteser, Harchaoui, He, He, Huo,
  Hutchinson, Hsu, Jaggi, Javidi, Joshi, Khodak, Konecny, Korolova, Koushanfar,
  Koyejo, Lepoint, Liu, Mittal, Mohri, Nock, Özgür, Pagh, Raykova, Qi,
  Ramage, Raskar, Song, Song, Stich, Sun, Suresh, Tramèr, Vepakomma, Wang,
  Xiong, Xu, Yang, Yu, Yu, and Zhao]{survey}
Peter Kairouz, H.~Brendan McMahan, Brendan Avent, Aurélien Bellet, Mehdi
  Bennis, Arjun~Nitin Bhagoji, Keith Bonawitz, Zachary Charles, Graham Cormode,
  Rachel Cummings, Rafael G.~L. D'Oliveira, Salim~El Rouayheb, David Evans,
  Josh Gardner, Zachary Garrett, Adrià Gascón, Badih Ghazi, Phillip~B.
  Gibbons, Marco Gruteser, Zaid Harchaoui, Chaoyang He, Lie He, Zhouyuan Huo,
  Ben Hutchinson, Justin Hsu, Martin Jaggi, Tara Javidi, Gauri Joshi, Mikhail
  Khodak, Jakub Konecny, Aleksandra Korolova, Farinaz Koushanfar, Sanmi Koyejo,
  Tancrède Lepoint, Yang Liu, Prateek Mittal, Mehryar Mohri, Richard Nock,
  Ayfer Özgür, Rasmus Pagh, Mariana Raykova, Hang Qi, Daniel Ramage, Ramesh
  Raskar, Dawn Song, Weikang Song, Sebastian~U. Stich, Ziteng Sun,
  Ananda~Theertha Suresh, Florian Tramèr, Praneeth Vepakomma, Jianyu Wang,
  Li~Xiong, Zheng Xu, Qiang Yang, Felix~X. Yu, Han Yu, and Sen Zhao.
\newblock Advances and open problems in federated learning, 2019.

\bibitem[Karimireddy et~al.(2019{\natexlab{a}})Karimireddy, Kale, Mohri, Reddi,
  Stich, and Suresh]{scaffold}
Sai~Praneeth Karimireddy, Satyen Kale, Mehryar Mohri, Sashank~J. Reddi,
  Sebastian~U. Stich, and Ananda~Theertha Suresh.
\newblock Scaffold: Stochastic controlled averaging for on-device federated
  learning, 2019{\natexlab{a}}.

\bibitem[Karimireddy et~al.(2019{\natexlab{b}})Karimireddy, Rebjock, Stich, and
  Jaggi]{karimireddy2019error}
Sai~Praneeth Karimireddy, Quentin Rebjock, Sebastian~U Stich, and Martin Jaggi.
\newblock Error feedback fixes signsgd and other gradient compression schemes.
\newblock \emph{arXiv preprint arXiv:1901.09847}, 2019{\natexlab{b}}.

\bibitem[Konecny et~al.(2016)Konecny, McMahan, Yu, Richtárik, Suresh, and
  Bacon]{strategies}
Jakub Konecny, H.~Brendan McMahan, Felix~X. Yu, Peter Richtárik,
  Ananda~Theertha Suresh, and Dave Bacon.
\newblock Federated learning: Strategies for improving communication
  efficiency, 2016.

\bibitem[Krizhevsky et~al.(2009)Krizhevsky, Hinton,
  et~al.]{krizhevsky2009learning}
Alex Krizhevsky, Geoffrey Hinton, et~al.
\newblock Learning multiple layers of features from tiny images.
\newblock \emph{Master's thesis, Department of Computer Science, University of
  Toronto}, 2009.

\bibitem[Lee et~al.(2010)Lee, Lee, Yi, Rhee, and Chong]{mobilewifi}
Kyunghan Lee, Joohyun Lee, Yung Yi, Injong Rhee, and Song Chong.
\newblock Mobile data offloading: How much can wifi deliver?
\newblock In \emph{Proceedings of the 6th International Conference}, pages
  1--12, 2010.

\bibitem[Leroy et~al.(2019)Leroy, Coucke, Lavril, Gisselbrecht, and
  Dureau]{kws}
David Leroy, Alice Coucke, Thibaut Lavril, Thibault Gisselbrecht, and Joseph
  Dureau.
\newblock Federated learning for keyword spotting.
\newblock In \emph{ICASSP 2019-2019 IEEE International Conference on Acoustics,
  Speech and Signal Processing (ICASSP)}, pages 6341--6345. IEEE, 2019.

\bibitem[Li et~al.(2018)Li, Ota, and Dong]{edgelearning}
He~Li, Kaoru Ota, and Mianxiong Dong.
\newblock Learning iot in edge: Deep learning for the internet of things with
  edge computing.
\newblock \emph{IEEE network}, 32\penalty0 (1):\penalty0 96--101, 2018.

\bibitem[Li et~al.(2019)Li, Liu, Sekar, and Smith]{li2019privacy}
Tian Li, Zaoxing Liu, Vyas Sekar, and Virginia Smith.
\newblock Privacy for free: Communication-efficient learning with differential
  privacy using sketches.
\newblock \emph{arXiv preprint arXiv:1911.00972}, 2019.

\bibitem[Lin et~al.(2017)Lin, Han, Mao, Wang, and Dally]{dgc}
Yujun Lin, Song Han, Huizi Mao, Yu~Wang, and William~J Dally.
\newblock Deep gradient compression: Reducing the communication bandwidth for
  distributed training.
\newblock \emph{arXiv preprint arXiv:1712.01887}, 2017.

\bibitem[Liu et~al.(2015)Liu, Ivkin, Yang, Neyrinck, Lemson, Szalay, Braverman,
  Budavari, Burns, and Wang]{liu2015streaming}
Zaoxing Liu, Nikita Ivkin, Lin Yang, Mark Neyrinck, Gerard Lemson, Alexander
  Szalay, Vladimir Braverman, Tamas Budavari, Randal Burns, and Xin Wang.
\newblock Streaming algorithms for halo finders.
\newblock In \emph{2015 IEEE 11th International Conference on e-Science}, pages
  342--351. IEEE, 2015.

\bibitem[McMahan et~al.(2016)McMahan, Moore, Ramage, Hampson, et~al.]{fedavg}
H~Brendan McMahan, Eider Moore, Daniel Ramage, Seth Hampson, et~al.
\newblock Communication-efficient learning of deep networks from decentralized
  data.
\newblock \emph{arXiv preprint arXiv:1602.05629}, 2016.

\bibitem[Misra and Gries(1982)]{misra1982finding}
Jayadev Misra and David Gries.
\newblock Finding repeated elements.
\newblock \emph{Science of computer programming}, 2\penalty0 (2):\penalty0
  143--152, 1982.

\bibitem[Muchnik et~al.(2013)Muchnik, Pei, Parra, Reis, Andrade~Jr, Havlin, and
  Makse]{muchnik2013origins}
Lev Muchnik, Sen Pei, Lucas~C Parra, Saulo~DS Reis, Jos{\'e}~S Andrade~Jr,
  Shlomo Havlin, and Hern{\'a}n~A Makse.
\newblock Origins of power-law degree distribution in the heterogeneity of
  human activity in social networks.
\newblock \emph{Scientific reports}, 3\penalty0 (1):\penalty0 1--8, 2013.

\bibitem[Muthukrishnan et~al.(2005)]{muthukrishnan2005data}
Shanmugavelayutham Muthukrishnan et~al.
\newblock Data streams: Algorithms and applications.
\newblock \emph{Foundations and Trends{\textregistered} in Theoretical Computer
  Science}, 1\penalty0 (2):\penalty0 117--236, 2005.

\bibitem[Page(2019)]{davidpage}
David Page.
\newblock How to train your resnet, Nov 2019.
\newblock URL \url{https://myrtle.ai/how-to-train-your-resnet/}.

\bibitem[Paszke et~al.(2019)Paszke, Gross, Massa, Lerer, Bradbury, Chanan,
  Killeen, Lin, Gimelshein, Antiga, Desmaison, Kopf, Yang, DeVito, Raison,
  Tejani, Chilamkurthy, Steiner, Fang, Bai, and Chintala]{paszke2017automatic}
Adam Paszke, Sam Gross, Francisco Massa, Adam Lerer, James Bradbury, Gregory
  Chanan, Trevor Killeen, Zeming Lin, Natalia Gimelshein, Luca Antiga, Alban
  Desmaison, Andreas Kopf, Edward Yang, Zachary DeVito, Martin Raison, Alykhan
  Tejani, Sasank Chilamkurthy, Benoit Steiner, Lu~Fang, Junjie Bai, and Soumith
  Chintala.
\newblock Pytorch: An imperative style, high-performance deep learning library.
\newblock In H.~Wallach, H.~Larochelle, A.~Beygelzimer, F.~dAlch\'{e} Buc,
  E.~Fox, and R.~Garnett, editors, \emph{Advances in Neural Information
  Processing Systems 32}, pages 8024--8035. Curran Associates, Inc., 2019.

\bibitem[Radford et~al.(2019)Radford, Wu, Child, Luan, Amodei, and
  Sutskever]{radford2019language}
Alec Radford, Jeffrey Wu, Rewon Child, David Luan, Dario Amodei, and Ilya
  Sutskever.
\newblock Language models are unsupervised multitask learners.
\newblock \emph{OpenAI Blog}, 1\penalty0 (8):\penalty0 9, 2019.

\bibitem[Sahu et~al.(2018)Sahu, Li, Sanjabi, Zaheer, Talwalkar, and
  Smith]{sahu2018convergence}
Anit~Kumar Sahu, Tian Li, Maziar Sanjabi, Manzil Zaheer, Ameet Talwalkar, and
  Virginia Smith.
\newblock On the convergence of federated optimization in heterogeneous
  networks.
\newblock \emph{arXiv preprint arXiv:1812.06127}, 2018.

\bibitem[Shi et~al.(2019)Shi, Chu, Cheung, and See]{shi2019understanding}
Shaohuai Shi, Xiaowen Chu, Ka~Chun Cheung, and Simon See.
\newblock Understanding top-k sparsification in distributed deep learning.
\newblock \emph{arXiv preprint arXiv:1911.08772}, 2019.

\bibitem[Shi et~al.(2016)Shi, Cao, Zhang, Li, and Xu]{edgedata}
Weisong Shi, Jie Cao, Quan Zhang, Youhuizi Li, and Lanyu Xu.
\newblock Edge computing: Vision and challenges.
\newblock \emph{IEEE internet of things journal}, 3\penalty0 (5):\penalty0
  637--646, 2016.

\bibitem[Spring et~al.(2019)Spring, Kyrillidis, Mohan, and
  Shrivastava]{spring2019compressing}
Ryan Spring, Anastasios Kyrillidis, Vijai Mohan, and Anshumali Shrivastava.
\newblock Compressing gradient optimizers via count-sketches.
\newblock \emph{arXiv preprint arXiv:1902.00179}, 2019.

\bibitem[Stich et~al.(2018)Stich, Cordonnier, and Jaggi]{stich2018sparsified}
Sebastian~U Stich, Jean-Baptiste Cordonnier, and Martin Jaggi.
\newblock Sparsified sgd with memory.
\newblock In \emph{Advances in Neural Information Processing Systems}, pages
  4447--4458, 2018.

\bibitem[Sutskever et~al.(2013)Sutskever, Martens, Dahl, and
  Hinton]{sutskever2013importance}
Ilya Sutskever, James Martens, George Dahl, and Geoffrey Hinton.
\newblock On the importance of initialization and momentum in deep learning.
\newblock In \emph{International conference on machine learning}, pages
  1139--1147, 2013.

\bibitem[Tomlinson et~al.(2009)Tomlinson, Solomon, Singh, Doherty, Chopra,
  Ijumba, Tsai, and Jackson]{phonedata}
Mark Tomlinson, Wesley Solomon, Yages Singh, Tanya Doherty, Mickey Chopra,
  Petrida Ijumba, Alexander~C Tsai, and Debra Jackson.
\newblock The use of mobile phones as a data collection tool: a report from a
  household survey in south africa.
\newblock \emph{BMC medical informatics and decision making}, 9\penalty0
  (1):\penalty0 51, 2009.

\bibitem[Wang and Joshi(2018)]{coopsgd}
Jianyu Wang and Gauri Joshi.
\newblock Cooperative sgd: A unified framework for the design and analysis of
  communication-efficient sgd algorithms, 2018.

\bibitem[Wang et~al.(2019)Wang, Tuor, Salonidis, Leung, Makaya, He, and
  Chan]{resourceconstrained}
Shiqiang Wang, Tiffany Tuor, Theodoros Salonidis, Kin~K Leung, Christian
  Makaya, Ting He, and Kevin Chan.
\newblock Adaptive federated learning in resource constrained edge computing
  systems.
\newblock \emph{IEEE Journal on Selected Areas in Communications}, 37\penalty0
  (6):\penalty0 1205--1221, 2019.

\bibitem[Wangni et~al.(2018)Wangni, Wang, Liu, and Zhang]{sparsification}
Jianqiao Wangni, Jialei Wang, Ji~Liu, and Tong Zhang.
\newblock Gradient sparsification for communication-efficient distributed
  optimization.
\newblock In \emph{Advances in Neural Information Processing Systems}, pages
  1299--1309, 2018.

\bibitem[Wolf(2019)]{convai}
Thomas Wolf.
\newblock How to build a state-of-the-art conversational ai with transfer
  learning, May 2019.
\newblock URL \url{https://tinyurl.com/ryehjbt}.

\bibitem[Wolf et~al.(2019)Wolf, Debut, Sanh, Chaumond, Delangue, Moi, Cistac,
  Rault, Louf, Funtowicz, et~al.]{wolf2019huggingface}
Thomas Wolf, L~Debut, V~Sanh, J~Chaumond, C~Delangue, A~Moi, P~Cistac, T~Rault,
  R~Louf, M~Funtowicz, et~al.
\newblock Huggingface’s transformers: State-of-the-art natural language
  processing.
\newblock \emph{ArXiv, abs/1910.03771}, 2019.

\bibitem[Yang et~al.(2010)Yang, Augustinus, Ma, Tan, and
  Srinivasan]{yang2010mobile}
Laurence~T Yang, BW~Augustinus, Jianhua Ma, Ling Tan, and Bala Srinivasan.
\newblock \emph{Mobile intelligence}, volume~69.
\newblock Wiley Online Library, 2010.

\bibitem[Yang et~al.(2018)Yang, Andrew, Eichner, Sun, Li, Kong, Ramage, and
  Beaufays]{improvinggboard}
Timothy Yang, Galen Andrew, Hubert Eichner, Haicheng Sun, Wei Li, Nicholas
  Kong, Daniel Ramage, and Francoise Beaufays.
\newblock Applied federated learning: Improving google keyboard query
  suggestions.
\newblock \emph{arXiv preprint arXiv:1812.02903}, 2018.

\bibitem[Zhang et~al.(2018)Zhang, Dinan, Urbanek, Szlam, Kiela, and
  Weston]{persona}
Saizheng Zhang, Emily Dinan, Jack Urbanek, Arthur Szlam, Douwe Kiela, and Jason
  Weston.
\newblock Personalizing dialogue agents: I have a dog, do you have pets too?,
  2018.

\bibitem[Zhao et~al.(2018)Zhao, Li, Lai, Suda, Civin, and Chandra]{noniid}
Yue Zhao, Meng Li, Liangzhen Lai, Naveen Suda, Damon Civin, and Vikas Chandra.
\newblock Federated learning with non-iid data, 2018.

\bibitem[Zheng et~al.(2019)Zheng, Huang, and Kwok]{zheng2019communication}
Shuai Zheng, Ziyue Huang, and James Kwok.
\newblock Communication-efficient distributed blockwise momentum sgd with
  error-feedback.
\newblock In \emph{Advances in Neural Information Processing Systems}, pages
  11446--11456, 2019.

\end{thebibliography}
\appendix
\onecolumn
The Appendix is organized as follows:
\begin{itemize}
    \item Appendix \ref{appendix:experimentaldetails} lists hyperparameters and model architectures used in all experiments, and includes plots with additional experimental data, including results broken down into upload, download and overall compression.
    \item Appendix \ref{appendix:theory} gives full proofs of convergence for \fedssgd{}.
    \item Appendix \ref{appendix:countsketch} describes the Count Sketch data structure and how it is used in \fedssgd{}.
    \item Appendix \ref{appendix:sliding_window} provides the high level idea of the sliding window model and describes how to extend a sketch data structure to the sliding window setting.
\end{itemize}
\section{Experimental Details}
\label{appendix:experimentaldetails}
We run all experiments on commercially available NVIDIA Pascal, Volta and Turing architecture GPUs.
\subsection{CIFAR}
\label{appendix:cifar}
In all non-\fedavg{} experiments we train for 24 epochs, with 1$\%$ of clients participating each round, for a total of 2400 iterations.
We use standard train/test splits of 50000 training datapoints and 10000 validation.
We use a triangular learning rate schedule which peaks at epoch 5. 
We use the maximum peak learning rate for which the uncompressed runs converge: $0.3$ for CIFAR10, and $0.2$ for CIFAR100.
We use this learning rate schedule for all compressed runs.
\fedavg{} runs for fewer than 24 epochs, so we compress the learning rate schedule in the iteration dimension accordingly.
We do not tune the learning rate separately for any of the compressed runs.

We split the datasets into 10,000 (CIFAR10) and 50,000 (CIFAR100) clients, each of which has 5 (CIFAR10) and 1 (CIFAR100) data point(s) from a single target class.
In each round, 1\% of clients participate, leading to a total batch size of 500 for both datasets (100 clients with 5 data points for CIFAR10, and 500 clients with 1 data point for CIFAR100).
We augment the data during training with random crops and random horizontal flips, and we normalize the images by the dataset mean and standard deviation during training and testing.
We use a modified ResNet9 architecture with 6.5M parameters for CIFAR10, and 6.6M parameters for CIFAR100.
We do not use batch normalization in any experiments, since it is ineffective with the very small local batch sizes we use.
Most of these training procedures, and the modified ResNet9 architecture we use, are drawn from the work of \citet{davidpage}.

\fedssgd{}, \fedavg{} and local top-$k$ each have unique hyperparameters that we search over.
For \fedssgd{}, we try a grid of values for $k$ and the number of columns in the sketch.
For $k$ we try values of [10, 25, 50, 75, 100]~$\times 10^3$.
For the number of columns we try values of [325, 650, 1300, 2000, 3000]~$\times 10^3$.
We also tune $k$ for local top-$k$, trying values of [325, 650, 1300, 2000, 3000, 5000]~$\times 10^3$.
We present results for local top-$k$ with and without global momentum, but not with local momentum:
with such a low participation rate, we observe anecdotally that local momentum performs poorly, since the momentum is always stale, and maintaining local momentum and error accumulation vectors for the large number of clients we experiment with is computationally expensive.
The two hyperparameters of interest in \fedavg{} are the total number of global epochs to run (which determines the compression), and the number of local epochs to perform.
We run a grid search over global epochs of [6, 8, 12] (corresponding to $4\times$, $3\times$, and $2\times$ compression), and local epochs of [2,3,5].

Figure \ref{fig:cifar_complete} shows the Pareto frontier of results with each method for CIFAR10 and CIFAR100 broken down into upload, download, and overall compression.
Figure \ref{fig:cifar_all_complete} shows all runs that converged for the two datasets.
For CIFAR10, 1 \fedssgd{} run, 3 local top-$k$ runs, and all \fedavg{} runs using global momentum diverged.
For CIFAR100, 1 local top-$k$ run and all \fedavg{} runs using global momentum diverged.

\begin{figure*}[th]
\centering
\begin{subfigure}[b]{0.49\textwidth}
    \includegraphics[width=\textwidth]{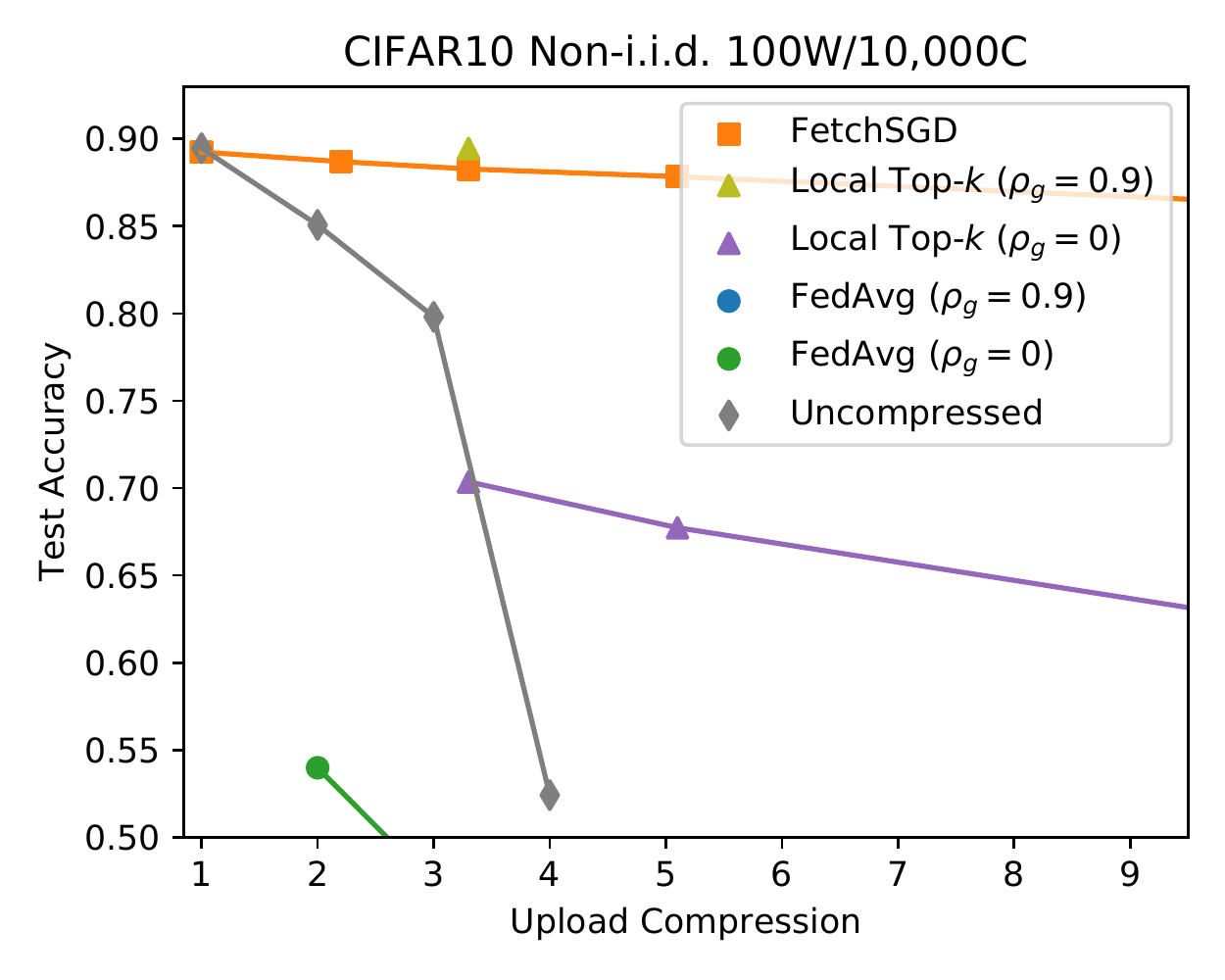}
    \caption{CIFAR10 Upload Compression}
\end{subfigure}
\begin{subfigure}[b]{0.49\textwidth}
    \includegraphics[width=\textwidth]{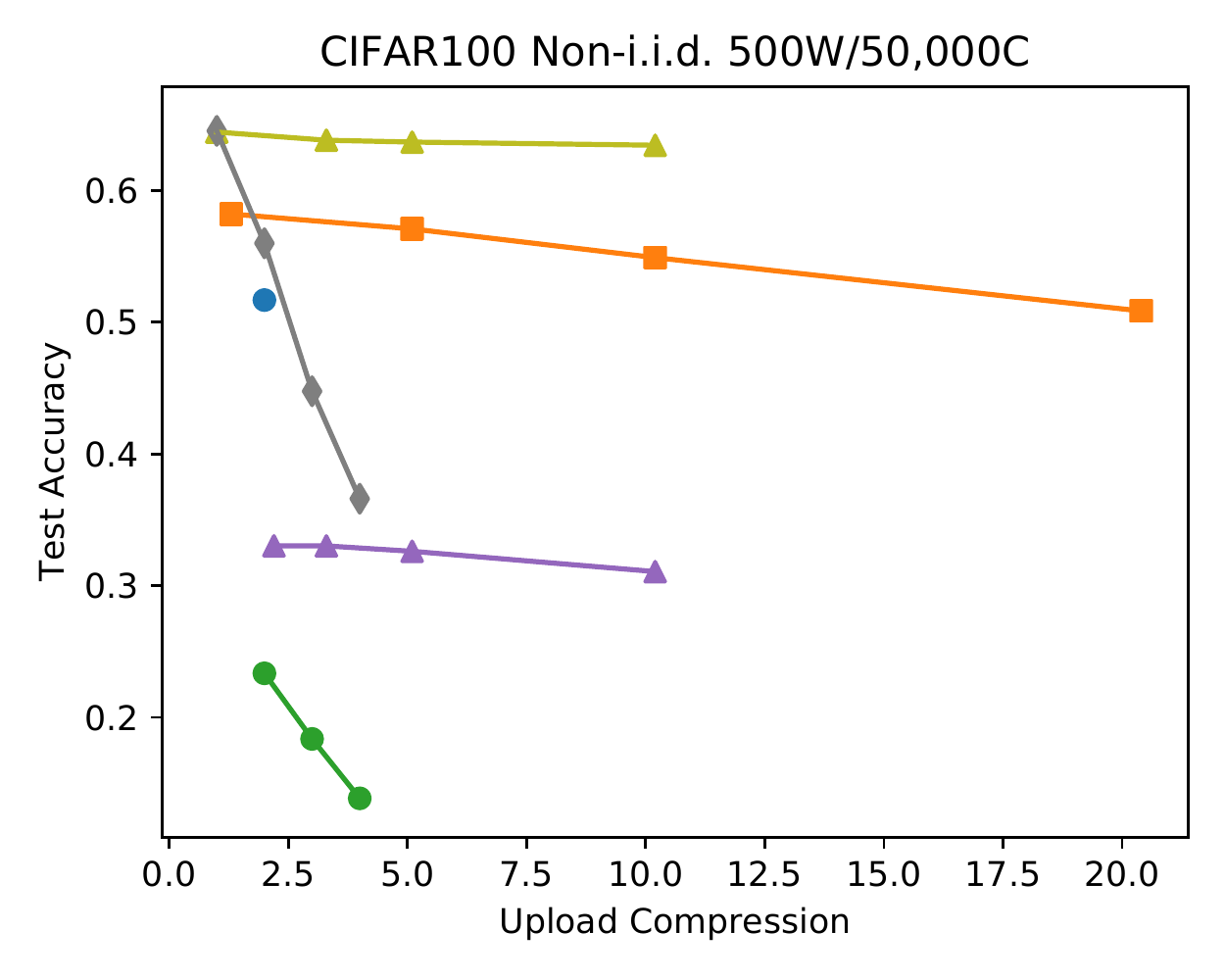}
    \caption{CIFAR100 Upload Compression}
\end{subfigure}
\\
\begin{subfigure}[b]{0.49\textwidth}
    \includegraphics[width=\textwidth]{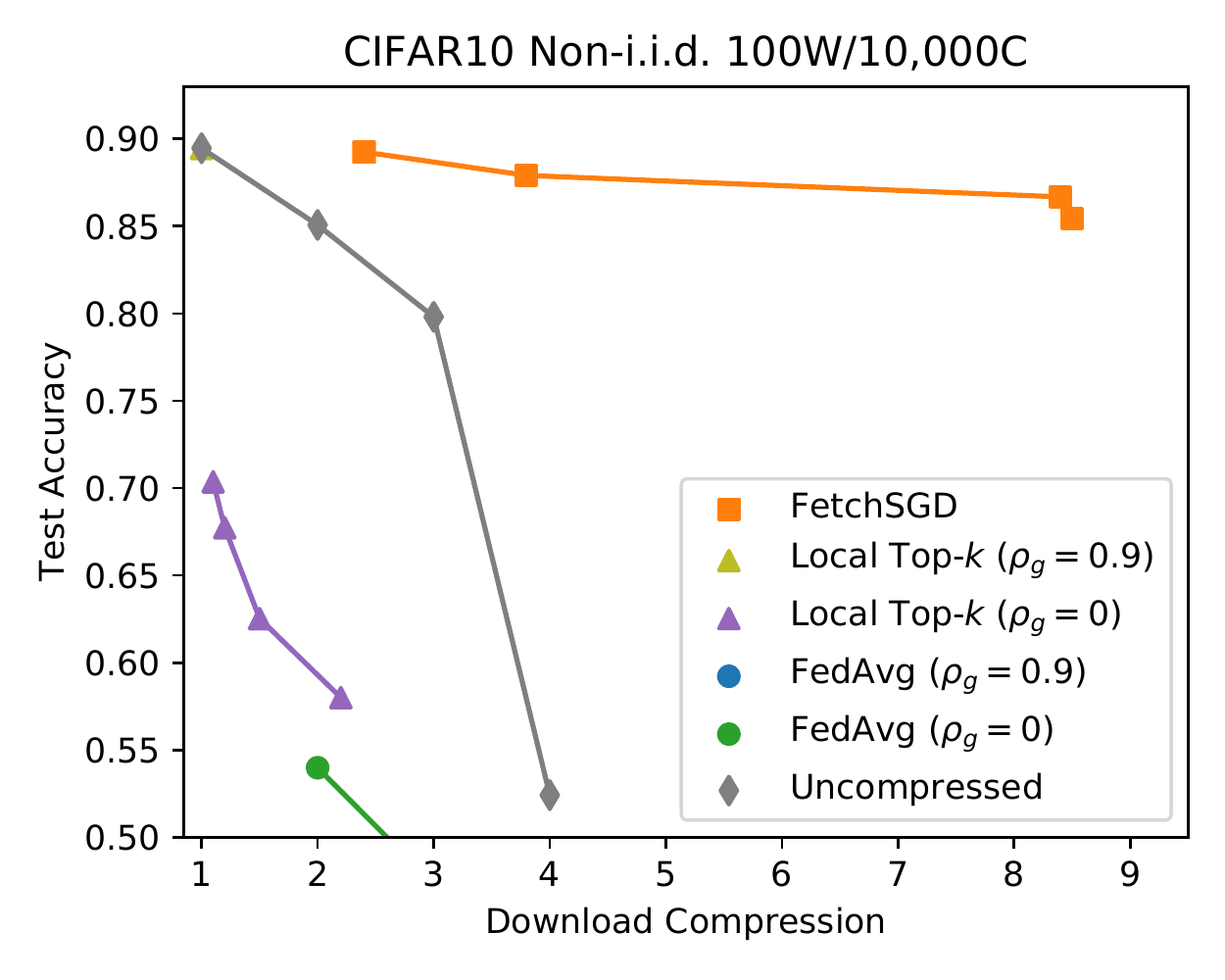}
    \caption{CIFAR10 Download Compression}
\end{subfigure}
\begin{subfigure}[b]{0.49\textwidth}
    \includegraphics[width=\textwidth]{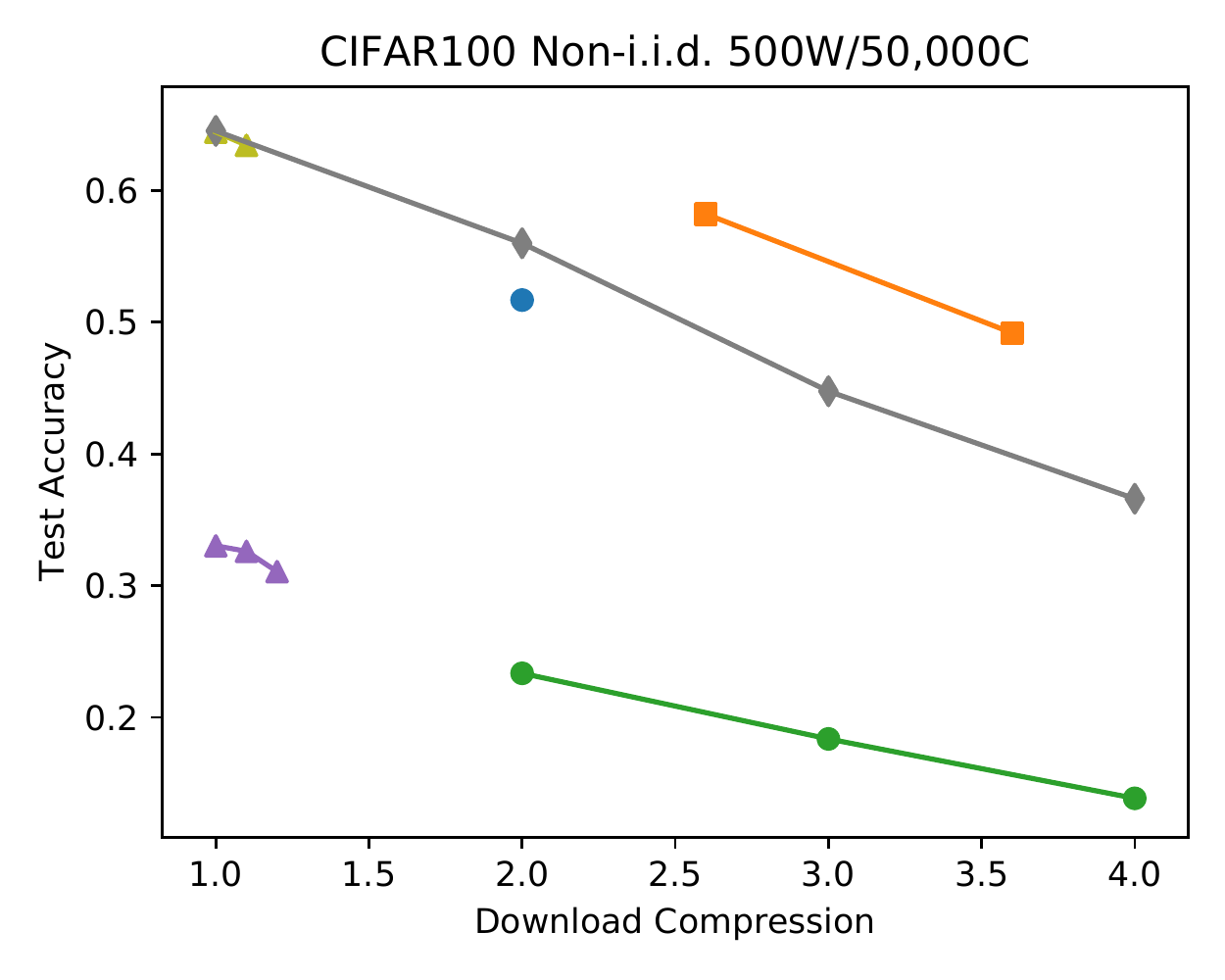}
    \caption{CIFAR100 Download Compression}
\end{subfigure}
\\
\begin{subfigure}[b]{0.49\textwidth}
    \includegraphics[width=\textwidth]{plots/cifar10_100w_10000c_noniid_updown.pdf}
    \caption{CIFAR10 Overall Compression}
\end{subfigure}
\begin{subfigure}[b]{0.49\textwidth}
    \includegraphics[width=\textwidth]{plots/cifar100_500w_50000c_noniid_updown.pdf}
    \caption{CIFAR100 Overall Compression}
\end{subfigure}
\caption{Upload (top), download (middle), and overall (bottom) compression for CIFAR10 (left) and CIFAR100 (right). To increase readability, each plot shows only the Pareto frontier of runs for the compression type shown in that plot. All runs that converged are shown in Figure \ref{fig:cifar_all_complete}.}
\label{fig:cifar_complete}
\end{figure*}

\begin{figure*}[th]
\centering
\begin{subfigure}[b]{0.49\textwidth}
    \includegraphics[width=\textwidth]{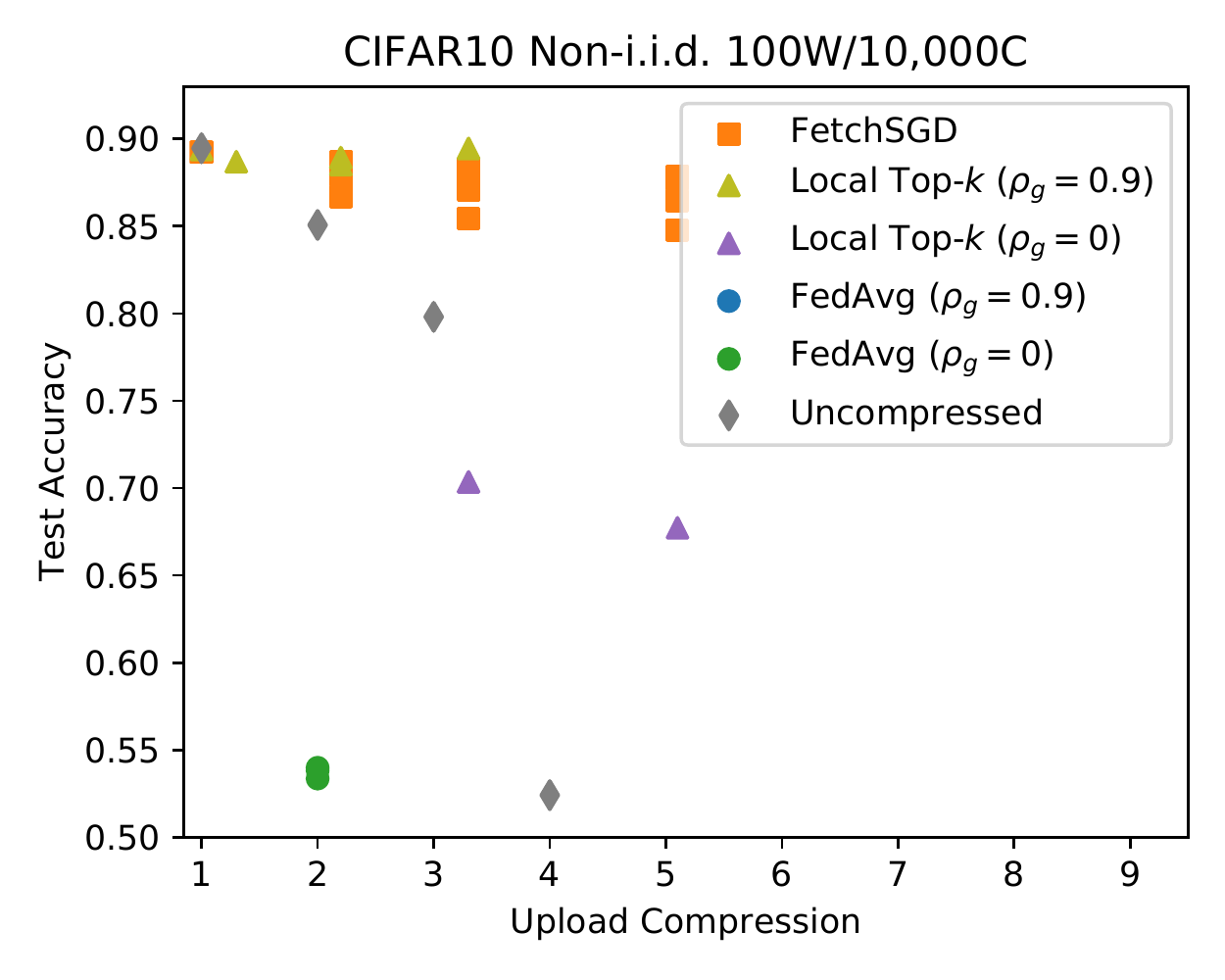}
    \caption{CIFAR10 Upload Compression}
\end{subfigure}
\begin{subfigure}[b]{0.49\textwidth}
    \includegraphics[width=\textwidth]{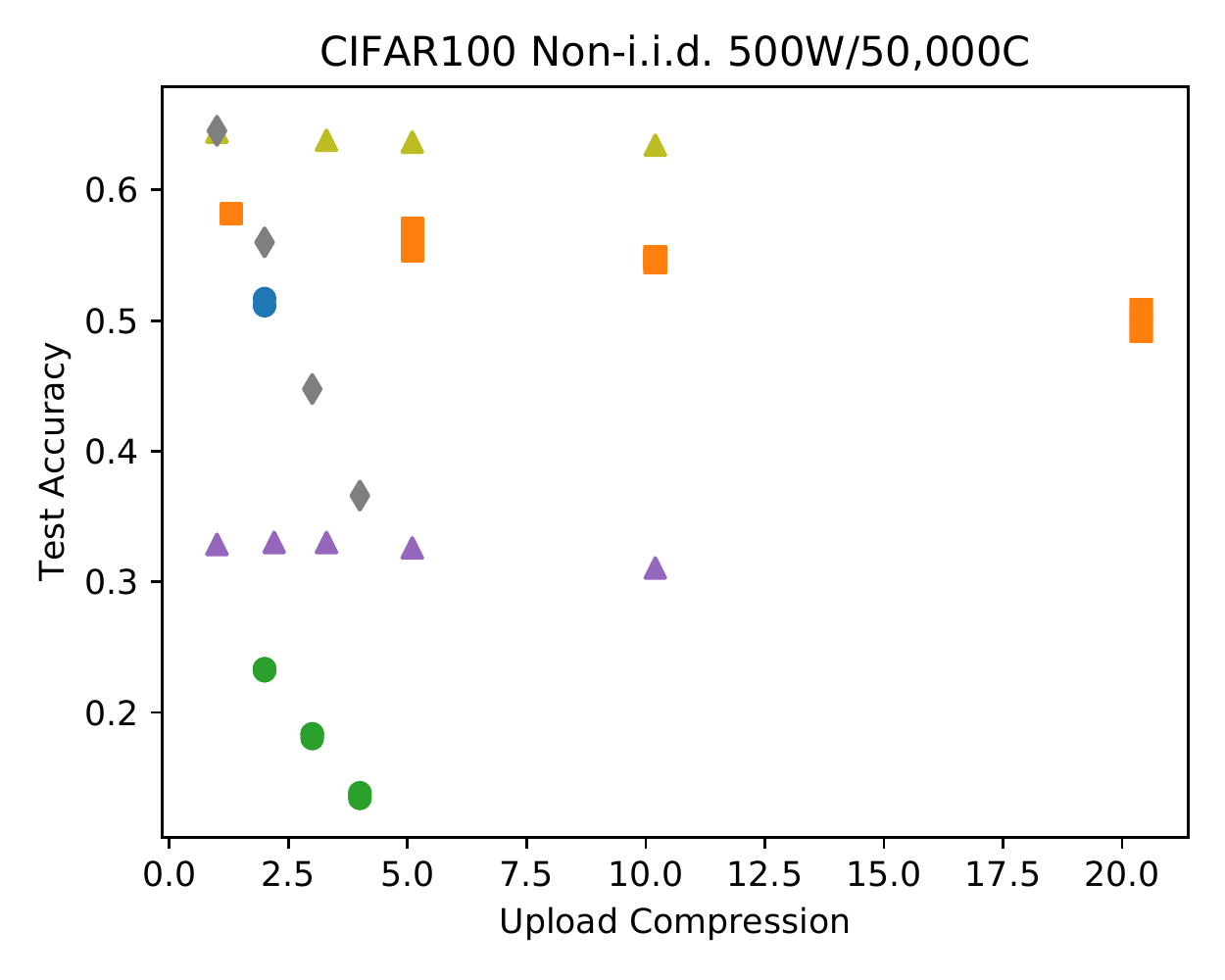}
    \caption{CIFAR100 Upload Compression}
\end{subfigure}
\\
\begin{subfigure}[b]{0.49\textwidth}
    \includegraphics[width=\textwidth]{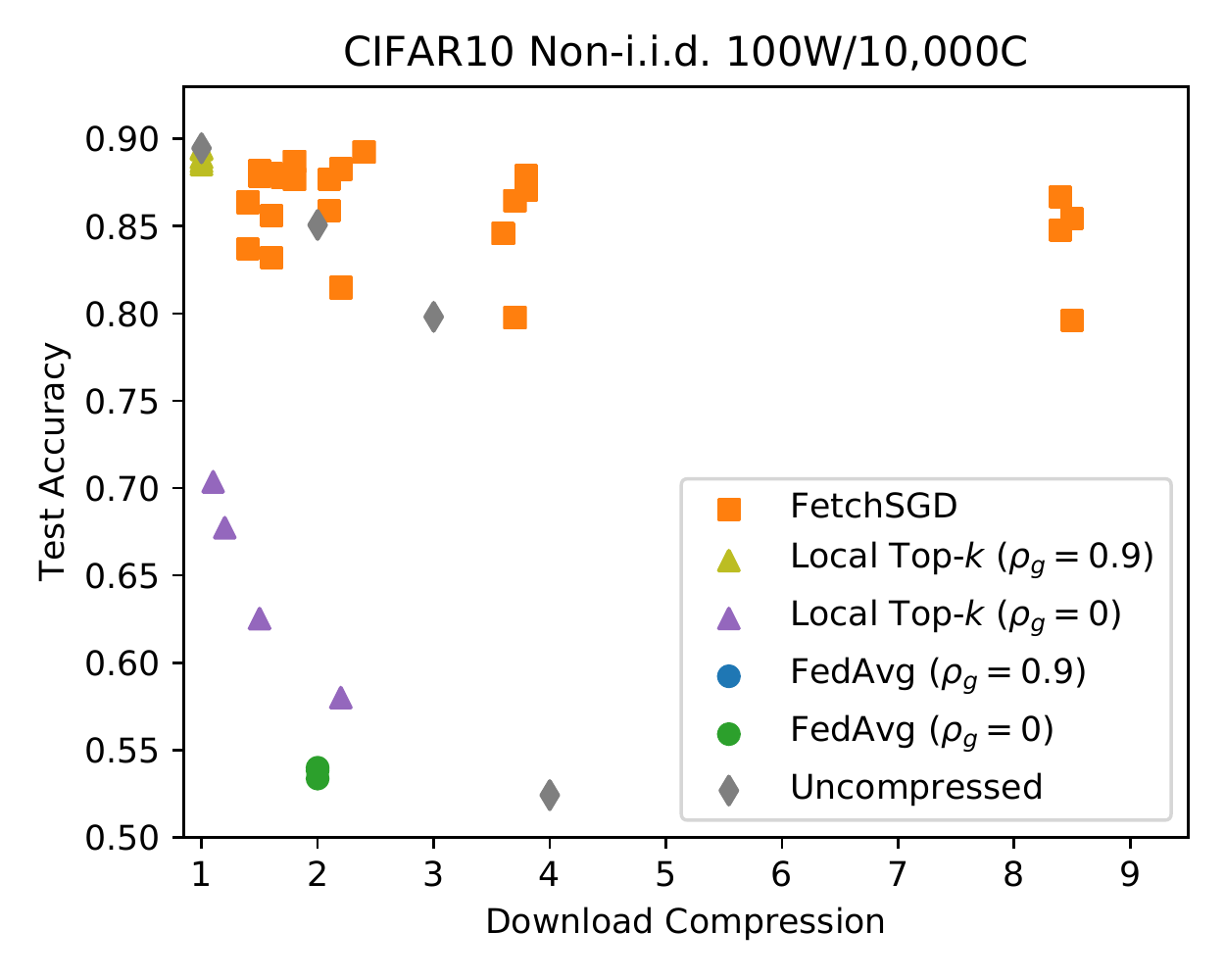}
    \caption{CIFAR10 Download Compression}
\end{subfigure}
\begin{subfigure}[b]{0.49\textwidth}
    \includegraphics[width=\textwidth]{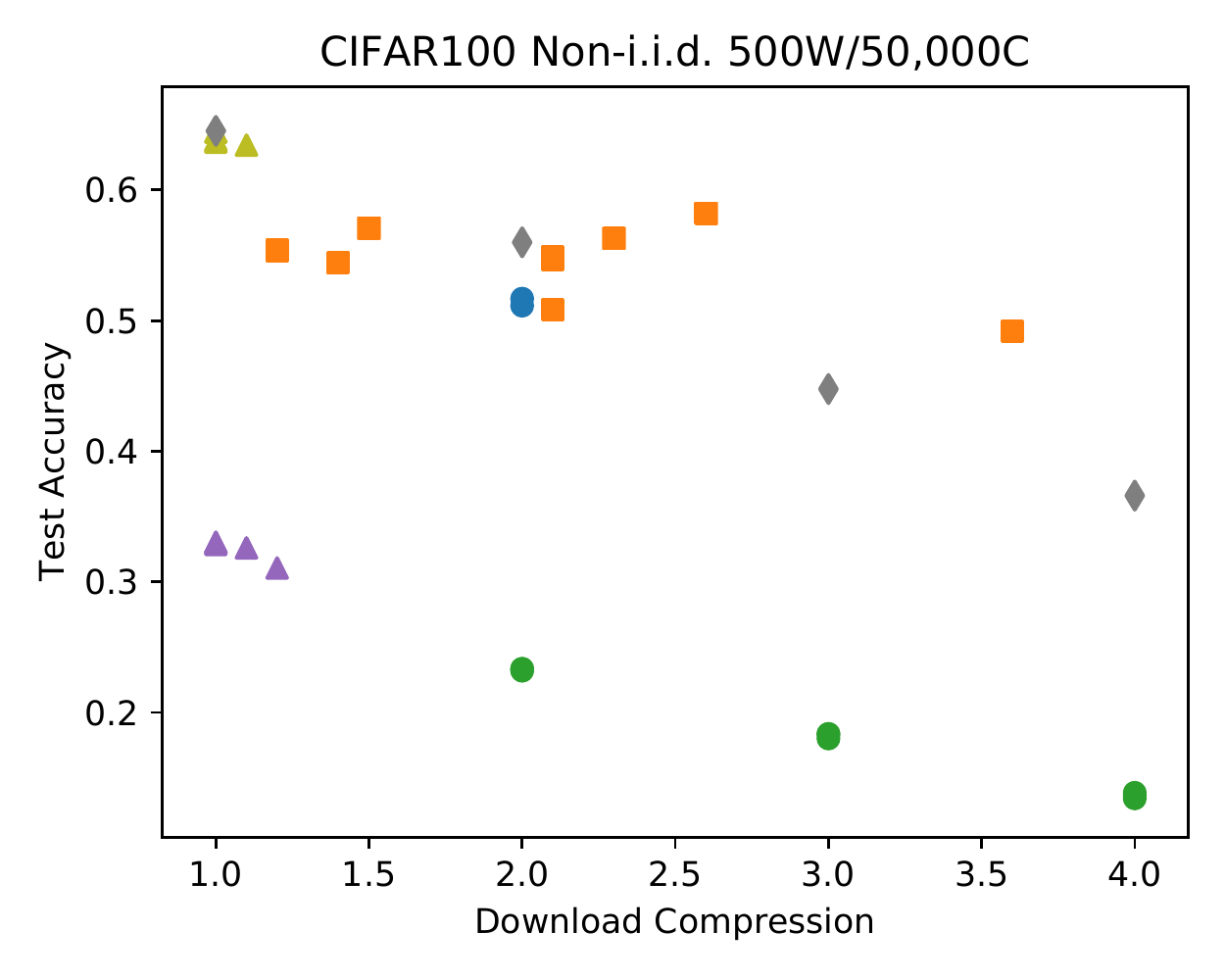}
    \caption{CIFAR100 Download Compression}
\end{subfigure}
\\
\begin{subfigure}[b]{0.49\textwidth}
    \includegraphics[width=\textwidth]{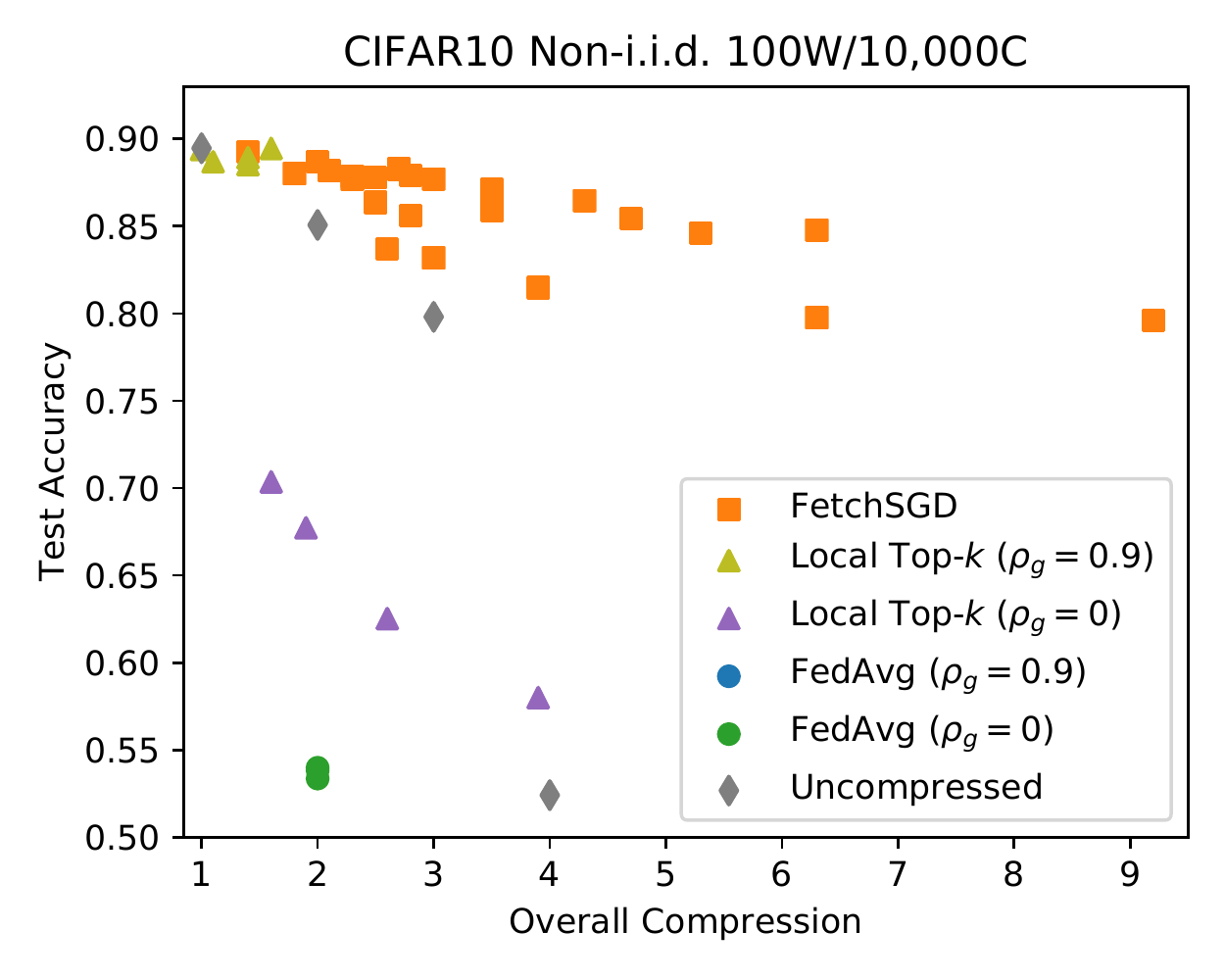}
    \caption{CIFAR10 Overall Compression}
\end{subfigure}
\begin{subfigure}[b]{0.49\textwidth}
    \includegraphics[width=\textwidth]{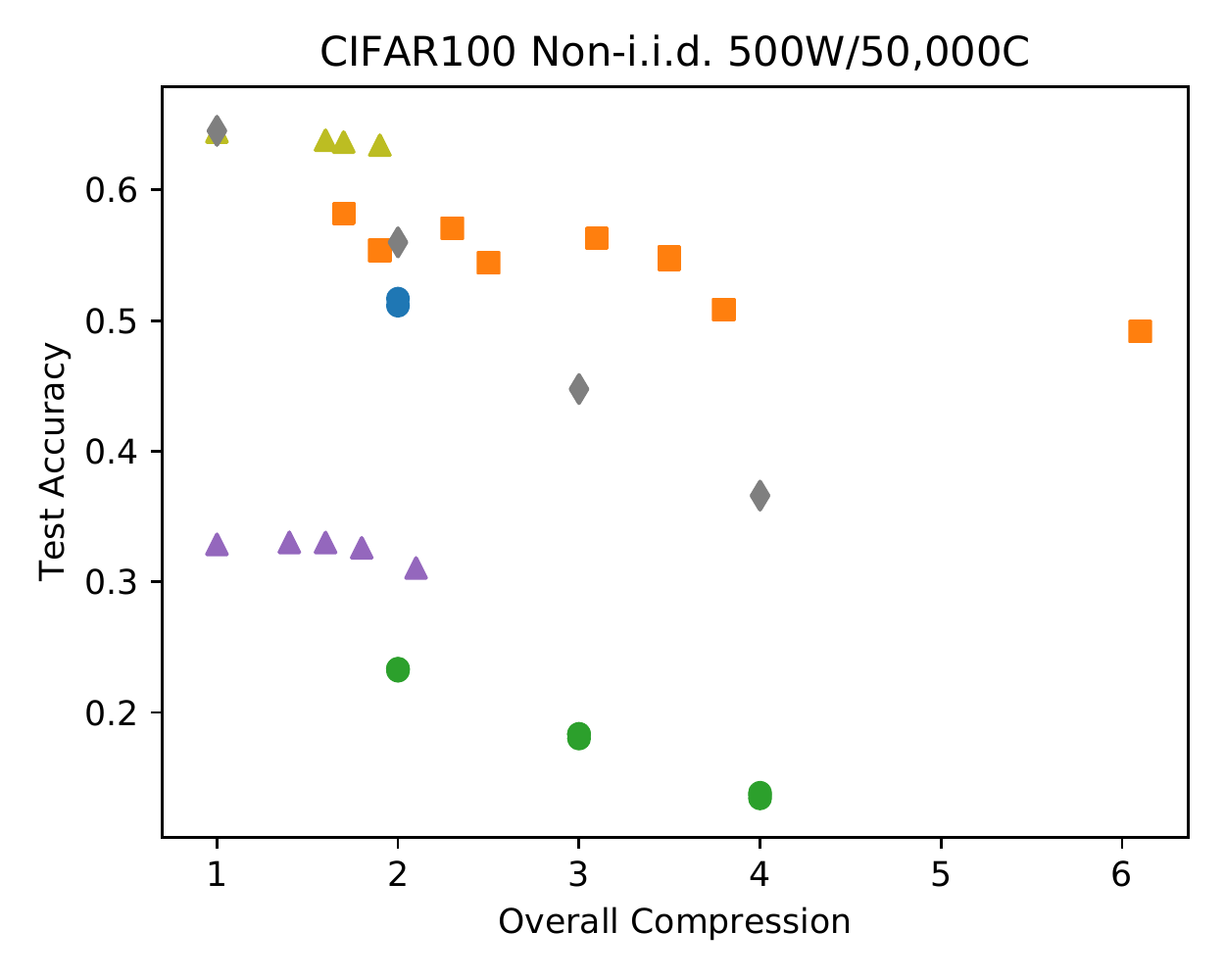}
    \caption{CIFAR100 Overall Compression}
\end{subfigure}
\caption{Upload (top), download (middle), and overall (bottom) compression for CIFAR10 (left) and CIFAR100 (right).}
\label{fig:cifar_all_complete}
\end{figure*}

\begin{figure*}[t]
\centering
\begin{subfigure}[b]{0.49\textwidth}
    \includegraphics[width=\textwidth]{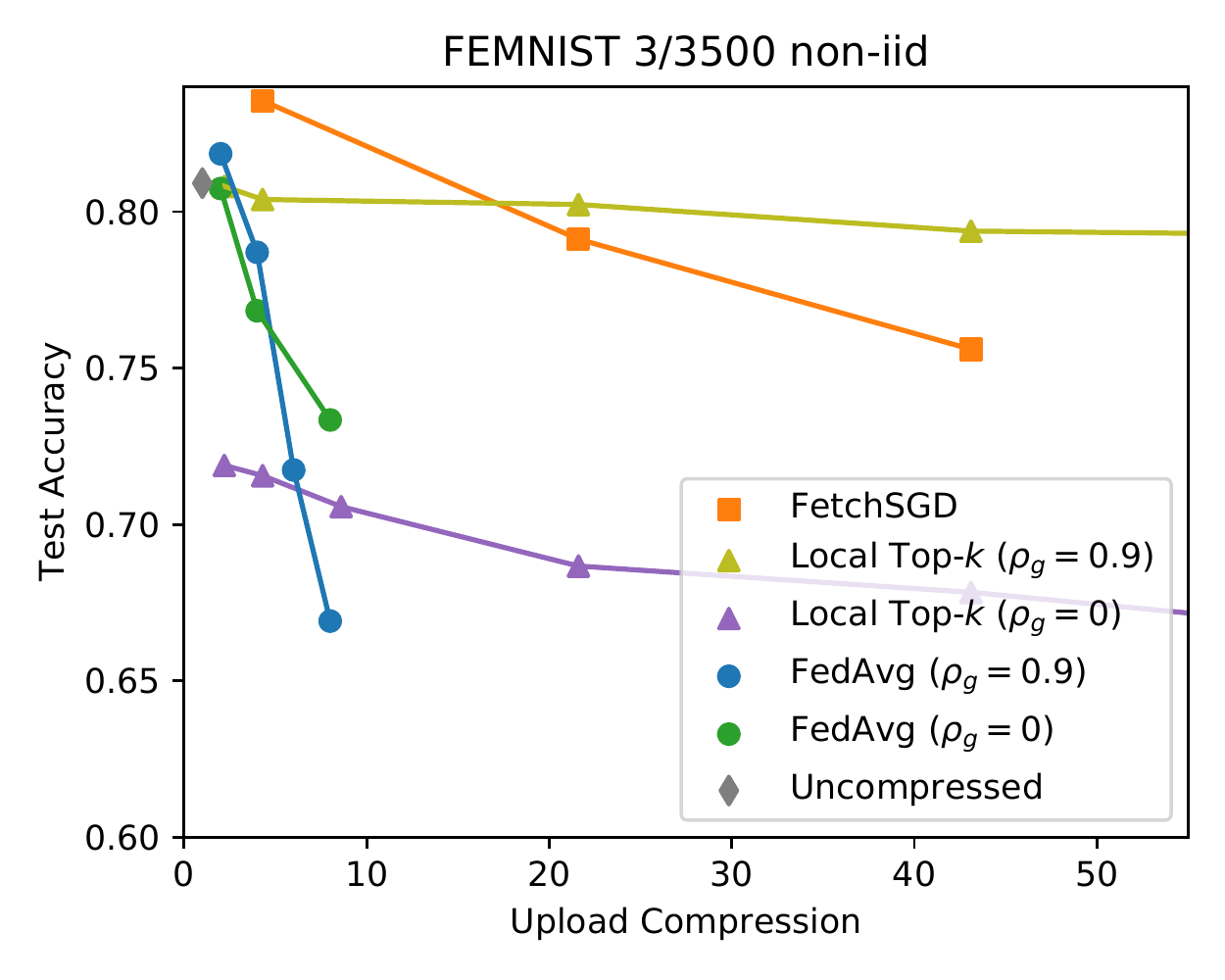}
    \caption{FEMNIST Upload Compression}
\end{subfigure}
\begin{subfigure}[b]{0.49\textwidth}
    \includegraphics[width=\textwidth]{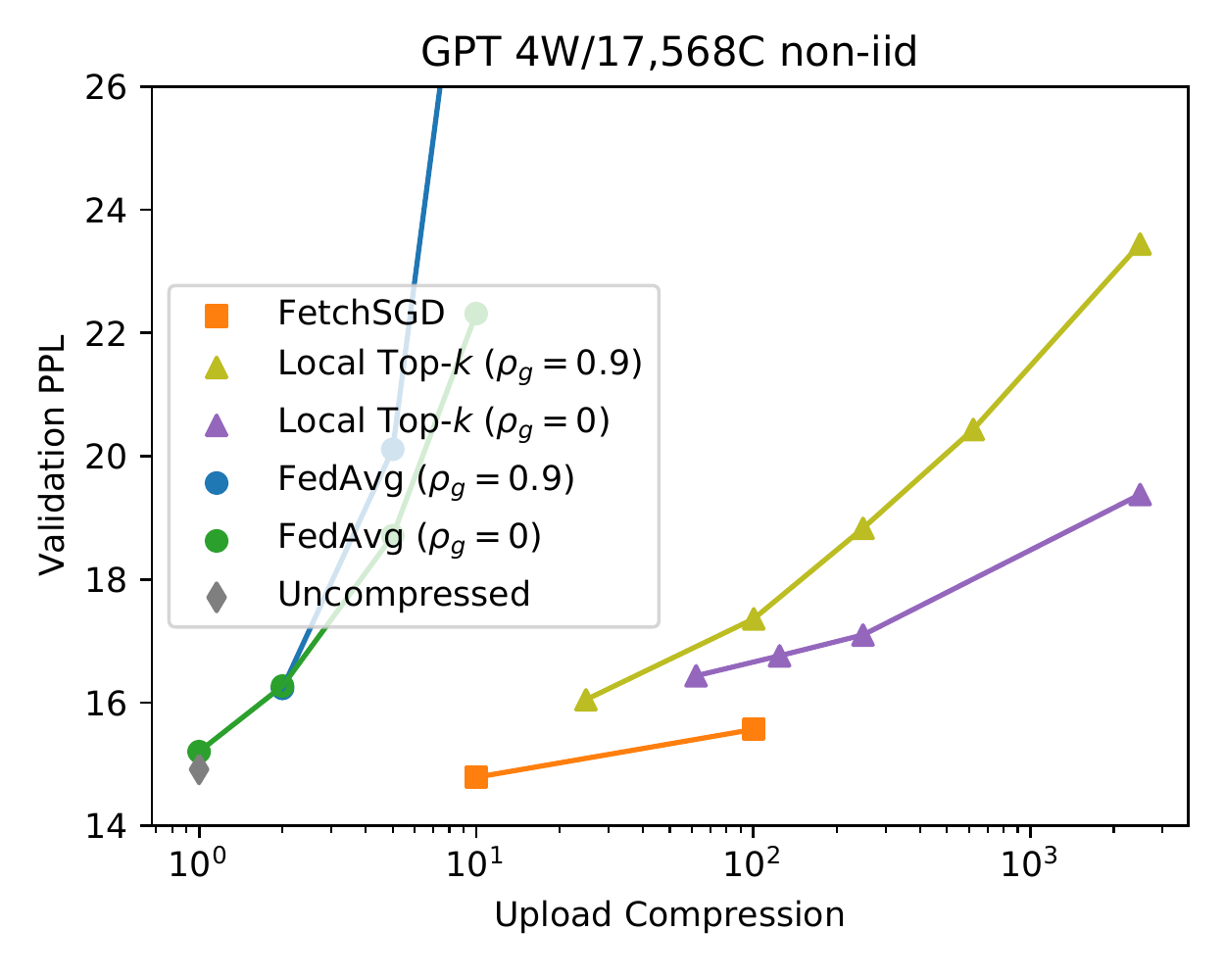}
    \caption{PersonaChat Upload Compression}
\end{subfigure}
\\
\begin{subfigure}[b]{0.49\textwidth}
    \includegraphics[width=\textwidth]{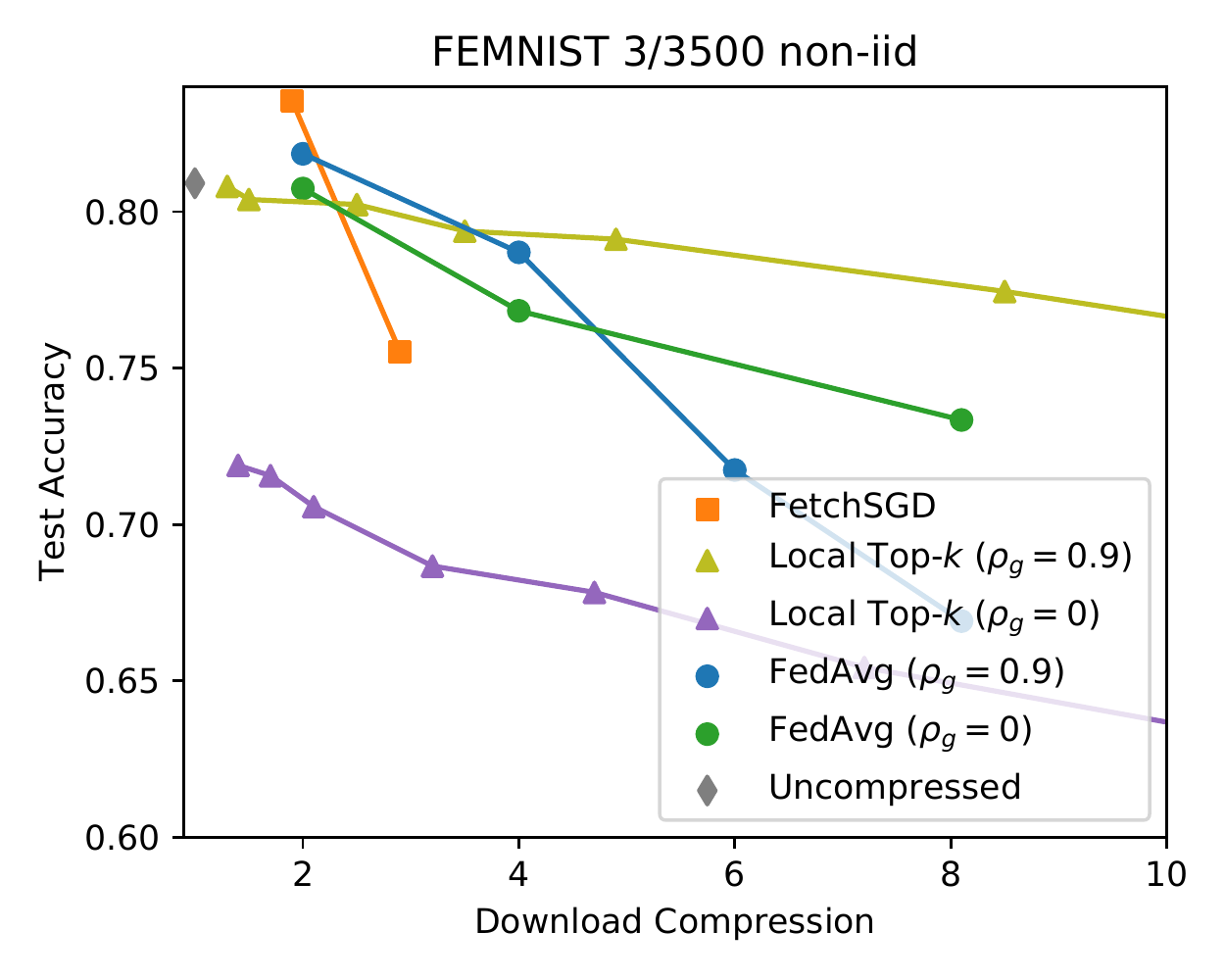}
    \caption{FEMNIST Download Compression}
\end{subfigure}
\begin{subfigure}[b]{0.49\textwidth}
    \includegraphics[width=\textwidth]{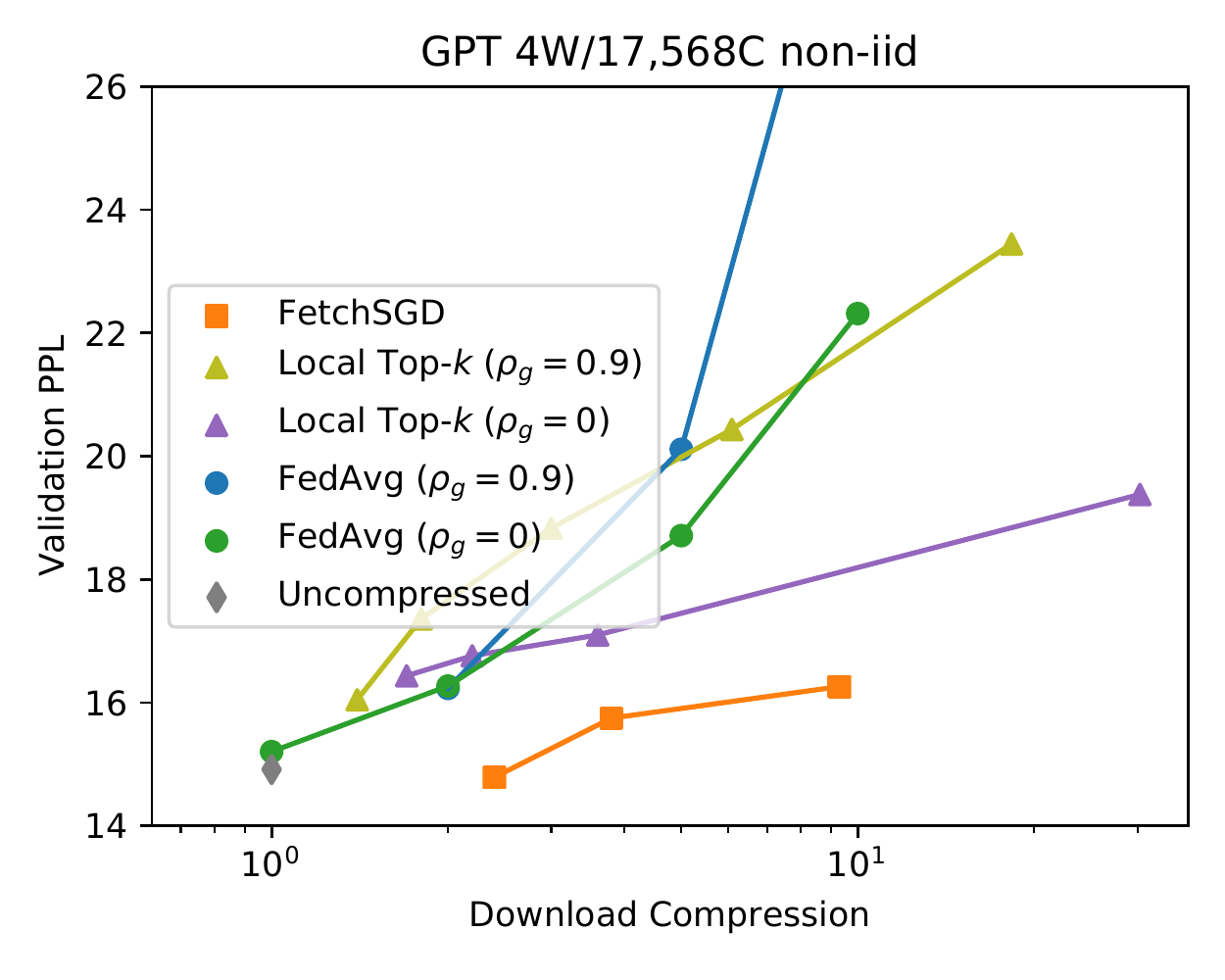}
    \caption{PersonaChat Download Compression}
\end{subfigure}
\\
\begin{subfigure}[b]{0.49\textwidth}
    \includegraphics[width=\textwidth]{plots/femnist_3w_3500c_noniid_updown.pdf}
    \caption{FEMNIST Overall Compression}
\end{subfigure}
\begin{subfigure}[b]{0.49\textwidth}
    \includegraphics[width=\textwidth]{plots/gpt2_scatter_updown.pdf}
    \caption{PersonaChat Overall Compression}
\end{subfigure}
\caption{Upload (top), download (middle), and overall (bottom) compression for FEMNIST (left) and PersonaChat (right). To increase readability, each plot shows only the Pareto frontier of runs for the compression type shown in that plot. All results are shown in Figure \ref{fig:femnist_gpt_all_complete}.}
\label{fig:femnist_gpt_complete}
\end{figure*}

\begin{figure*}[t]
\centering
\begin{subfigure}[b]{0.49\textwidth}
    \includegraphics[width=\textwidth]{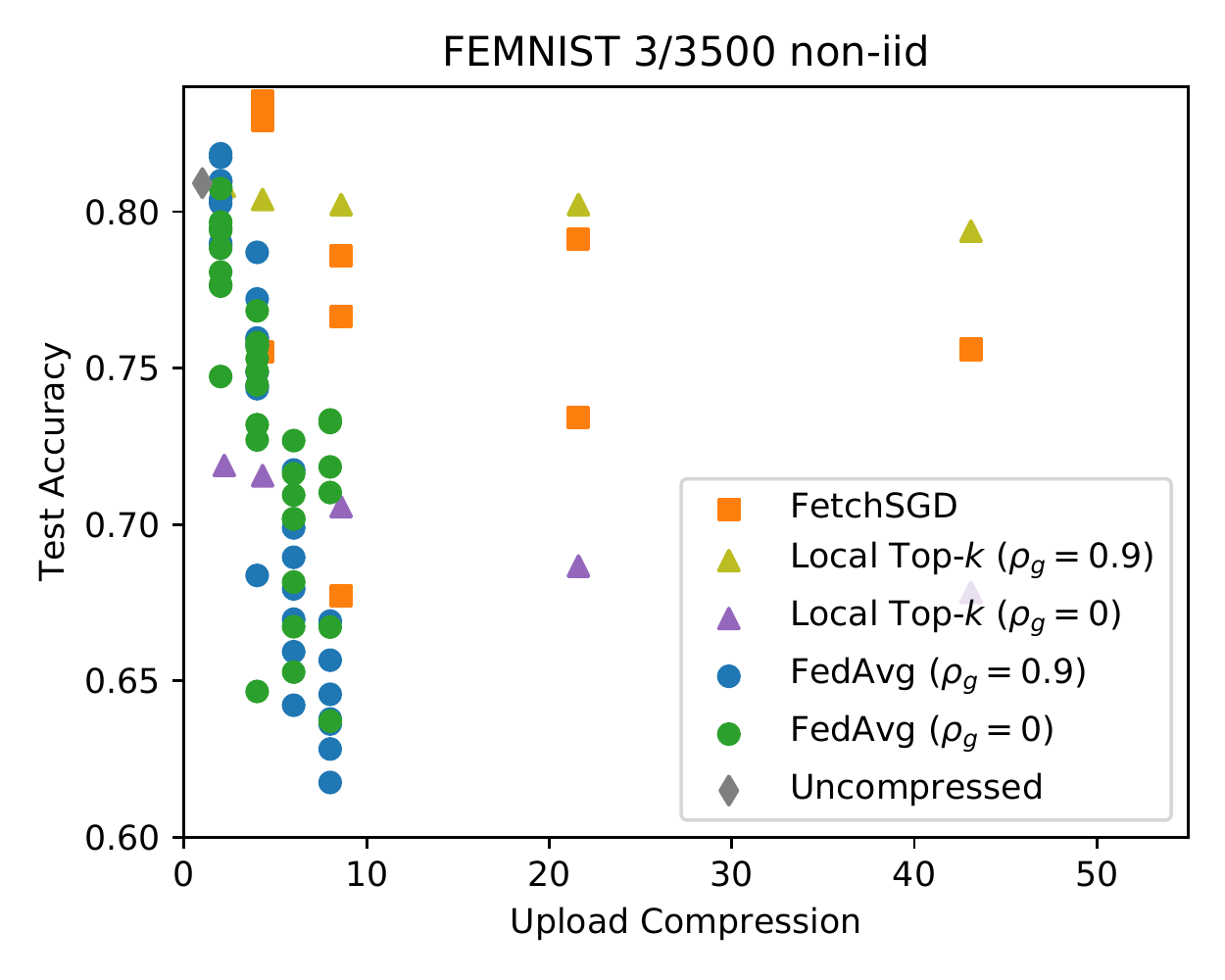}
    \caption{FEMNIST Upload Compression}
\end{subfigure}
\begin{subfigure}[b]{0.49\textwidth}
    \includegraphics[width=\textwidth]{plots/gpt2_scatter_upload.pdf}
    \caption{PersonaChat Upload Compression}
\end{subfigure}
\\
\begin{subfigure}[b]{0.49\textwidth}
    \includegraphics[width=\textwidth]{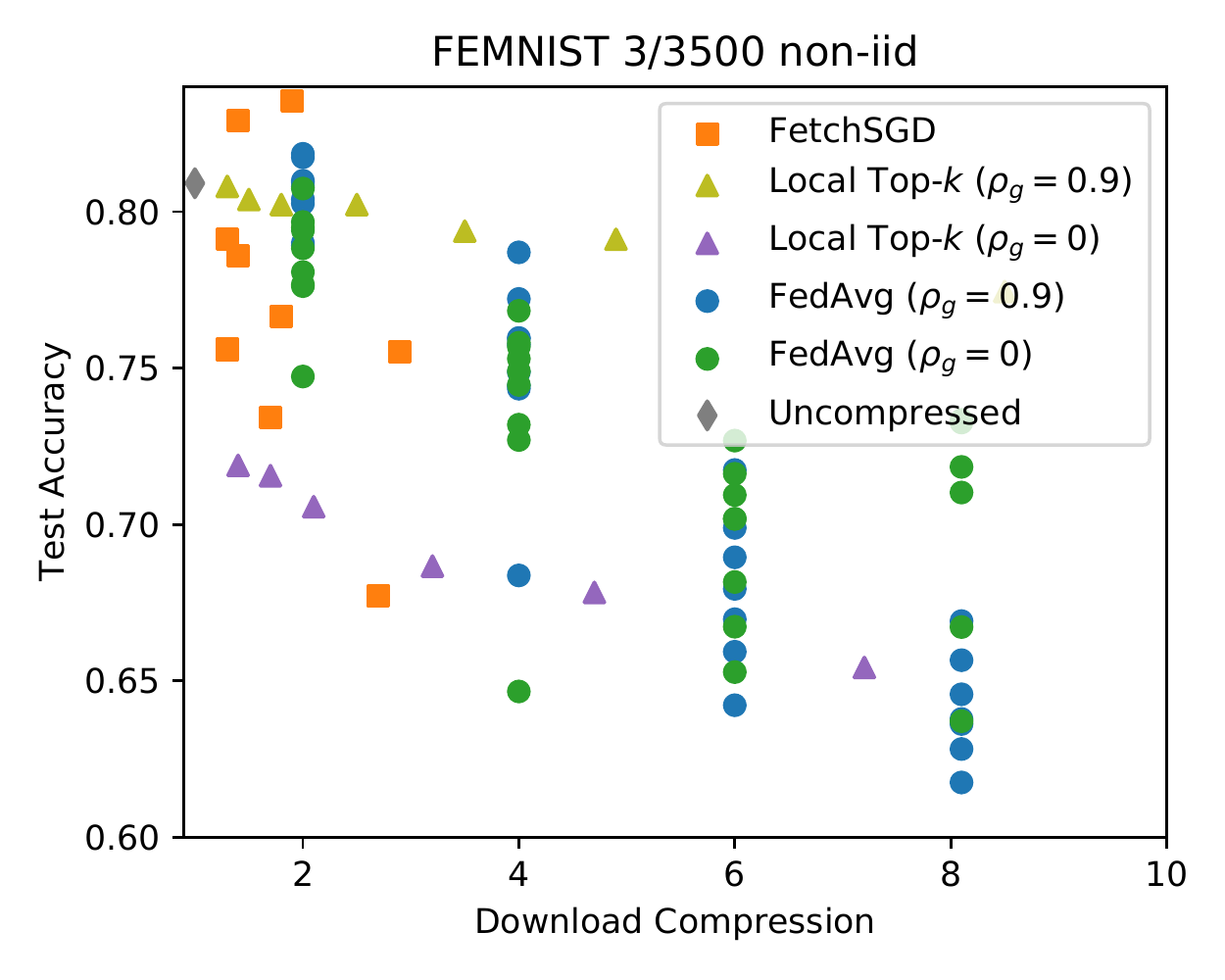}
    \caption{FEMNIST Download Compression}
\end{subfigure}
\begin{subfigure}[b]{0.49\textwidth}
    \includegraphics[width=\textwidth]{plots/gpt2_scatter_download.pdf}
    \caption{PersonaChat Download Compression}
\end{subfigure}
\\
\begin{subfigure}[b]{0.49\textwidth}
    \includegraphics[width=\textwidth]{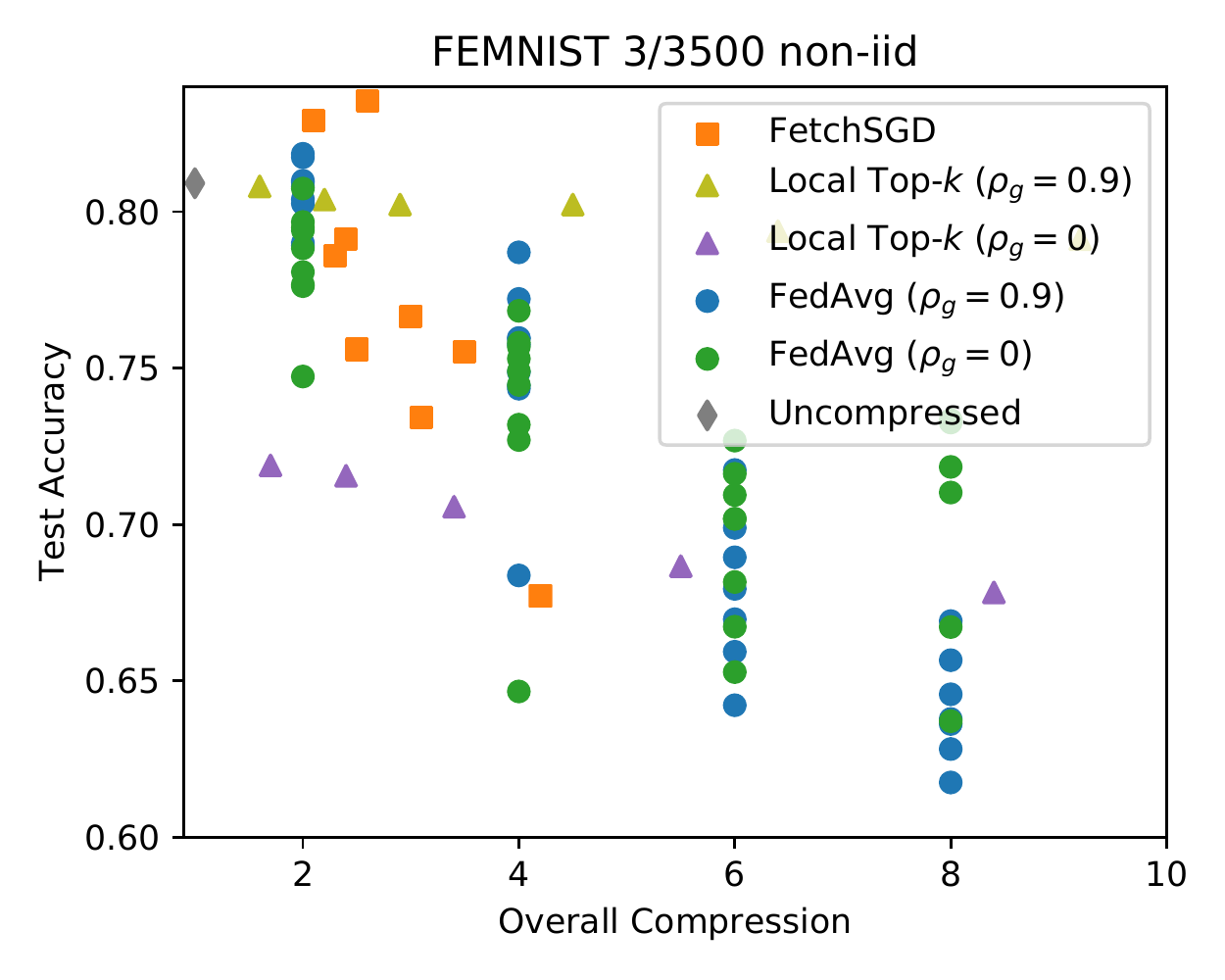}
    \caption{FEMNIST Overall Compression}
\end{subfigure}
\begin{subfigure}[b]{0.49\textwidth}
    \includegraphics[width=\textwidth]{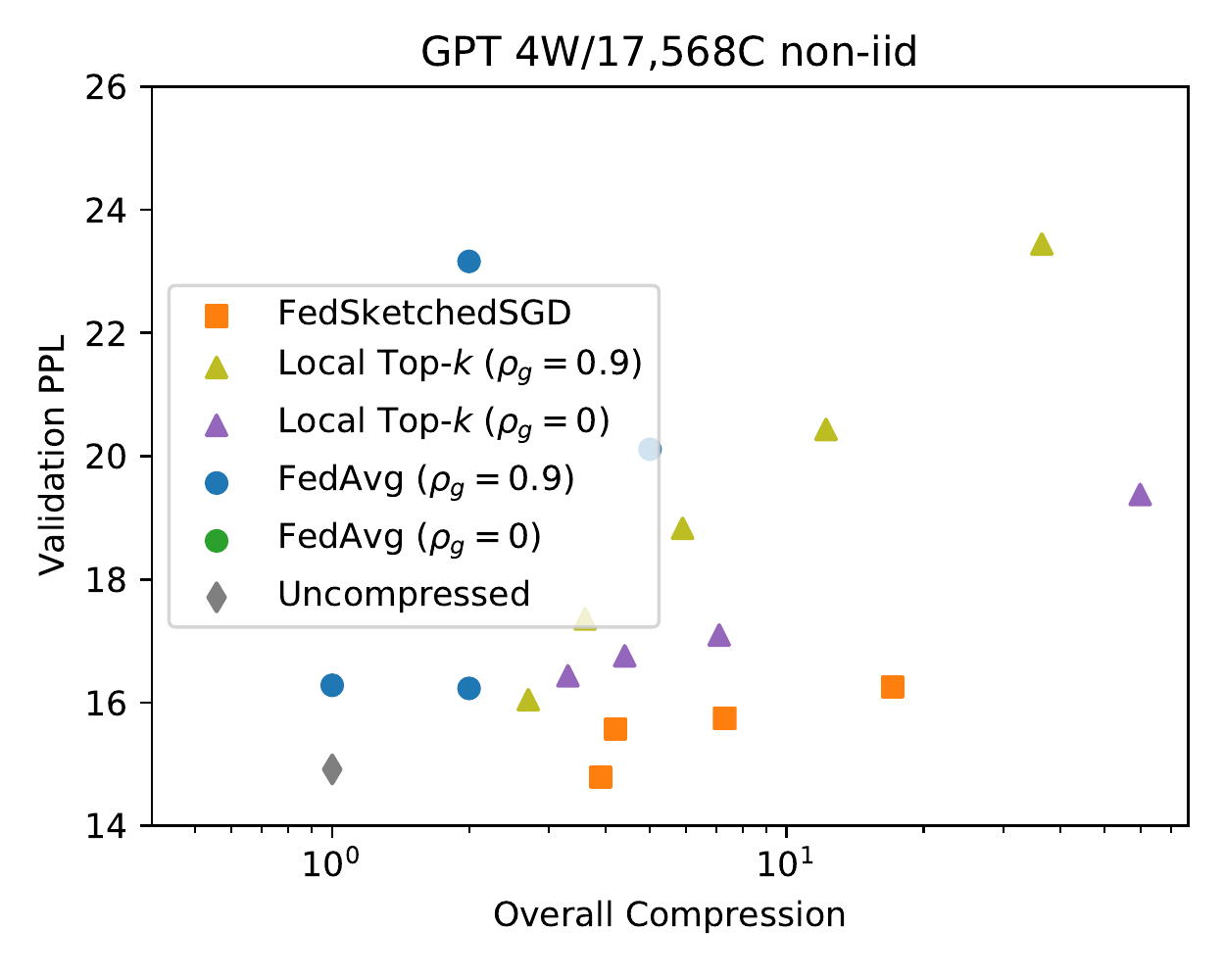}
    \caption{PersonaChat Overall Compression}
\end{subfigure}
\caption{Upload (top), download (middle), and overall (bottom) compression for FEMNIST (left) and PersonaChat (right).}
\label{fig:femnist_gpt_all_complete}
\end{figure*}

\subsection{FEMNIST}
\label{appendix:femnist}
The dataset consists of 805,263 $28\times28$ pixel grayscale images distributed unevenly over 3,550 classes/users, with an average of 226.83 datapoints per user and standard deviation of 88.94. 
We further preprocess the data using the preprocessing script provided by the LEAF repository, using the command: \texttt{./preprocess.sh~-s~niid~--sf~1.0~-k~0~-t~sample}.
This results in 706,057 training samples and 80,182 validation samples over 3,500 clients.
\footnote{Leaf repository: https://tinyurl.com/u2w3twe}

We train a 40M-parameter ResNet101 with layer norm instead of batch norm, using an average batch size of $\approx 600$ (but varying depending on which clients participate) with standard data augmentation via image transformations and a triangular learning rate schedule.
When we train for 1 epoch, the pivot epoch of the learning rate schedule is 0.2, and the peak learning rate is 0.01.
When we train for fewer epochs in \fedavg{}, we compress the learning rate schedule accordingly.

For \fedssgd{} we grid-search values for $k$ and the number of columns.
For \fedssgd{} we search over $k$ in [50, 100, 200]~$\times 10^3$.
and the number of sketch columns in [1, 2, 5, 10]~$\times 10^6$.
For local top-$k$ we search over $k$ in [10, 20, 50, 100, 200, 500, 1000, 2000, 5000, 10000, 20000]~$\times 10^3$.
We do not use local momentum for local top-$k$, since each client only participates once.
For \fedavg{}, we search over the number of global epochs in [0.125, 0.1667, 0.25, 0.5], the number of local epochs in [1,2,5], and the local batch size in [10,20,50].
Figure \ref{fig:femnist_gpt_complete} shows the Pareto frontier of results for each method, broken down into upload, download, and overall compression.
Figure \ref{fig:femnist_gpt_all_complete} shows all results that converged.

\begin{figure}[H]
    \centering
    \includegraphics[width=0.4\textwidth]{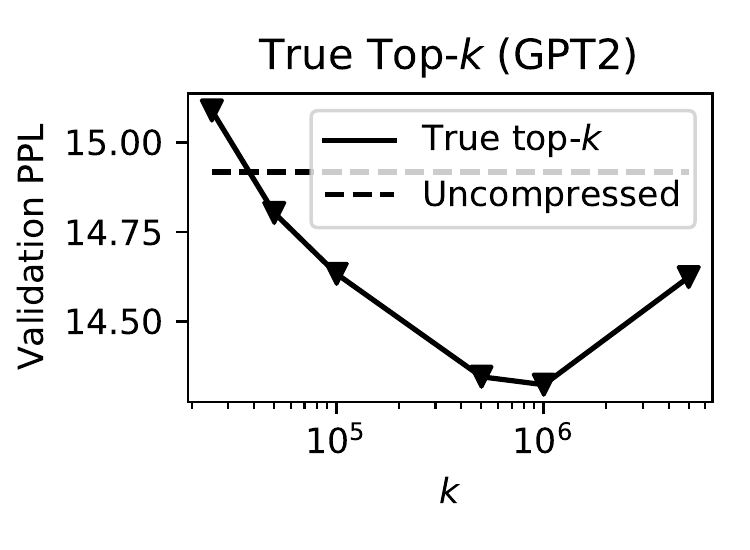}
    \caption{Validation perplexity on PersonaChat for a range of $k$ using true top-$k$. For $k \approx 10^{6}$, true top-$k$ provides some regularization, increasing performance over the uncompressed baseline. For larger $k$, the use of momentum factor masking degrades performance.}
    \label{fig:gpt2_truetopk}
\end{figure}

\subsection{PersonaChat}
\label{appendix:personachat}
The non-i.i.d. nature of PersonaChat comes from the fact that different Mechanical Turk workers were provided with different ``personalities,'' which are short snippets, written in English, containing a few salient characteristics of a fictional character.
We preprocess the dataset by creating additional tokens denoting the persona, context, and history, and feed these as input to a 124M-parameter GPT2 \cite{radford2019language} model created by HuggingFace \cite{wolf2019huggingface} based on the Generative Pretrained Transformer architecture proposed by OpenAI \cite{radford2019language}. 
We further augment the PersonaChat dataset by randomly shuffling the order of the personality sentences, doubling the size of the local datasets.

We use a linearly decaying learning rate of 0.16, with a total minibatch size of $\approx 64$ including the personality augmentation.
This can vary depending on which workers participate, as the local datasets are unbalanced.

\fedssgd{}, \fedavg{} and local top-$k$ each have unique hyperparameters which we need to search over.
For \fedssgd{} we try 6 points in a grid of values for $k$ and the number of columns.
For \fedssgd{}, we search over $k$ in [10, 25, 50, 100, 200]~$\times 10^3$, and over the number of sketch columns in [1240, 12400]~$\times 10^3$.
For local top-$k$, we search over $k$ in [50, 200, 1240, 5000]~$\times 10^3$.
For \fedavg{}, we search over the number of global epochs in [0.1, 0.2, 0.5] ($10\times$, $5\times$, and $2\times$ compression) and the number of local epochs in [2,5,10]. We always use the entire local dataset for each local iteration.

We report the perplexity, which is the average per word branching factor, a standard metric for language models.
Although we use the experimental setup and model from \citet{wolf2019huggingface}, our perplexities cannot be directly compared due to the modifications made to the choice of optimizer, learning rate, and dataset augmentation strategy.
Table \ref{tab:gpt2} shows perplexities, with standard deviations over three runs, for representative runs for each compression method.
Learning curves for these runs are shown in Figure \ref{fig:gpt2}.
Local top-$k$ consistently performs worse on this task when using global momentum (see Figure \ref{fig:gpt2}), so we only include results without momentum.
Local momentum is not possible, since each client participates only once.

Plots of perplexity vs. compression, broken down into upload, download, and overall compression, can be found in Figure \ref{fig:femnist_gpt_complete}.

We note that \fedssgd{} approximates an algorithm where clients send their full gradients, and the server sums those gradients but only updates the model with the $k$ highest-magnitude elements, saving the remaining elements in an error accumulation vector.
We explore this method, called true top-$k$, briefly in Figure \ref{fig:gpt2_truetopk}, which shows the method's performance as a function of $k$.
For intermediate values of $k$, true top-$k$ actually out-performs the uncompressed baseline, likely because it provides some regularization.
For large $k$, performance reduces because momentum factor masking inhibits momentum.

\begin{table}[H]
    \centering
    \begin{tabular}{r|rllll}
         Method              & $k$     &  PPL             & Download    & Upload      & Total \\
                             &         &                  & Compression & Compression & Compression \\
         \hline \\
         Uncompressed        &   --    & $14.9 \pm 0.02$  & $1\times$    & $1\times$    & $1\times$ \\
         Local Top-$k$       & 50,000  & $19.3 \pm 0.05$  & $30.3\times$ & $2490\times$ & $60\times$ \\
         Local Top-$k$       & 500,000 & $17.1 \pm 0.02$  & $3.6\times$  & $248\times$  & $7.1\times$ \\
         FedAvg (2 local iters) & --   & $16.3 \pm 0.2$   & $2\times$    & $2\times$    & $2\times$ \\
         FedAvg (5 local iters) & --   & $20.1 \pm 0.02$  & $5\times$    & $5\times$    & $5\times$ \\
         Sketch (1.24M cols) & 25,000  & \textbf{15.8} $\pm$ \textbf{0.007} & $3.8\times$  & $100\times$  & $7.3\times$ \\
         Sketch (12.4M cols) & 50,000  & \textbf{14.8} $\pm$ \textbf{0.002} & $2.4\times$  & $10\times$   & $3.9\times$ \\
    \end{tabular}
    \caption{Validation perplexities, with standard deviations measured over three different random seeds, for representative runs with \fedssgd{}, local top-$k$, and \fedavg{} on GPT2. Loss curves for these hyperparameter settings can be found in Figure \ref{fig:gpt2}.}
    \label{tab:gpt2}
\end{table}

\clearpage
\section{Theoretical properties}
\label{appendix:theory}

Theorems \ref{thm:convergence_assumption} and \ref{thm:main} both rely on the concept of $\ell_2^2$-heavy hitters; in Theorem \ref{thm:convergence_assumption}, heavy hitters appear in the quantity $\q^t=\eta(\rho \u^{t-1}+ \g^{t-1})+\e^{t-1}$, and in Theorem \ref{thm:main}, they appear in sums of consecutive gradients $\g^t$.
for $\g\in \mathbb{R}^d$, $\g_i$ is a $(\tau,\ell_2^2)$-heavy hitter  (or $\tau$-heavy) if $\g_i^2 \geq \tau \|\g\|^2$.
Given this definition, Assumption \ref{ass:contraction} can be rephrased as saying that at every timestep, $\q^t$ contains at least one $(\tau, \ell_2^2)$-heavy hitter.
And Assumption \ref{ass:sliding_heavy} can be rephrased to say that sums of consecutive $\g$ vectors contain $(\tau, \ell_2^2)$-heavy hitters, with all remaining values in the gradients drawn from mean-zero symmetric noise distributions.

With this definition in mind, the following two sections present proofs of Theorems \ref{thm:convergence_assumption} and \ref{thm:main}, respectively.

\subsection{Scenario 1: }
In Scenario 1, we assume that a contraction property holds during all of training (Assumption \ref{ass:contraction}).
To be consistent with our experimental evaluation, we show that FetchSGD converges (Theorem \ref{thm:convergence_assumption}) when using a vanilla Count Sketch for error accumulation, and when recovering the $k$ highest-magnitude elements from the error accumulation sketch instead of recovering only $\tau$-heavy hitters.

\begin{proof}[Proof of Theorem \ref{thm:convergence_assumption}]

We first verify that the stochastic gradients constructed are stochastic gradients with respect to the empirical mixture $\frac{1}{C}\sum_{j=1}^C\cP_i$, and we calculate its second moment bound. At a given iterate $\w$, we sample $B \subseteq [C], |B| = W$, a set of $W$ clients \emph{uniformly} from $C$ clients at every iteration, and compute $\g = \frac{1}{W}\sum_{i=1}^W \g_i$, where $\g_i$ are stochastic gradients with respect to the distribution $\cP_i$ on client $i$. This stochastic gradient is unbiased, as shown below.
$$\E{\g} = \hat{\mathbb{E}}\E{[\g | i]} = \frac{1}{W}\frac{1}{{C\choose \W}}{C-1\choose W-1} \sum_{i=1}^C \Eu{\cP_i}{\g_i} = \frac{1}{C}\sum_{i=1}^C \nabla f_i(\w).$$
The norm of the stochastic gradient is bounded: $$\E{\norm{\g}}^2 = \hat{\mathbb{E}}\E{\norm{\frac{1}{W}\sum_{i\in B,  |B|=W}\g_i \ \big\vert  \ B }^2} \leq \frac{1}{{C \choose W}}\frac{1}{W^2}W {C-1 \choose W-1} \sum_{i=1}^C \Eu{\cP_i}{\norm{\g_i}^2} 
\leq G^2$$

This proof follows the analysis of compressed SGD with error feedback in \citet{karimireddy2019error}, with additional momentum.  
Let $C(\x) = \text{Top-k}(\cU(\cS(\x)))$, the error accumulation then is $\cS(\e^{t+1}) = \cS(\eta(\rho \u^{t-1}+ \g^t)+\e^t)-\cS(C(\eta(\rho \u^{t-1}+ \g^t)+\e^t))$.
Consider the virtual sequence $\tilde \w^t = \w^t - \e^t - \frac{\eta \rho}{1-\rho}\u^{t-1}$. Upon expanding, we get 
\begin{align*}
\tilde \w^{t} &= \w^{t-1} - C(\eta(\rho\u^{t-2} + \g^{t-1})+\e^{t-1}) +  C(\eta(\rho\u^{t-2} + \g^{t-1})+\e^{t-1}) - \eta(\rho\u^{t-2} + \g^{t-1}) - \e^{t-1} - \frac{\eta \rho}{1-\rho}\u^{t-1}\\ 
& = \w^{t-1} - \e^{t-1} - \eta \g^{t-1} -  \eta \rho \u^{t-2} -  \frac{\eta \rho}{1-\rho}(\rho \u^{t-2}+\g^{t-1})\\
& = \w^{t-1} - \e^{t-1} - \eta\br{1+\frac{\rho}{1-\rho}} \g^{t-1} -  \eta \rho \br{1+\frac{\rho}{1-\rho}}\u^{t-2}\\
& = \w^{t-1}-\e^{t-1}-\frac{\eta \rho}{1-\rho}\u^{t-2} - \frac{\eta}{1-\rho}\g^{t-1} \\
& = \tilde \w^{t-1}-\frac{\eta}{1-\rho}\g^{t-1} 
\end{align*}
So this reduces to an SGD-like update but with a scaled learning rate. Applying $L$-smoothness of $f$, we get, 
\begin{align}
    \nonumber \E{f(\tilde \w^{t+1})} &\leq \mathbb{E}\left[f(\tilde \w^t) + \ip{\nabla f(\tilde \w^t)}{\tilde \w^{t+1} - \tilde \w^t} + \frac{L}{2}\norm{\tilde \w^{t+1} - \tilde \w^t}^2\right]
    \\&
      \nonumber\leq \E{f(\tilde \w^t)} - \frac{\eta}{(1-\rho)}\E{\ip{\nabla f(\tilde \w^t)}{\g^t}} + \frac{L\eta^2}{2(1-\rho)^2}\E{\norm{\g^t}^2}\\
      \nonumber &\leq \E{f(\tilde \w^t)} - \frac{\eta}{(1-\rho)}\E{\ip{\nabla f(\tilde \w^t)}{\nabla f(\w^t)}} + \frac{L\eta^2}{2(1-\rho)^2}\E{\norm{\g^t}^2} \\
    \nonumber    & \leq \E{f(\tilde \w^t)} - \frac{\eta}{(1-\rho)} \E{\norm{\nabla f(\w^t)}^2} + \frac{\eta}{2(1-\rho)}\br{\E{\norm{\nabla f(\w^t)}^2}+ \E{\norm{\nabla f(\tilde \w^t) - \nabla f(\w^t)}^2}} + \frac{L\eta^2G^2}{2(1-\rho)^2}\\
     \nonumber &\leq  \E{f(\tilde \w^t)} -  \frac{\eta}{2(1-\rho)}\E{\norm{\nabla f(\w^t)}^2}  + \frac{\eta L^2}{2(1-\rho)}\E{\norm{\tilde \w^t - \w^t}^2} + \frac{L\eta^2G^2}{2(1-\rho)^2}\\
      & =  \E{f(\tilde \w^t)} -  \frac{\eta}{2(1-\rho)}\E{\norm{\nabla f(\w^t)}^2}  + \frac{\eta L^2}{2(1-\rho)}\E{\norm{\e^t + \frac{\eta \rho}{1-\rho}\u^{t-1}}^2} + \frac{L\eta^2G^2}{2(1-\rho)^2}
      \label{eqn:plugin}
\end{align}

We now need to bound $\norm{\e^t+ \frac{\eta \rho}{1-\rho}\u^{t-1}}^2$.
However, we never compute or store $\e^t$ or $\u^t$, since the algorithm only maintains sketches of $\e^t$ and $\u^t$.
Instead, we will bound $\norm{\cS(\e^t)+ \frac{\eta \rho}{1-\rho}\cS(\u^{t-1})}^2$.
This is sufficient because $(1-\tau)\norm{\x} \leq \norm{\cS(\x)} \leq (1+\tau)\norm{\x}$, for a user-specified constant $\tau$ (which we will see later that it holds with high-probability due to the sketch size we use).
Note that  $\norm{S\left(\e^t+ \frac{\eta \rho}{1-\rho}\u^{t-1}\right)}^2 \leq 2\br{\norm{\cS(\e^t)}^2 + \br{\frac{\eta \rho}{1-\rho}}^2 \norm{\cS(\u^{t-1})}^2}$ because of linearity of sketching and the numerical inequality $(a+b)^2\leq 2(a^2+b^2)$.
We bound $\norm {\cS(\u^{t-1})}$ first:
\begin{align*} 
    \norm{\cS(\u^{t-1})}^2 =\norm{\sum_{i=1}^{t-1}\rho^i\cS(\g^i)}^2 \leq \br{\sum_{i=1}^{t-1} \rho^i\norm{\cS(\g^i)}}^2 \leq \br{\sum_{i=1}^{t-1} \rho^i (1+\tau)G}^2  \leq \br{\frac{(1+\tau)G}{1-\rho}}^2
\end{align*}
where the first inequality follows by application of triangle inequality for norms, and the second follows from $\norm{\cS(\x)} \leq (1+\tau)\norm{\x}$, and the bound on the gradients.
By definition of error accumulation, we have
\begin{align*}
    \norm{\cS(\e^t)}^2 &= \norm{\eta(\rho \cS(\u^{t-1})+ \cS(\g^{t-1}))+\cS(\e^{t-1})) - \cS(\text{Top-k}(\cU(\eta(\rho \cS(\u^{t-1})+ \cS(\g^{t-1}))+\cS(\e^{t-1}))))}^2 
\end{align*}
By Assumption \ref{ass:contraction}, $q^t = \eta(\rho \u^{t-1}+ \g^{t-1})+\e^{t-1}$ contains at least one $\tau$-heavy coordinate. All such coordinates will be successfully recovered by the unsketching procedure $\text{Top-k}(\cU(\cdot))$ with probability at least $1-\delta$ (depending on the size of Count Sketch, as discussed below), thus reducing the norm as follows: 
\begin{align*}    
    \norm{\cS(\e^t)}^2 &\leq (1-\tau)\norm{\eta(\rho \cS(\u^{t-1})+ \cS(\g^{t-1}))+\cS(\e^{t-1})}^2 
    \\&\leq (1-\tau)\br{(1+\gamma)\norm{\cS(\e^{t-1})}^2 + (1+1/\gamma)\eta^2\norm{\cS(\u^t)}^2} 
    \\&\leq (1-\tau)\br{(1+\gamma)\norm{\cS(\e^{t-1})}^2 + \frac{(1+1/\gamma)(1+\tau)^2\eta^2G^2}{(1-\rho)^2}} \\
    & \leq \sum_{i=0}^\infty \frac{(1+\tau)^2((1-\tau)(1+\gamma))^i(1+1/\gamma)\eta^2 G^2}{(1-\rho^2)} \\
    &\leq \frac{(1+\tau)^2(1-\tau)(1+1/\gamma)\eta^2 G^2}{1- ((1-\tau)(1+\gamma))}.
\end{align*}
where in the second inequality, we use the inequality $(a+b)^2 \leq (1+\gamma)a^2 + (1+1/\gamma)b^2$.
As argued in \citet{karimireddy2019error}, choosing $\gamma = \frac{\tau}{2(1-\tau)}$ suffices to upper bound the above with $\leq \frac{4(1+\tau)^2(1-\tau)\eta^2G^2}{\tau^2(1-\rho)^2}$.

Plugging everything into equation \ref{eqn:plugin}, we get that
\begin{align*}
     \E{\norm{\nabla f(\w^t)}^2} \leq \frac{2(1-\rho)}{\eta}\br{\E{f(\tilde \w^t)} -\E f(\tilde \w^{t+1})+ \frac{\eta L^2}{2(1-\rho)}\frac{4(1+\tau)^2\eta^2G^2}{(1-\tau)\tau^2(1-\rho)^2}+ \frac{L\eta^2G^2}{2(1-\rho)^2} }.
\end{align*}
Averaging over $T$ yields
\begin{align*}
 \min_{t=1\cdots T} \E{\norm{\nabla f(\w^t)}^2} \leq \frac{1}{T}\sum_{t=1}^T \E{\norm{\nabla f(\w^t)}^2} \leq \frac{2(1-\rho)(f(\w^0) -f^*)}{\eta T}+ \frac{4L^2(1+\tau)^2\eta^2G^2}{(1-\tau)\tau^2(1-\rho)^2}  + \frac{L\eta G^2}{(1-\rho)} .
\end{align*}

Setting $\eta = \frac{1-\rho}{2L\sqrt{T}}$ finishes the proof of convergence.

Now we will address the size of the sketch needed. As mentioned earlier, the sketch is required 1) to approximate the norm of $d$-dimensional vectors up to a multiplicative error of $(1\pm \tau)$, and 2) to recover all $\tau$-heavy coordinates. Following the Count Sketch memory complexity from~\citet{charikar2002finding}, we require memory of $\bigO{\frac{1}{\tau}\log{d/\delta}}$ to succeed with probability at least $1-\delta$. However, we reuse the same sketch over $T$ iterations, thus by a union bound we nee a sketch of size   $\bigO{\frac{1}{\tau}\log{dT/\delta}}$ to succeed with probability at least $1-\delta$. This completes the proof.

\end{proof}
Also, note that setting the momentum $\rho=0$ in the above, we recover a guarantee for \fedssgd{} with no momentum 
\begin{corollary}
Under the same assumptions as Theorem \ref{thm:convergence_assumption}, FetchSGD, with no momentum, in $T$ iterations,  outputs $\bc{\w^t}_{t=1}^T$ such that
\begin{align*}
    \min_{t=1 \cdots T}\E{\norm{\nabla f(\w^t)}^2} \leq \frac{4L(f(\w^0)-f^*) + G^2}{\sqrt{T}} + \frac{(1+\tau)^2G^2}{2(1-\tau)\tau^2 T}
\end{align*}
\end{corollary}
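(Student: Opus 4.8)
The plan is to derive this corollary as the $\rho = 0$ specialization of Theorem \ref{thm:convergence_assumption}; essentially every line of that proof goes through verbatim, and the only substantive task is to track how the constants shrink. The first observation I would make is that with $\rho = 0$ the momentum sketch $\bS_u$ and its uncompressed counterpart $\u^t$ drop out entirely, so the virtual sequence collapses from $\tilde\w^t = \w^t - \e^t - \frac{\eta\rho}{1-\rho}\u^{t-1}$ to simply $\tilde\w^t = \w^t - \e^t$, and the expansion that produced the scaled update $\tilde\w^{t} = \tilde\w^{t-1} - \frac{\eta}{1-\rho}\g^{t-1}$ becomes the plain SGD-like recursion $\tilde\w^{t} = \tilde\w^{t-1} - \eta\g^{t-1}$.

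Next I would re-run the $L$-smoothness descent step (equation \ref{eqn:plugin}) with $\rho=0$. Every factor of $\frac{1}{1-\rho}$ becomes $1$, and crucially the quantity that must be controlled is now $\norm{\tilde\w^t - \w^t}^2 = \norm{\e^t}^2$ by itself, rather than $\norm{\e^t + \frac{\eta\rho}{1-\rho}\u^{t-1}}^2$. This means the inequality $(a+b)^2 \le 2(a^2+b^2)$ that was used to separate error accumulation from momentum is no longer needed, which is exactly where a factor is shed from the constant. I would then redo the geometric-series bound on $\norm{\cS(\e^t)}^2$: invoking Assumption \ref{ass:contraction} (so that $q^t = \eta\g^{t-1} + \e^{t-1}$ carries a $\tau$-heavy coordinate recovered w.h.p.\ over the sketch randomness), the contraction $\norm{\cS(\e^t)}^2 \le (1-\tau)\norm{\eta\cS(\g^{t-1}) + \cS(\e^{t-1})}^2$, and $(a+b)^2 \le (1+\gamma)a^2 + (1+1/\gamma)b^2$ with $\gamma = \frac{\tau}{2(1-\tau)}$, the recursion telescopes to a bound of order $\frac{(1+\tau)^2\eta^2 G^2}{\tau^2}$ — now without the $\frac{1}{(1-\rho)^2}$ inflation that arose in the momentum case from bounding $\norm{\cS(\u^t)}$ by $\frac{(1+\tau)G}{1-\rho}$. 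Plugging this into the simplified descent step, summing over $t = 1,\dots,T$, telescoping the $\E{f(\tilde\w^t)} - \E{f(\tilde\w^{t+1})}$ terms, and setting $\eta = \frac{1}{2L\sqrt{T}}$ (the $\rho=0$ instance of $\eta = \frac{1-\rho}{2L\sqrt{T}}$) yields the claimed $\bigO{T^{-1/2}}$ leading term together with the smaller $\bigO{T^{-1}}$ remainder.

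The main obstacle here is not conceptual but purely one of careful constant bookkeeping: I would need to verify that dropping the momentum term genuinely removes the split-induced factor of $2$, and that the $(1-\rho)$ factors collapsing to $1$ combine to give the stated improvement in the coefficient of the $\frac{(1+\tau)^2 G^2}{(1-\tau)\tau^2 T}$ term relative to Theorem \ref{thm:convergence_assumption}, with all the high-probability (sketch-size) accounting inherited unchanged. No new idea beyond the analysis already carried out for Theorem \ref{thm:convergence_assumption} is required.
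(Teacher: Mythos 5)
Your proposal is correct and takes exactly the route the paper does: the paper's entire proof of this corollary is the single remark ``setting the momentum $\rho=0$ in the above, we recover a guarantee for \fedssgd{} with no momentum,'' i.e.\ specializing the proof of Theorem \ref{thm:convergence_assumption} with the virtual sequence collapsing to $\tilde\w^t = \w^t - \e^t$ and all $(1-\rho)$ factors becoming $1$. Your accounting of where the constant improves (dropping the $(a+b)^2\le 2(a^2+b^2)$ split and the $\norm{\cS(\u^{t-1})}$ term) is precisely the bookkeeping the paper leaves implicit, so nothing further is needed.
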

\subsection{Scenario 2}
In the previous section, we show convergence under Assumption \ref{ass:contraction}, which is relatively opaque and difficult to verify empirically.
In this section, we make the more interpretable Assumption \ref{ass:sliding_heavy}, which posits the existence of $\ell_2$-heavy hitters in the sequence of gradients encountered during optimization.
Under this assumption, \fedssgd{} is unlikely to converge when using a vanilla Count Sketch with error accumulation, by the following argument.
Under Assumption \ref{ass:sliding_heavy}, the useful signal in the sequence of gradients consists solely of $\ell_2$-heavy hitters spread over at most $I$ iterations.
As such, the norm of the signal at some iteration $t$ is bounded by $\bigO{I}$, whereas the norm of the error accumulation sketch overall (signal plus noise) is bounded by $\bigO{t}$, since the error accumulation includes the sum of gradient vectors up to time $t$.
Because noise hinders a Count Sketch's ability to recover heavy hitters, \fedssgd{} would have a difficult time converging once $t\gg I$.

To solve this problem, we show that \fedssgd{} converges when using a sliding window Count Sketch instead of a vanilla Count Sketch plus error accumulation.
Using a sliding window sketch solves the problem of noise growing as $\bigO{t}$ by recovering all of the signal present up until iteration $t-I$, and then discarding the remaining noise.
To see why this is the case, we consider a straightforward implementation of a sliding window Count Sketch that maintains $I$ individual Count Sketches $\{\S_e^i\}_{i=1}^I$, where the $i^{\text{th}}$ sketch was initialized at iteration $t-i$, as shown in Figure \ref{swpic2}.
On lines 12 and 14 of Algorithm \ref{alg:fedsketchedsgd}, we add a Count Sketch into $\S_e$ by simply adding the sketch to each of the $\S_e^i$.
On line 13, we recover heavy hitters ($\cU(\cdot)$) by unsketching each of the $\S_e^i$ and taking the union of the resulting heavy hitters.
And on line 16, we prepare the sliding window Count Sketch for the next iteration by setting $\S_e^{i+1}=\S_e^i$, and initializing $\S_e^0$ as an empty Count Sketch.

By constructing the sliding window data structure in this way, any sequence of up to $I$ gradients will appear in one of the $S_e^i$ at some iteration.
Therefore, a data structure of this sort will recover all $\ell_2$-heavy signal spread over up to $I$ iterations with probability $1-\delta$ when using individual sketches $\S_e^i$ of size $\bigO{\frac{1}{\tau}\log {\frac{dI}{\delta}}}$.
Because of this, when we discard $\S_e^i$ at the end of every iteration, we are only discarding noise, thereby preventing the noise from growing as $\bigO{t}$ without losing any useful signal.

We use the sliding window Count Sketch data structure described above to show convergence, but in Appendix \ref{appendix:sliding_window} we discuss more efficient implementations that require maintaining only $\log I$ instead of $I$ individual Count Sketch data structures, as depicted in \ref{swpic3}.
In addition, to simplify the presentation, instead of recovering the highest-magnitude $k$ elements from the sliding window error accumulation sketch, we recover only $\tau$-heavy hitters.

\begin{proof}[Proof of Theorem \ref{thm:main}] 
For clarity, we break the proof into two parts.
First, we address the particular case when $I=1$, and then we extend the proof to general $I$. 

\paragraph{Warm-up: $I=1$ (without error accumulation).}
When $I=1$, Assumption \ref{ass:sliding_heavy} guarantees that every gradient contains heavy hitters.
And the sliding window error accumulation sketch used by \fedssgd{} reduces to a simple Count Sketch for compression, with no error accumulation across iterations. In this case, the gradient update step is of the form
\begin{align*}
    &\w^{t+1} = \w^t - \cC(\eta \g^t).\\
\end{align*}
where $\cC(\cdot)$ is \text{Top-}$\tau(\cU(\cS(\cdot)))$. Consider the virtual sequence $\tilde \w^{t} = \w^t - \sum_{i=1}^{t-1}\br{\eta \g^i - \cC(\eta \g^i)}$. Upon expanding, we get
\begin{align*}
    \tilde \w^t = \w^{t-1} - \cC(\eta \g^{t-1}) - \sum_{i=1}^{t-1}\br{\eta \g^i - \cC(\eta \g^i)} = \w^{t-1} -  \sum_{i=1}^{t-2}\br{\eta \g^i - C(\eta \g^i)} -\eta \g^{t-1} = \tilde \w^{t-1} - \eta \g^{t-1}
\end{align*}

From $L$-smoothness of $f$, 
\begin{align}
    \nonumber \E{f(\tilde \w^{t+1})} 
    \nonumber&\leq \mathbb{E}f(\tilde \w^t) + \E{\ip{\nabla f(\tilde \w^t)}{\tilde \w^{t+1} - \tilde \w^t}} + \frac{L}{2}\E{\norm{\tilde \w^{t+1} - \tilde \w^t}^2}\\
     \nonumber&=\mathbb{E}f(\tilde \w^t) - \E{\eta\ip{\nabla f(\tilde \w^t)}{\g^t}} + \frac{L\eta^2}{2}\E{\norm{\g^t}^2}\\
   \nonumber  &\leq \mathbb{E}f(\tilde \w^t) - \mathbb{E}\eta\ip{\nabla f(\tilde \w^t)-\nabla f( \w^t)+\nabla f( \w^t)}{\nabla f( \w^t)} + \frac{\eta^2LG^2}{2}\\
     \nonumber  & = \mathbb{E}f(\tilde \w^t) - \eta \mathbb{E}\norm{\nabla f(\w^t)}^2 - \mathbb{E}\eta\ip{\nabla f(\tilde \w^t)-\nabla f( \w^t)}{\nabla f( \w^t)} + \frac{\eta^2LG^2}{2} \\
   \nonumber  & \leq \mathbb{E}f(\tilde \w^t) - \eta \mathbb{E}\norm{\nabla f(\w^t)}^2 + \frac{\eta}{2}\mathbb{E}\br{\norm{\nabla f(\tilde \w^t) - \nabla f(\w^t)}^2 + \norm{\nabla f(\w^t)}^2} + \frac{\eta^2LG^2}{2} \\
    \label{eqn:plugin2} & \leq \mathbb{E}f(\tilde \w^t) -  \frac{\eta}{2}\mathbb{E}\norm{\nabla f(\w^t)}^2  + \frac{\eta L^2}{2}\E{\norm{\tilde \w^t - \w^t}^2} + \frac{\eta^2LG^2}{2}
\end{align}
where in the third inequality, we used $\abs{\ip{\u}{\v}} \leq \frac{1}{2} \br{\norm{\u}^2 + \norm{\v}^2}$, and the last inequality follows from $L$-smoothness.

Now, to show convergence of $||\nabla f(\w^t)||$, we need to upper bound $\norm{\tilde \w^t - \w^t} = \norm{\sum_{i=1}^{t-1}(\cC(\eta\g^{i})- \eta \g^{i})}$.
To do so, we note that, conditioned on successful recovery of heavy hitters from the Count Sketch, $\cC(\eta \g^i)-\eta \g^i$ consists only of mean-zero symmetric noise:
every gradient $\g^i$ in a $(1, \tau)$-sliding heavy sequence of gradients consists solely of $\tau$-heavy hitters and mean-zero symmetric noise, by Definition \ref{defn:sliding_heavy}.
Therefore, when all $\tau$-heavy coordinates are identified, $\cC(\g^i)-\g^i=:\z^i$ consists only of $\g^i_N$ (from Definition \ref{defn:sliding_heavy}) and the Count Sketch heavy-hitter estimation error.
By Assumption \ref{ass:sliding_heavy}, $\g^i_N=:\z^i_{\text{noise}}$ is drawn from a mean-zero symmetric distribution with scale $\norm{\g^i}$.
And by the properties of the Count Sketch, the heavy-hitter estimation error $\z^i_{\text{estimation}}$ is as well.
Therefore, $\z^i =\Vert\g^i\Vert\xi^i$ for some $\xi^i$ drawn from mean-zero symmetric noise distributions, such that the 
$\xi^i$'s are mutually independent and independent of $\norm{\g^i}$.
As a result:

\begin{align*}
    \norm{\tilde \w^t - \w^t}  = \norm{\sum_{i=1}^{t-1}(\cC(\eta\g^{i})- \eta\g^{i})}=  \eta\norm{\sum_{i=1}^{t-1}\z^i_{\text{estimation}} +\z^i_{\text{noise}}} =\eta\norm{\sum_{i=1}^{t-1} \z^i} =\eta\norm{\sum_{i=1}^{t-1}\norm{\g^i} \xi^i}.
\end{align*}

Note that, since the $\g^i$'s are dependent because they are a sequence of SGD updates, the $\z^i$'s are also dependent. However since the $\xi^i$'s are independent with mean zero, $\E{\left[\norm{\g^i}\xi^i | \cF_i\right]} = 0$, where $\cF_i$ is the filtration of events before the $i^{\text{th}}$ iteration.
So the stochastic process $\bc{\z^i}_{i=1}^t$ forms a martingale difference sequence.
For any martingale difference sequence $\bc{\x^i}_{i=1}^T$, it holds that
\begin{align*}
\E{\norm{\sum_{i=1}^T \x^i -  \E{\x^i}}^2}  = \sum_{i=1}^T \E{\norm{\x^i - \E{\x^i}}}^2 + \sum_{i,j=1,i\neq j}^{T,T} \E{\ip{\x^i - \E{\x^i}}{\x^j - \E{\x^j}}}.
\end{align*}
For $i>j$, $\E{\ip{\x^i - \E{\x^i}}{\x^j - \E{\x^j}}} = \mathbb{E}^j\left[\E{\ip{\x^i - \E{\x^i}}{\x^j - \E{\x^j}}}|j\right] = 0$. Applying this, we get
\begin{align*}
    \E{\norm{\sum_{i=1}^{t-1}(\cC(\eta\g^i)- \eta \g^i}}^2 =  \eta^2 \E{\norm{\sum_{i=1}^{t-1}\norm{\g^i}\xi^i}}^2 = \eta^2 \sum_{i=1}^{t-1} \E{\norm{\norm{\g^i}\xi^i}^2}  = \eta^2\sum_{i=1}^{t-1}\E{\norm{\z^i}^2},
\end{align*}
where in the second equality, we apply the martingale difference result, noting that the random variables $\z^i = \norm{\g^i}\xi^i$ have a mean of zero.
We will now look how heavy coordinates and noise coordinates contribute to the norm of $\z^i$.
We can decompose $\norm{\z^i}^2 = \norm{\z^i_\text{estimation} + \z^i_{\text{noise}}}^2 \leq 2(\norm{\z^i_\text{estimation}}^2 +  \norm{\z^i_{\text{noise}}}^2)$.
Now we bound each component of $\z^i$: $\z^i_{\text{estimation}}$ and $\z^i_{\text{noise}}$.

From Lemma 2 in \citet{charikar2002finding}, for each bucket in the Count Sketch, the variance in estimation is at most the $\ell_2$ norm of the tail divided by the number of buckets $b$.
Since the tail has mass at most $(1-\tau)G^2$, for each coordinate $j$, we have $\E{(\z_\text{estimation}^i)_j^2} \leq \frac{(1-\tau)G^2}{b}$.
There are at most $\frac{1}{\tau^2}$ heavy coordinates present, and based on the sketch size in Theorem \ref{thm:main}, we can choose a sketch that has at least $b > \frac{1}{\tau^2}$ buckets, so $\E{\norm{\z^i_\text{estimation}}^2} \leq \frac{1} {\tau^2} \cdot\frac{(1-\tau)G^2}{b} \leq (1-\tau)G^2 $.
Also, from Assumption \ref{ass:sliding_heavy}, $\E{\norm{\z^i_\text{noise}}^2} \leq \beta G^2$.%

As in the proof of Theorem \ref{thm:convergence_assumption}, plugging this in equation \ref{eqn:plugin}, taking $\norm{\nabla f(\w^t)}$ to the left hand side, averaging and taking expectation with respect to all randomness, we get that

\begin{align*}
 \min_{t=1\cdots T} \E{\norm{\nabla f(\w^t)}^2} \leq \frac{\sum_{i=1}^T \E{\norm{\nabla f(\w^i)}^2}}{T} \leq \frac{(f(\w^0) -f^*)}{\eta T}+  \eta^2 2(1-\tau +\beta) G^2 L T +  L\eta G^2.
\end{align*}

Finally, choosing $\eta = \frac{1}{G\sqrt{L}T^{2/3}}$, we get that
\begin{align*}
 \min_{t=1\cdots T} \E{\norm{\nabla f(\w^t)}^2} \leq \frac{G\sqrt{L}\br{(f(\w^0) -f^*)}  + 2(1-\tau+\beta)}{T^{1/3}} +\frac{G \sqrt{L}}{T^{2/3}}.
\end{align*}

Note that the analysis above holds conditioned on the success of the Count Sketch data structure and on the event that the first statement in Definition~\ref{defn:sliding_heavy} holds at every iteration, which happens with probability 
$1 - 2T\delta $ by a union bound. This leads to the size of the sketch provided in Theorem \ref{thm:main}.

\paragraph{General case: any $I$ (with error accumulation)}
\label{appendix:proof}

\begin{figure*}[t]
\begin{subfigure}[b]{0.49\textwidth}
\centering
\includegraphics[width=170px]{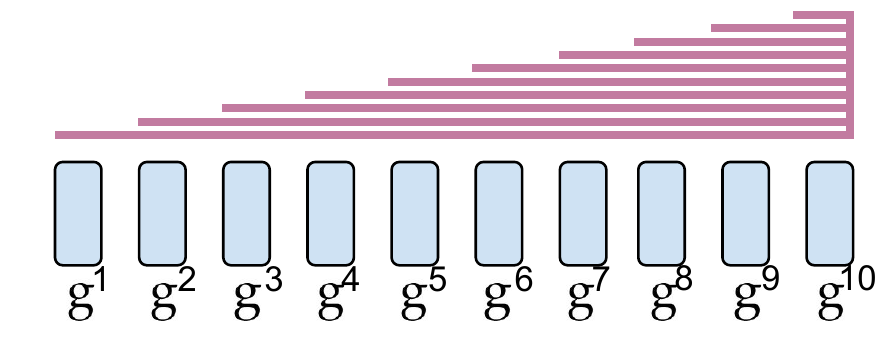}
\caption{Naive sliding window}
\label{swpic2}
\end{subfigure}
\begin{subfigure}[b]{0.49\textwidth}
\centering
\includegraphics[width=170px]{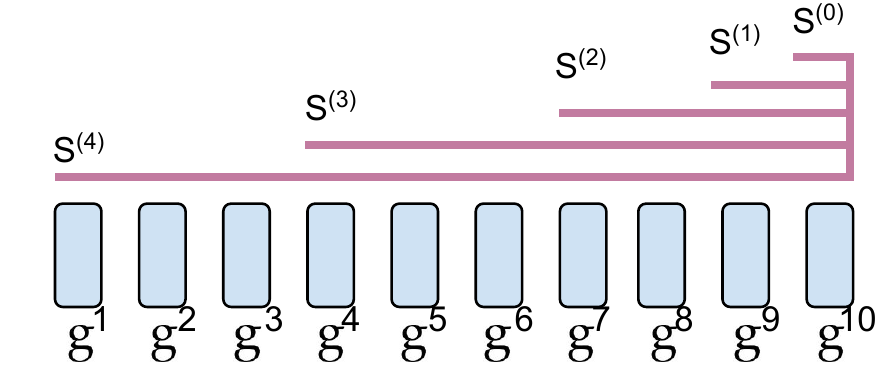}
\caption{Adaptive sliding window}
\label{swpic3}
\end{subfigure}
\caption{Approximating $\ell_2$ norm on sliding windows.}
\end{figure*}

When $I=1$, \fedssgd{} converges when each gradient is compressed with a vanilla Count Sketch, since doing so preserves all $\ell_2$-heavy signal in the sequence of gradients.
For $I>1$, this is no longer the case, since signal needed for convergence may not be $\ell_2$-heavy in any given iteration.
As described above, we capture signal spread over multiple iterations using a sliding-window Count Sketch data structure, which recovers all the $\ell_2$-heavy signal with high probability, even if it is spread over multiple iterations.
This is sufficient to show convergence because Assumption \ref{ass:sliding_heavy} states that all of the signal in the sequence of gradients is contained within gradient coordinates that are $\ell_2$-heavy over a sliding window.

Because of the similarity between a Count Sketch and a sliding window Count Sketch, the proof of Theorem \ref{thm:main} for general $I$ largely follows the proof for $I=1$.

Let $C^t=\cU(\S_e^t)$ be the result of unsketching the sliding window Count Sketch at iteration $t$, and consider a virtual sequence similar to the one in the warm-up case:
\begin{align*}
    \tilde \w^t &= \w^t - \left(\sum_{i=1}^{t-1} \eta \g^i - C^i\right)\\
    & = \w^{t-1} - C^{t-1} -\left(\sum_{i=1}^{t-1} \eta \g^i - C^i\right)\\ 
    & = \w^{t-1} - \sum_{i=1}^{t-2} \eta \g^i - C^i - \eta \g^{t-1} \\ 
    & = \tilde \w^{t-1} - \eta \g^{t-1}
\end{align*}
 
As before, we need to bound
$\norm{\tilde\w^t - \w^t} = \norm{\sum_{i=1}^{t-1} \eta \g^i - C^i}$.

Because the sliding window sketch recovers all $\ell_2$-heavy signal spread over at most $I$ iterations, the value of $\eta \g^i - C^i$ consists of only $z_{\text{estimation}}+z_{\text{noise}}$ when $i<t-I$.

For $i < t - I$, the sliding window Count Sketch data structure will recover all the signal, leaving only $z^i_\text{estimation}+z^i_\text{noise}$ remaining.
For $t-I \leq i \leq t$, some signal will already be recovered in $C^i$ (which we denote $\g^i_r$), while other signal remains to be recovered in future steps ($\g^i_n$).
Note that $\g^i=\g^i_r+\g^i_n$, and we let $z_{\text{noise}}$ be distributed arbitrarily between $\g^i_r$ and $\g^i_n$.

As shown in the warm-up case, we argue that

\begin{align*}
     \tilde\w^t - \w^t &= \sum_{i=1}^t (\eta \g^i - \C^i) =  \left(\sum_{i=1}^{t-I} \eta \g^i +  \sum_{i=t-I+1}^{t} \eta \g^i_r - \sum_{i=1}^t C^i\right) + 
     \sum_{i=t-I+1}^{t}\eta \g^i_n \\
     &=   \left(\sum_{i=1}^{t} z_\text{estimation error}^i + z_\text{noise}^i \right)  + \sum_{i=t-I+1}^{t}\eta \g^i_n = \sum_{i=1}^{t} \z^i + \sum_{i=t-I+1}^{t}\eta \g^i_n
\end{align*}

Since the gradients are bounded in norm, the norm of the sum of the past $I$ gradients, from which signal has yet to be recovered, can be bounded as $IG$.
The norm of $\g^i_n$ is less than the norm of $\g^i$, so the sum of $\g^i_n$ can be likewise bounded.

Then, by the triangle inequality we get 
$$\norm{\tilde\w^t - \w^t}^2 \leq 2\norm{\sum_{i=1}^{t-I} \z^i}^2  +2 \eta^2 I^2G^2$$
We now similarly argue that $\z^i$ forms a martingale difference sequence and therefore we have

\begin{align*}
    \E{\norm{\tilde \w^t - \w^t}}^2 \leq 2\E{\norm{\sum_{i=1}^{t-I} \z^i}} +2 \eta^2 I^2G^2 \leq   2(1-\tau+\beta)\eta^2G^2(t-I) +2 \eta^2 I^2G^2 \leq  2(1-\tau+\beta)\eta^2G^2t +2 \eta^2 I^2G^2
\end{align*}
Repeating the steps in the warm-up case: using $L$-smoothness of $f$, we get
\begin{align*}
    \E{f(\tilde \w^{t+1})} & \leq f(\tilde \w^t) -  \frac{\eta}{2}\norm{\nabla f(\w^t)}^2  + \frac{\eta L}{2}\E{\norm{\tilde \w^t - \w^t}^2} + \frac{\eta^2LG^2}{2} \\
    &\leq f(\tilde \w^t) -  \frac{\eta}{2}\norm{\nabla f(\w^t)}^2  + \frac{\eta L}{2}\br{2(1-\tau+\beta)\eta^2G^2t +2 \eta^2 I^2G^2} + \frac{\eta^2LG^2}{2}
\end{align*}

Taking $\norm{\nabla f(\w^t)}$ to the left hand side, averaging and taking expectation with respect to all randomness, and choosing $\eta = \frac{1}{G\sqrt{L}T^{2/3}}$  we get

\begin{align*}
 \min_{t=1\cdots T} \E{\norm{\nabla f(\w^t)}^2}& \leq 
 \frac{ G\sqrt{L}\br{f(\w^0) -f^*}  + 2(1-\tau+\beta)}{T^{1/3}} +\frac{G \sqrt{L}}{T^{2/3}} + \frac{2I^2}{T^{4/3}}\\
\end{align*}

The first part of the theorem is recovered by noting that $\beta\leq 1$. For the second part, note that the size of sketch needed to capture $\tau$-heavy hitters with probability at least $1-\delta$ is $\bigO{\frac{\log{d\delta}}{\tau^2}}$; taking a union bound over all $T$ iterations recovers the second claim in the theorem.

\end{proof}

\paragraph{Implementation.}
We now give details on how this data structure is constructed and what the operations correspond to.
For all heavy coordinates to be successfully recovered from all suffixes of the last $I$ gradient updates (i.e. $\forall I' < I$, to recover heavy coordinates of $\sum_{i=t-I'}^t \eta\g^i$) we can maintain $I$ sketches in the overlapping manner depicted in Figure~\ref{swpic2}.
That is, every sketch is cleared every $I$ iterations.
To find heavy coordinates, the FindHeavy() method must query every sketch and return the united set of heavy coordinates found; Insert() appends new gradients to all $I$ sketches; and Update() subtracts the input set of heavy coordinates from all $I$ sketches.
Although sketches are computationally efficient and use memory sub-linear in $d$ (a Count Sketch stores $\bigO{\log d}$ entries), linear dependency on $I$ in unfavorable, as it limits our choice of $I$.
Fortunately, the sliding window model, which is very close to the setting studied here, is thoroughly studied in the streaming community~\cite{braverman2007smooth,datar2002maintaining}.
These methods allow us to maintain a number of sketches only logarithmic in $I$.
For a high level overview we refer the reader to Appendix~\ref{appendix:sliding_window}.

\subsection{Are these assumptions necessary?} We have discussed that un-sketching a sketch gives an unbiased estimate of the gradient: $\E{\cU(\cS(\g))} = \g$, so the sketch can be viewed as a stochastic gradient estimate. Moreover, since Top-$k$, error feedback and momentum operate on these new stochastic gradients, existing analysis can show that our method converges. However, the variance of the estimate derived from unsketching is $\Theta(d)$, in the worst-case. By standard SGD analysis, this gives a convergence rate of $\bigO{d/\sqrt{T}}$, which is optimal since the model is a function of only these new $\bigO{d}$-variance stochastic gradients.  This establishes that even without any assumptions on the sequence of gradients encountered during optimization, our algorithm has convergence properties. However this dimensionality dependence does not reflect our observation that the algorithm performs competitively with uncompressed SGD in practice, motivating our assumptions and analysis.

\section{Count Sketch}
\label{appendix:countsketch}

Streaming algorithms have aided the handling of enormous data flows for more than two decades.
These algorithms operate on sequential data updates, and their memory consumption is sub-linear in the problem size (length of stream and universe size).
First formalized by \citet{alon1999space}, sketching (a term often used for streaming data structures) facilitates numerous applications, from handling networking traffic \cite{ivkin2019qpipe} to analyzing cosmology simulations \cite {liu2015streaming}.
In this section we provide a high-level overview of the streaming model, and we explain the intuition behind the Count Sketch \cite{charikar2002finding} data structure, which we use in our main result.
For more details on the field, we refer readers to \citet{muthukrishnan2005data}.

Consider a frequency vector $g\in \R^d$ initialized with zeros and updated coordinate by coordinate in the streaming fashion -- i.e. at time $t$ an update $(a_i, w_i)$ changes the frequency as $g_{a_i} += w_i$.
\citet{alon1999space} introduce the AMS sketch, which can approximate $\norm{g}$ with only constant memory.
Memory footprint is very important in the streaming setting, since $d$ is usually assumed to be large enough that $g$ cannot fit in the memory.
The AMS sketch consists of a running sum $S$ initialized with $0$, and a hash function $h$ that maps coordinates of $g$ into $\pm1$ in an i.i.d. manner.
Upon arrival of an update $(a_i, w_i)$, the AMS sketch performs a running sum update: $S$ += $h(a_i)w_i$. 
Note that at the end of the stream, $\E(S) = \sum_{i=1}^{n}{h(a_i)w_i}$ can be reorganized as per coordinate 
$\E(S) = \sum_{j=1}^{d}{\left(h(j)\sum_{\left\{ i:a_i=j \right\}}{w_i}\right)} = \sum_{j=1}^{d}{h(j)g_j}$, where $g_j$ is the value of $j$-th coordinate  at the end of the stream.
The AMS sketch returns $S^2$ as an estimation of $\|g\|^2$: $\E(S^2) = \E(\sum_{j=1}^{d}{h(j)^2g_j^2}) + \E(\sum_{j=1}^{d}{h(j)h(j')g_jg_{j'}})$. 
If $h$ is at least $2$-wise independent second, then both $\E h(j)h(j')$ and the second term are $0$. %
So $\E(S^2) = \E(\sum_{j=1}^{d}{g_j^2}) = \|g\|^2$, as desired.
Similarly, \citet{alon1999space} show how to bound the variance of the estimator (at the cost of $4$-wise hash independence). 
The AMS sketch maintains a group of basic sketches described above, so that the variance and failure probability can be controlled directly via the amount of memory allocated: an AMS sketch finds $\hat \ell_2 = \|g\| \pm \varepsilon \|g\|$ using $O(1/\varepsilon^2)$ memory.   

The Count Sketch data structure \cite{charikar2002finding} extends this technique to find heavy coordinates of the vector.
A coordinate $i$ is $(\tau, \ell_2)$-heavy (or an $(\tau, \ell_2)$-heavy hitter) if $g_i \ge \tau \|g\|$.
The intuition behind the Count Sketch is as follows:
the data structure maintains a hash table of size $c$, where every coordinate $j\in[d]$ is mapped to one of the bins, in which an AMS-style running sum is maintained.
By definition, the heavy coordinates encompass a large portion of the $\ell_2$ mass, so the $\ell_2$ norm of the bins where heavy coordinates are mapped to will be significantly larger then that of the rest of the bins.
Consequently, coordinates mapped to the bins with small $\ell_2$ norm are not heavy, and can be excluded from list of heavy candidates.
Repeating the procedure $O(\log d)$  times in parallel reveals the identities of heavy coordinates and estimates their values.
Formally, a Count Sketch finds all $(\tau, \ell_2)$-heavy coordinates and approximates their values with $\pm \varepsilon \|g\|$ additive error.
It requires $O(\frac{1}{\varepsilon^2\tau^2}\log d)$ memory.
Algorithm~\ref{code:cs} depicts the most important steps in a Count Sketch.
For more details on the proof and implementation, refer to \cite{charikar2002finding}. 
\begin{algorithm}[ht]
    \caption{Count Sketch \citep{charikar2002finding}}
    \label{code:cs}
    \begin{algorithmic}[1]
        \small
        \STATE \textbf{function} init($r$, $c$):
        \STATE ~~~~ init $r\times c$ table of counters $S$
        \STATE ~~~~ for each row $r$ init sign and bucket hashes:  $\left\{ (h^s_j, h^b_j)\right\}_{j=1}^r$
        
        \STATE \textbf{function} update($(a_i, w_i)$):
        \STATE ~~~~ \textbf{for} $j$ in $1\ldots r:\;\;\;S[j, h^b_j(i)]$ += $h^s_j(i)w_i$
    
        \STATE \textbf{function} estimate($i$):
        \STATE ~~~~ init length $r$ array estimates
        \STATE ~~~~ \textbf{for} $j$ in $1,\ldots, r$:
        \STATE ~~~~ ~~~~ estimates$[r] = h^s_j(i) S[j, h^b_j(i)]$
        \STATE ~~~~ \textbf{return} median(estimates)
        
    \end{algorithmic}
\end{algorithm}

For \fedssgd{}, an important feature of the Count Sketch data structure is that it is linear -- \textit{i.e.}, $\cS(\g_1)+ \cS(\g_2) = \cS(\g_1 + \g_2)$. This property is used when combining the sketches of gradients computed on every iteration, and to maintain error accumulation and momentum.  %
We emphasize that while there are more efficient algorithms for finding heavy hitters, they either provide weaker $\ell_1$ approximation guarantees \cite{muthukrishnan2005data} or support only non-negative entries of the vector \cite{misra1982finding, braverman2017bptree}.
The structure of the Count Sketch allows for high amounts of parallelization, and the operations of a Count Sketch can be easily accelerated using GPUs \cite{ivkin2018scalable}.  

\section{Sliding Windows}
\label{appendix:sliding_window}
As was mentioned in Appendix~\ref{appendix:countsketch}, the streaming model focuses on problems where data items arrive sequentially and their volume is too large to store on disk.
In this case, accessing previous updates is prohibited, unless they are stored in the sketch.
In many cases, the stream is assumed to be infinite and the ultimate goal is to approximate some function on the last $n$ updates and to \emph{``forget''} the older ones.
The sliding window model, introduced in \cite{datar2002maintaining}, addresses exactly this setting.
Recall the example from Appendix~\ref{appendix:countsketch}: given a stream of updates $(a^t, w^t)$ to a frequency vector $g$ (i.e. $g^t_{a^t} += w^t$), approximating the $\ell_2$ norm of $\g$ in the streaming model implies finding $\hat\ell_2 = \|g\| \pm \varepsilon \|g\|$
On the other hand, in the sliding window model one is interested only in the last $n$ updates, i.e. $\hat\ell_2 = \|g^t - g^{t-n}\| \pm \varepsilon \|g^t - g^{t-n}\|$. 

One naive solution is to maintain $n$ overlapping sketches, as in Fig.~\ref{swpic2}.
However, such a solution is infeasible for larger~$n$.
Currently there are $2$ major frameworks to \emph{adopt} streaming sketches to the sliding window model: exponential histograms, by~\citet{datar2002maintaining}, and smooth histograms, by~\citet{braverman2007smooth}.
For simplicity, we will provide only the high level intuition behind the latter one.
Maintaining all $n$ sketches as in Fig.~\ref{swpic2} is unnecessary if one can control the growth of the function: neighboring sketches differ only by one gradient update, and the majority of the sketches can be pruned.
\citet{braverman2007smooth} show that if a function is monotonic and satisfies a smoothness property, then the sketches can be efficiently pruned, leaving only $\bigO{\log n}$ sketches. As in Fig.~\ref{swpic3}, $\|S^{(i)}\| < (1+\varepsilon)\|S^{(i-1)}\|$, so any value in the intermediate suffixes (which were pruned earlier) can be approximated by the closest sketch $\|S^{(i)}\|$ as shown in~\citet{ivkin2019know}.
For more details on how to construct this data structure, and for a definition of the smoothness property, we refer readers to \citet{braverman2007smooth}. 

\end{document}